\newcommand{\BlackBox}{\rule{1.5ex}{1.5ex}}  % end of proof
\newenvironment{proof}{\par\noindent{\bf Proof\ }}{\hfill\BlackBox\\[2mm]}
\newtheorem{example}{Example} 
\newtheorem{theorem}{Theorem}
\newtheorem{lemma}[theorem]{Lemma}
\newtheorem{remark}[theorem]{Remark}
\newtheorem{definition}[theorem]{Definition}
\newlength{\minipagewidth}
\newcommand{\bookbox}[1]{\small
\par\medskip\noindent
\framebox[\columnwidth]{
\begin{minipage}{\minipagewidth} {#1} \end{minipage} } \par\medskip }
\renewcommand{\phi}{\varphi}
\renewcommand{\epsilon}{\varepsilon}
\newcommand{\eps}{\epsilon}
\newcommand{\K}{\mathrm{KL}}
\newcommand{\KL}{\mathrm{KL}}
\newcommand{\Conv}{\text{Conv}}
\newcommand{\oR}{\overline{R}}
\newcommand{\E}{\mathbb{E}}
\newcommand{\EXP}{\mathbb{E}}
\newcommand{\IND}{\mathbbm{1}}
\newcommand{\N}{\mathbb{N}}
\newcommand{\R}{\mathbb{R}}
\newcommand{\cA}{\mathcal{A}}
\newcommand{\cD}{\mathcal{D}}
\newcommand{\cL}{\mathcal{L}}
\newcommand{\cS}{\mathcal{S}}
\newcommand{\cB}{\mathcal{B}}
\renewcommand{\P}{\mathbb{P}}
\newcommand{\argmax}{\mathop{\mathrm{argmax}}}
\newcommand{\argmin}{\mathop{\mathrm{argmin}}}
\newcommand{\ul}{\underline}
\newcommand{\tl}{\tilde{\ell}}
\newcommand{\Tl}{\tilde{T}(\ell_t)}
\newcommand\lam{\lambda}
\newcommand\set[2]{\{#1,\dots,#2\}}
\newcommand{\Tr}{\mathrm{Tr}}
\newcommand\dsV{\mathbb{V}} 
\newcommand\Var{{\dsV\text{ar}}\,}
\newcommand\ba{\mathbf{a}}
\newcommand\ra{\rightarrow}
\newcommand\fracl[2]{{(#1)}/{#2}}
\newcommand\cleb{{\sc cleb}}
\begin{document}

\title{Minimax Policies for Combinatorial Prediction Games}

\author{Jean-Yves Audibert \\
Imagine, Univ. Paris Est, and Sierra, \\
CNRS/ENS/INRIA, Paris, France \\
\texttt{\small audibert@imagine.enpc.fr}
\\ \\
S{\'e}bastien Bubeck\\
Centre de Recerca Matem{\`a}tica \\
Barcelona, Spain \\
\texttt{\small sbubeck@crm.cat}
\\ \\
G{\'a}bor Lugosi\\
ICREA and Pompeu Fabra University \\
Barcelona, Spain \\
\texttt{\small lugosi@upf.es}}

\maketitle

\begin{abstract}
We address the online linear optimization problem when the
actions of the forecaster are represented by binary vectors.
Our goal is to understand the magnitude of the minimax regret
for the worst possible set of actions. We study the problem
under three different assumptions for the feedback: 
full information, and the partial information models of the
so-called ``semi-bandit'', and ``bandit'' problems. 
We consider both $L_\infty$-, and $L_2$-type of restrictions for
the losses assigned by the adversary.

We formulate a general strategy using Bregman projections 
on top of a potential-based gradient descent, which  
generalizes the ones studied in the series of 
papers \cite{GLLO07, DHK08, AHR08, CL09, HW09, KWK10, UNK10, KRS10} and \cite{AB10}. 
We provide simple
proofs that recover most of the previous results. 
We propose new upper bounds for the semi-bandit game. 
Moreover we derive lower bounds for all three feedback assumptions. 
With the only exception of the bandit game, the upper and lower bounds
are tight, up to a constant factor.
Finally, we answer a question asked by 
\cite{KWK10} by showing that the exponentially 
weighted average forecaster is suboptimal against $L_{\infty}$ adversaries.
\end{abstract}

\section{Introduction}

In the sequential decision making problems considered in this paper,
at each time instance $t=1,\ldots,n$, the forecaster chooses,
possibly in a randomized way, an action from a given set $\cS$
where $\cS$ is a subset of the $d$-dimensional hypercube $\{0,1\}^d$.
The action chosen by the forecaster at time $t$ is denoted by
 $V_t=(V_{1,t},\dots,V_{d,t})\in \cS$.
 Simultaneously to the forecaster, the adversary chooses a loss
vector $\ell_t=(\ell_{1,t},\dots,\ell_{d,t}) \in[0,+\infty)^d$ and the
loss incurred by the forecaster is $\ell_t^TV_t$.
The goal of the forecaster is to minimize the expected cumulative
loss $\EXP \sum_{t=1}^n \ell_t^TV_t$ where the expectation is taken with
respect to the forecaster's internal randomization. 
This problem is an instance of an ``online linear optimization'' 
problem\footnote{In online linear optimization problems, the action set
is often not restricted to be a subset of $\{0,1\}^d$ but can be 
an arbitrary subset of $\R^d$. However, in the most interesting cases,
actions are naturally represented by Boolean vectors and we
restrict our attention to this case.}, see, e.g., 
\cite{AK04, MB04, KV05, GLLO07, DHK08, AHR08, CL09, HW09, KWK10, UNK10} and \cite{KRS10}
%Seb: removed \cite{AB10} and added \cite{KV05}. 
%GL added refs to 
%Awerbuch and Kleinberg~\cite{AK04},
% McMahan and Blum~\cite{MB04},

We consider three variants of the problem, distinguished by 
the type of information that becomes available to the forecaster 
at each time instance, after taking an action.
(1) In the {\em full information game} the forecaster observes
the entire loss vector  $\ell_t$;
(2) in the {\em semi-bandit game} only those components $\ell_{i,t}$ of
  $\ell_t$ are observable for which $V_{i,t}=1$;
(3) in the {\em bandit game} only the total loss
  $\ell_t^TV_t$ becomes available to the forecaster.

We refer to these problems as {\em combinatorial prediction games}.
All three prediction games are sketched in
Figure \ref{fig:comband}. For all three games, we define the regret\footnote{For the full
  information game, one can directly upper bound the stronger notion
  of regret $\E \sum_{t=1}^n \ell_t^T V_t - \E \min_{v \in \cS}
  \sum_{t=1}^n \ell_t^T v $ which is always larger than
  $\oR_n$. However, for partial information games, this requires more
  work.} of the forecaster as
$$\oR_n = \E \sum_{t=1}^n \ell_t^T V_t - \min_{v \in \cS} \E \sum_{t=1}^n \ell_t^T v .$$ 
 
\begin{figure}[t]
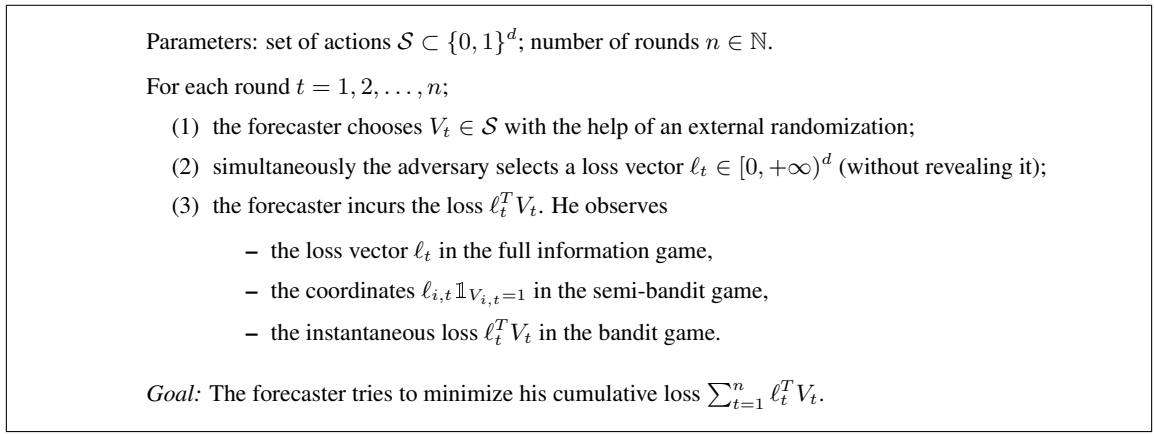

\bookbox{\small
{Parameters:} set of actions $\cS \subset \{0,1\}^d$; number of rounds $n \in \N$.

\medskip\noindent
For each round $t=1,2,\ldots,n$;
\begin{itemize}
\item[(1)]
the forecaster chooses $V_t \in \cS$ with the help of an external randomization;
\item[(2)]
\mbox{simultaneously the adversary selects a loss vector $\ell_t \in [0,+\infty)^d$ (without revealing it)};
\item[(3)]
the forecaster incurs the loss $\ell_t^T V_t $. He observes
\begin{itemize}
\item the loss vector $\ell_t$ in the full information game,
\item the coordinates $\ell_{i,t} \IND_{V_{i,t} = 1}$ in the semi-bandit game,
\item the instantaneous loss $\ell_t^T V_t $ in the bandit game.
\end{itemize}
\end{itemize}

\medskip\noindent
{\em Goal:} The forecaster tries to minimize his cumulative loss $\sum_{t=1}^n \ell_t^T V_t$.
}
\caption{Combinatorial prediction games.}
\label{fig:comband}
\end{figure}

In order to make meaningful statements about the regret, one needs
to restrict the possible loss vectors the adversary may assign. 
We work with 
two different natural assumptions that have been considered in the 
literature:

\noindent
{\bf $L_{\infty}$ assumption:} here we assume that $\|\ell_t\|_{\infty} \leq 1$ for all $t=1,\ldots,n$

\noindent
{\bf $L_2$ assumption:} assume that $\ell_t^T v \leq 1$ for all $t=1,\ldots,n$ and $v\in\cS$.

Note that, without loss of generality, we may assume that for all 
$i\in\{1,\ldots,d\}$,
there exists $v\in\cS$ with $v_i=1$, and then 
the $L_2$ assumption implies the $L_{\infty}$ assumption.
 
The goal of this paper is to study the {\em minimax regret}, 
that is, the performance of the forecaster that minimizes the regret
for the worst possible sequence of loss assignments.
This, of course, depends on the set $\cS$ of actions. 
Our aim is to determine the order of magnitude of the
minimax regret for the most difficult set to learn.  
More precisely, for a given game, if we write
sup for the supremum over all allowed adversaries (that is, either
$L_{\infty}$ or $L_2$ adversaries) and inf for the infimum over all
forecaster strategies for this game, we are interested in the 
maximal minimax regret
$$R_n = \max_{\cS \subset \{0,1\}^d} \inf \sup \oR_n~.$$

Note that in this paper we do not restrict our attention to
computationally efficient algorithms.  The following example
illustrates the different games 
%Seb: and the two types of adversaries 
that we introduced above.

\begin{example} 
Consider the well studied example of {\em path planning} in which, at
every time instance, the forecaster chooses a path from one fixed
vertex to another in a graph.  At each time, a loss is assigned to
every edge of the graph and, depending on the model of the feedback,
the forecaster observes either the losses of all edges, the losses of
each edge on the chosen path, or only the total loss of the chosen
path. The goal is to minimize the total loss for any sequence of
loss assignments. This problem can be cast as a combinatorial prediction game
in dimension $d$ for $d$ the number of edges in the graph.
%GL deleted the following paragraph, put it back if you feel strongly about it 
%% We discuss now the meaning of $L_2$ and $L_{\infty}$
%% adversaries in this particular example. If all edges play the same
%% role, that is if we do not expect some edges to have significantly
%% higher delays than others, then an uniform bound on the delays is a
%% natural assumption. Without loss of generality we can assume that the
%% bound is $1$, that is $\|\ell_t\|_{\infty} \leq 1$. On the other hand,
%% if there is a subset of 'critical edges' with potentially much higher
%% delays, then an $L_2$ assumption might be more appropriate.  As we
%% shall see below, the assumption on the adversary is critical, since it
%% leads to different regret bounds.
\end{example}

\begin{table}[t]
\begin{tabular}{c|c|c|c|c|c|c}
   & \multicolumn{3}{|c|}{$L_{\infty}$} & \multicolumn{3}{|c}{$L_2$} \\
  \hline
  & { Full Info} & { Semi-Bandit} & {Bandit} & { Full Info} & { Semi-Bandit} & {Bandit} \\
  \hline
  {Lower Bound} & $d \sqrt{n}$ & $d \sqrt{n}$ & $\mathbf{d^{3/2} \sqrt{n}}$ & $\sqrt{d n}$ & $\sqrt{d n}$ & $d \sqrt{n}$\\
  \hline
  {Upper Bound} & $d \sqrt{n}$ & $\mathbf{d \sqrt{n}}$ & $d^{5/2} \sqrt{n}$ & $\sqrt{d n}$ & $\mathbf{\sqrt{d n \log d}}$ & $d^{3/2} \sqrt{n}$
    %\vspace{-0.2cm}
  \end{tabular}
\caption{Bounds on $R_n$ proved in this paper (up to constant factor). 
The new results are set in bold.}
\label{table:1}
\end{table}

\begin{table}[t]
\begin{tabular}{c|c|c|c|c|c|c}
   & \multicolumn{3}{|c|}{$L_{\infty}$} & \multicolumn{3}{|c}{$L_2$} \\
  \hline
  & { Full Info} & { Semi-Bandit} & {Bandit} & { Full Info} & { Semi-Bandit} & {Bandit} \\
  \hline
  {{\sc exp2}} & $d^{3/2} \sqrt{n}$ & $d^{3/2} \sqrt{n}$ & $d^{5/2} \sqrt{n}$ & $\sqrt{d n}$ & $\mathbf{d \sqrt{n}}$ * & $d^{3/2} \sqrt{n}$\\
  \hline
    {{\sc linexp}} & $d \sqrt{n}$ & $\mathbf{d \sqrt{n}}$ & $\mathbf{d^2 n^{2/3}}$ & $\mathbf{\sqrt{d n}}$ & $\mathbf{d \sqrt{n}}$ * & $\mathbf{d^2 n^{2/3}}$ \\
 \hline
{{\sc linpoly}} & $\mathbf{d \sqrt{n}}$ & $\mathbf{d \sqrt{n}}$ & - & $\mathbf{\sqrt{d n}}$ & $\mathbf{\sqrt{d n \log d}}$ & - %\vspace{-0.2cm} \\
  \end{tabular}
\caption{Upper bounds on $\oR_n$ for specific forecasters. The new results are in bold. 
We also show that the bound for {\sc exp2} in the full information game is unimprovable. Note that the bound for (Bandit, {\sc linexp}) is very weak. 
The bounds with * become $\sqrt{dn\log d}$ if we restrict our attention to sets $\cS$ that are ``almost symmetric'' in the sense that for some $k$,
$\cS\subset \big\{ v\in\{0,1\}^d : \sum_{i=1}^d v_i \le k \big\}$ and 
$\Conv(\cS)\cap \big[\frac{k}{2d};1\big]^d\neq\emptyset$.
%The $\sqrt{dn\log d}$ is also achievable when $\cS$ is the union of such sets. Indeed,
%by using PolyInf with $\psi(x)=\frac{n}2 (-x)^{-2}$, from Theorem \ref{th:polyinf}, 
%we can combine the algorithms for the different $k$ and only pay 
%an additional $2\sqrt{2nd}$ term in the regret bound. 
}
\label{table:2}
\end{table}

Our contribution is threefold. First, we propose a variant of the
algorithm used to track the 
%GL reference missing
best linear predictor \citep{HeWa98} that
is well-suited to our combinatorial prediction games.  This leads to
an algorithm called \cleb\ that generalizes various approaches that have been
proposed.  This new point of view on algorithms that were defined for
specific games (only the full information game, or only the standard
multi-armed bandit game) allows us to generalize them easily to all
combinatorial prediction games, leading to new algorithms such as
{\sc linpoly}. This algorithmic contribution leads to our second main
result, the improvement of the known upper bounds for
the semi-bandit game. This point of view also leads to a different
proof of the minimax $\sqrt{n d}$ regret bound in the standard
$d$-armed bandit game that is much simpler than the one provided in
\cite{AB10}. A summary of the bounds proved in this paper can be found
in Table \ref{table:1} and Table \ref{table:2}. In addition we prove 
several lower bounds. First, we establish lower
bounds on the minimax regret in all three games and under both types of
adversaries, whereas only the cases ($L_2 / L_{\infty}$, Full
Information) and ($L_2$, Bandit) were previously treated in the
literature. Moreover we also answer a question of \cite{KWK10} by
showing that the traditional exponentially weighted average forecaster
is suboptimal against $L_{\infty}$ adversaries.

In particular, 
this paper leads to the following (perhaps unexpected) conclusions:
\begin{itemize}
\item 
The full information game is as hard as the semi-bandit game. More
precisely, in terms of $R_n$, the price that one pays for the limited
feedback of the semi-bandit game compared to the full information game
is only a constant factor (or a $\sqrt{\log d}$ factor for the $L_2$
setting).
\item 
In the full information and semi-bandit game, the traditional
exponentially weighted average forecaster is provably suboptimal for
$L_{\infty}$ adversaries while it is optimal for $L_2$ adversaries in
the full information game.
\item 
Denote by $\cA_2$ (respectively $\cA_\infty$) the set of
adversaries that satisfy the $L_2$ assumption (respectively the
$L_{\infty}$ assumption). We clearly have $\cA_2 \subset \cA_{\infty}
\subset d \cA_2$. We prove that, in the full information game, $R_n$
gains an additional factor of $\sqrt{d}$ at each inclusion. In the
semi-bandit game, we show that the same statement remains true up to a
logarithmic factor.
%% \item 
%% Obtaining the right dependency in $d$ in the bandit game seems to be a
%% very difficult problem.
\end{itemize}

\paragraph{Notation.} 
%The $L_1$ norm of a vector $u\in\R^m$ is denoted $\|u\|_1 = \sum_{i=1}^m |u_i|$. 
The convex hull of $\cS$ is denoted $\Conv(\cS)$.

\section{Combinatorial learning with Bregman projections} 

In this section we introduce a general forecaster that we
call \cleb\ (Combinatorial LEarning with Bregman
projections). Every
forecaster investigated in this paper is a special case of \cleb.

Let $\cD$ be a convex subset of $\R^d$ with nonempty interior $\text{Int}(\cD)$
and boundary $\partial \cD$.
\begin{definition}
We call Legendre any function $F:\cD\ra\R$ such that 
\begin{itemize}
\item[(i)] $F$ is strictly convex and admits continuous first partial
derivatives on $\text{Int}(\cD)$
\item[(ii)]  For any $u\in \partial \cD$, for any $v\in \text{Int}(\cD)$, we have
$$\lim_{s\ra 0,s>0} (u-v)^T\nabla F\big((1-s)u+sv\big) = +\infty.$$
\end{itemize}
\end{definition}
The Bregman divergence $D_F: \cD\times \text{Int}(\cD)$ associated to a Legendre function $F$ is defined by
  $$
  D_F(u,v) = F(u) - F(v) - (u-v)^T\nabla F(v).
  $$ 
We consider the algorithm \cleb\ described in
Figure \ref{fig:cleb}.  The basic idea is to use a
potential-based gradient descent \eqref{eq:wp} followed by a
projection \eqref{eq:proj} with respect to the Bregman divergence of
the potential onto the convex hull of $\cS$ to ensure that the
resulting weight vector $w_{t+1}$ can be viewed as 
$w_{t+1} = \E_{V \sim p_{t+1}} V$ for some distribution $p_{t+1}$ on $\cS$. The
combination of Bregman projections with potential-based gradient
descent was first used in \cite{HeWa98}.  Online learning with Bregman
divergences without the projection step has a long history (see
Section 11.11 of \cite{CesLug06}).  As discussed below, \cleb\ may be
viewed as a generalization of the forecasters {\sc linexp} and {\sc inf}.

\begin{figure}[!h]
\bookbox{
%{\em \cleb (Convex learning with Bregman projections):}
%
%\medskip\noindent

Parameters: 
\begin{itemize}
\item a Legendre function $F$ defined on $\cD$ with 
$\Conv(\cS)\cap \text{Int}(\cD) \neq \emptyset$
\item $w_1 \in \Conv(\cS)\cap \text{Int}(\cD)$
\end{itemize}

\medskip\noindent
For each round $t=1,2,\ldots,n$;
\begin{itemize}
\item[(a)]
Let $p_t$ be a distribution on the set $\cS$ such that $w_t = \E_{V \sim p_t} V$.
%Introduce the distribution $p_t=\gamma \mu + (1-\gamma) q_t$.
\item[(b)] 

Draw a random action $V_t$ according to the distribution $p_t$ and observe 
\begin{itemize}
\item the loss vector $\ell_t$ in the full information game,
\item the coordinates $\ell_{i,t} \IND_{V_{i,t} = 1}$ in the semi-bandit game,
\item the instantaneous loss $\ell_t^T V_t $ in the bandit game.
\end{itemize}
\item[(c)]
Estimate the loss $\ell_t$ by $\tl_t$.
For instance, one may take
\begin{itemize}
\item $\tilde{\ell}_t=\ell_t$ in the full information game,
\item $\tilde{\ell}_{i,t} = \frac{\ell_{i,t}}{\sum_{v\in\cS:v_i=1} p_{t}(v)} V_{i,t}$ in the semi-bandit game,
\item $\tilde{\ell_t} = P_t^+ V_t V_t^T \ell_t,$ %where $P_t^+$ is the pseudo-inverse 
with $P_t = \E_{v \sim p_t} (v v^T)$
in the bandit game.
\end{itemize}
\item[(d)]
Let $w'_{t+1}\in\text{Int}(\cD)$ satisfying
  \begin{align} \label{eq:wp}
  \nabla F(w'_{t+1}) = \nabla F(w_t) - \tilde{\ell}_t.
  \end{align} %$\nabla F(w_t) - \tilde{\ell}_t \in
\item[(e)]
Project the weight vector $w'_{t+1}$ defined by \eqref{eq:wp} on the convex hull of $\cS$:
\begin{equation} \label{eq:proj}
w_{t+1} \in \argmin_{w \in \Conv(\cS)\cap\text{Int}(\cD)} D_F(w,w_{t+1}') .
\end{equation}
\end{itemize}
}
\caption{Combinatorial learning with Bregman projections (\cleb).}\label{fig:cleb}
\end{figure}

The Legendre conjugate $F^*$ of $F$ is defined by
  $F^*(u)=\sup_{v\in \cD} \big\{ u^Tv-F(v)\big\}$.
The following theorem establishes the first step of all
upper bounds for the regret of \cleb.

\begin{theorem} \label{th:cleb}
\cleb\ satisfies for any $u \in \Conv(\cS)\cap\cD$,
\begin{align} 
\sum_{t=1}^n \tilde{\ell}_t^T w_t & - \sum_{t=1}^n \tilde{\ell}_t^T u 
%\le D_F(u,w_1)+\sum_{t=1}^n D_F(w_t,w'_{t+1}) \label{eq:cleb}\\
\le D_F(u,w_1)+\sum_{t=1}^n D_{F^*}(\nabla F(w_t)-\tl_t,\nabla F(w_t)) \label{eq:clebb}.
%& \le D_F(u,w_1)+\sum_{t=1}^n \tl_t^T \Big(w_t-(\nabla F)^{-1}\big(\nabla F(w_t)-\tl_t\big)\Big).
% \label{eq:simple}
\end{align}
\end{theorem}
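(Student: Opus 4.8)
The plan is to run the classical mirror-descent argument, the only nonstandard ingredient being the Bregman projection step \eqref{eq:proj}. The engine is the three-point identity for Bregman divergences: for $x\in\cD$ and $y,z\in\text{Int}(\cD)$,
\[
D_F(x,z)-D_F(x,y)-D_F(y,z)=(x-y)^T\big(\nabla F(y)-\nabla F(z)\big),
\]
which follows by expanding the three divergences from their definition and observing that all the $F(\cdot)$ terms cancel. First I would apply it with $x=u$, $y=w_t$, $z=w'_{t+1}$. Since \eqref{eq:wp} says $\nabla F(w_t)-\nabla F(w'_{t+1})=\tl_t$, this gives the exact per-round identity
\[
\tl_t^T(w_t-u)=D_F(u,w_t)-D_F(u,w'_{t+1})+D_F(w_t,w'_{t+1}).
\]
(One must first know that $w'_{t+1}$ exists in $\text{Int}(\cD)$: this is exactly what the Legendre assumptions buy us, since $\nabla F$ is a bijection from $\text{Int}(\cD)$ onto its image and property~(ii) prevents the iterate from escaping to $\partial\cD$.)

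Next I would lower bound $D_F(u,w'_{t+1})$ by $D_F(u,w_{t+1})$ using the generalized Pythagorean inequality for Bregman projections. Because $w_{t+1}$ minimizes $w\mapsto D_F(w,w'_{t+1})$ over the convex set $\Conv(\cS)\cap\text{Int}(\cD)$, the first-order optimality condition reads $(u-w_{t+1})^T\big(\nabla F(w_{t+1})-\nabla F(w'_{t+1})\big)\ge 0$ for every $u$ in that set; feeding this into the same three-point identity (now with $x=u$, $y=w_{t+1}$, $z=w'_{t+1}$) yields $D_F(u,w'_{t+1})\ge D_F(u,w_{t+1})+D_F(w_{t+1},w'_{t+1})\ge D_F(u,w_{t+1})$. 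To cover a target $u\in\Conv(\cS)\cap\partial\cD$ I would pass to the limit along the segment $(1-s)u+s v$ with $v\in\Conv(\cS)\cap\text{Int}(\cD)$ (which stays in $\Conv(\cS)\cap\text{Int}(\cD)$ for $s\in(0,1]$ by the line-segment principle) and invoke continuity of $F$ on $\cD$. Substituting the bound, summing over $t=1,\dots,n$, telescoping the $D_F(u,w_t)$ terms and dropping the nonnegative $-D_F(u,w_{n+1})$ gives
\[
\sum_{t=1}^n \tl_t^T(w_t-u)\le D_F(u,w_1)+\sum_{t=1}^n D_F(w_t,w'_{t+1}).
\]

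Finally I would rewrite the residual using Legendre duality. For a Legendre $F$ one has $\nabla F^*=(\nabla F)^{-1}$ and, via the Fenchel equality $F^*(\nabla F(x))=x^T\nabla F(x)-F(x)$, the identity $D_F(a,b)=D_{F^*}\big(\nabla F(b),\nabla F(a)\big)$. Applying it with $a=w_t$, $b=w'_{t+1}$ and using $\nabla F(w'_{t+1})=\nabla F(w_t)-\tl_t$ turns $D_F(w_t,w'_{t+1})$ into exactly $D_{F^*}\big(\nabla F(w_t)-\tl_t,\nabla F(w_t)\big)$, which is the term in \eqref{eq:clebb}, completing the proof. The algebra of the three identities is routine; the genuine care goes into the Legendre-regularity points above — existence of $w'_{t+1}$ and $w_{t+1}$ in the interior, and extending the Pythagorean inequality to boundary targets $u$ — which I would isolate as short remarks relying on standard properties of Legendre functions rather than prove in detail.
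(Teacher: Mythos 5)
Your proposal is correct and follows essentially the same route as the paper's proof: the three-point Bregman identity combined with the gradient-descent relation \eqref{eq:wp}, the generalized Pythagorean inequality for the projection step, telescoping, and the conjugate-duality identity $D_F(w_t,w'_{t+1})=D_{F^*}\big(\nabla F(w_t)-\tl_t,\nabla F(w_t)\big)$. The only difference is that you derive the Pythagorean inequality and the duality identity from first principles and discuss boundary targets explicitly, whereas the paper cites Lemmas 11.1, 11.3 and Proposition 11.1 of \cite{CesLug06} for these facts.
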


\begin{proof}
By applying the definition of the Bregman divergences (or equivalently using Lemma 11.1 of \cite{CesLug06}), we obtain
  \begin{align*}
  \tl_t^Tw_t-\tl_t^Tu & = (u-w_t)^T\big(\nabla F(w'_{t+1}) - \nabla F(w_t) \big)\\
  & = D_F(u,w_t)+D_F(w_t,w'_{t+1})-D_F(u,w'_{t+1}).
  \end{align*}
By the Pythagorean theorem (Lemma 11.3 of \cite{CesLug06}), we have 
  $D_F(u,w_{t+1}') \ge D_F(u,w_{t+1}) + D_F(w_{t+1},w_{t+1}'),$ hence
  \begin{align*}
  \tl_t^Tw_t-\tl_t^Tu \le D_F(u,w_t)+D_F(w_t,w'_{t+1})-D_F(u,w_{t+1})-D_F(w_{t+1},w_{t+1}').
  \end{align*}
Summing over $t$ then gives  
  \begin{multline} \label{eq:gen}
  \sum_{t=1}^n \tl_t^Tw_t-\sum_{t=1}^n \tl_t^Tu \le D_F(u,w_1)-D_F(u,w_{n+1})
    +\sum_{t=1}^n \big(D_F(w_t,w'_{t+1}) -D_F(w_{t+1},w_{t+1}')\big).\quad
  \end{multline}
By the nonnegativity of the Bregman divergences, we get 
\begin{align*} 
\sum_{t=1}^n \tilde{\ell}_t^T w_t & - \sum_{t=1}^n \tilde{\ell}_t^T u 
\le D_F(u,w_1)+\sum_{t=1}^n D_F(w_t,w'_{t+1}).
\end{align*}
From Proposition 11.1 of \cite{CesLug06}, we have
  $
  D_F(w_t,w'_{t+1})=D_{F^*}\big(\nabla F(w_t)-\tl_t,\nabla F(w_t)\big),
  $
which concludes the proof.
%Introduce the function $\phi:s\mapsto F^*\big(\nabla F(w_t)-s\tl_t\big)$.
%By convexity of $\phi$, we have
%  \begin{align*}
%  D_{F^*}\big(\nabla F(w_t)-\tl_t,\nabla F(w_t)\big) = \phi(1)-\phi(0)-\phi'(0)
%    \le \phi'(1) - \phi'(0)
%  \end{align*}
%This implies \eqref{eq:simple} since\footnote{Note that we are approximately losing a factor
%of $2$ when we go from \eqref{eq:clebb} to \eqref{eq:simple} since $\phi(1)-\phi(0)\approx \phi'(1/2) \approx \phi'(1)/2$.} $\nabla F^*=(\nabla F)^{-1}$ (see e.g. Lemma 11.5 of \cite{CesLug06}).
\end{proof}

As we will see below,
by the equality $\E \sum_{t=1}^n \tl_t^TV_t=\E \sum_{t=1}^n \tl_t^T
w_t$, and provided that $\tl_t^T V_t$ and $\tl_t^T u$ are unbiased
estimates of $\E \ell_t^T V_t$ and $\E \ell_t^T u$, Theorem
\ref{th:cleb} leads to an upper bound on the regret $\oR_n$ of \cleb,
which allows us to obtain the bounds of Table \ref{table:2} by using
appropriate choices of $F$. Moreover, if $F$ admits an Hessian,
denoted $\nabla^2F$, that is always invertible, then one can prove that
up to a third-order term \big(in $\tl_t$\big), the regret bound can be
written as:
\begin{equation} \label{eq:intuition}
\sum_{t=1}^n \tilde{\ell}_t^T w_t - \sum_{t=1}^n \tilde{\ell}_t^T u \lessapprox D_F(u,w_1)+\sum_{t=1}^n \tl_t^T \left( \nabla^2 F(w_t) \right)^{-1} \tl_t .
\end{equation}

In this paper, we restrict our attention to the combinatorial learning
setting in which $\cS$ is a subset of $\{0,1\}^d$. However, one should
note that this specific form of $\cS$ plays no role in the definition
of \cleb, meaning that the algorithm on Figure \ref{fig:cleb} can be
used to handle general online linear optimization problems, where
$\cS$ is any subset of $\R^d$.

\section{Different instances of \cleb}

\begin{figure*}[t]
\begin{center}
\includegraphics[height=6cm]{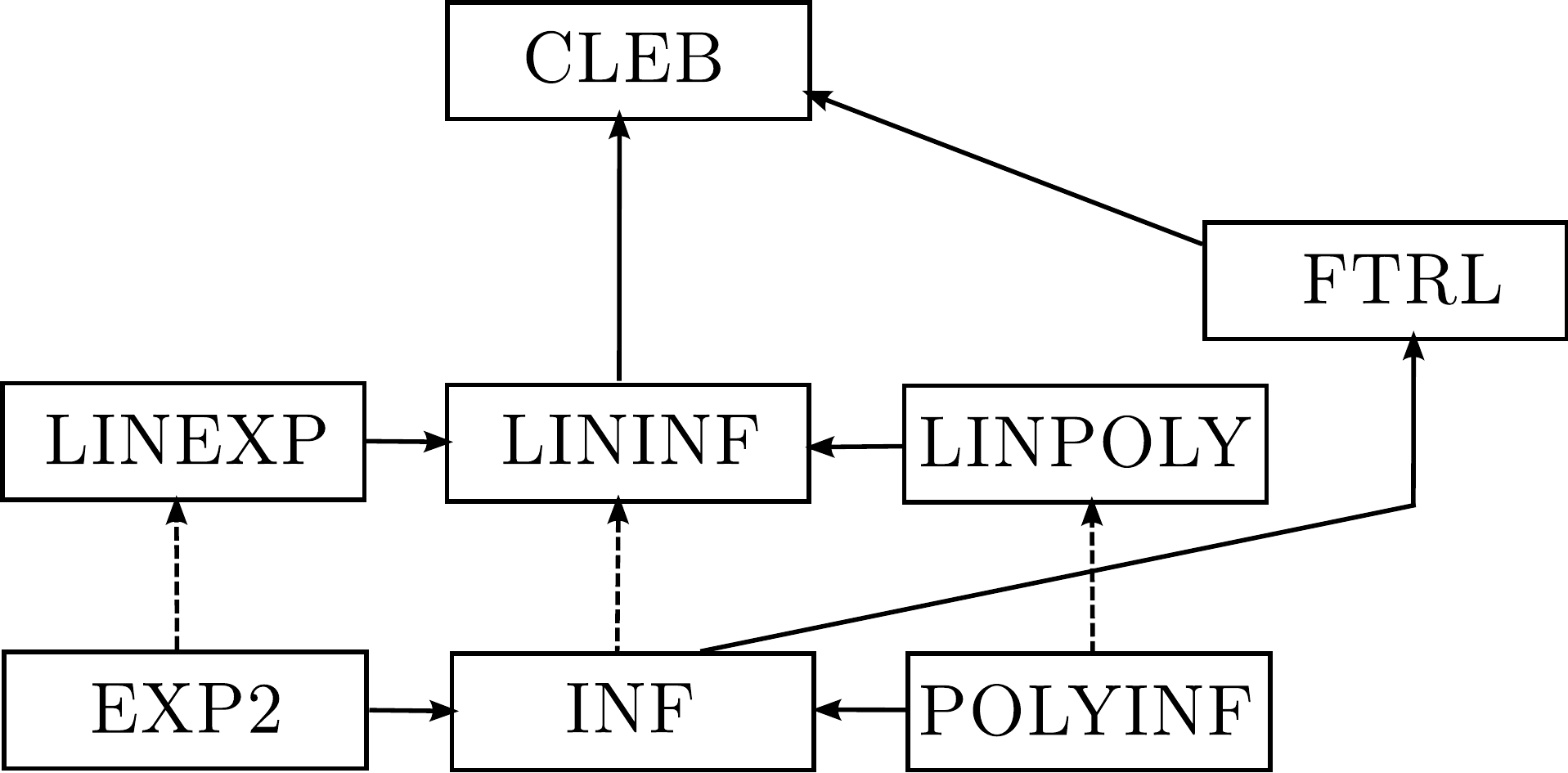}
\caption{The figure sketches the relationship of the algorithms
  studied in this paper with arrows representing ``is a special case
  of''.  Dotted arrows indicate that the link is obtained by
  ``expanding'' $\cS$, that, is seeing $\cS$ as the set of basis
  vector in $\R^{|\cS|}$ rather than seeing it as a (structured)
  subset of $\{0,1\}^{d}$ (see Section \ref{sec:expd}). The six
  algorithms on the bottom use a Legendre function with a diagonal
  Hessian. On the contrary, the {\sc ftrl} algorithm (see Section
  \ref{sec:ftrl}) may consider Legendre functions more adapted to the
  geometry of the convex hull of $\cS$. {\sc polyinf} is the algorithm considered
  in Theorem \ref{th:polyinf}.
\label{fig:organi}}
\end{center}
\end{figure*}

In this section we describe several instances of \cleb\, and relate
them to existing algorithms.  Figure \ref{fig:organi} summarizes the
relationship between the various algorithms introduced below.

\subsection{{\sc exp2} (Expanded Exponentially weighted average forecaster)} \label{sec:expd}
\begin{figure}[t!]
\bookbox{
{\em {\sc exp2}:}

\medskip\noindent

Parameter: Learning rate $\eta$.

\medskip\noindent
Let $w_1=\big(\frac1{|\cS|},\hdots,\frac1{|\cS|}\big) \in \R^{|\cS|}$.

\medskip\noindent
For each round $t=1,2,\ldots,n$;
\begin{itemize}
\item[(a)]
Let $p_t$ the distribution on $\cS$ such that $p_t(v)=w_{v,t}$ for any $v\in\cS$.
\item[(b)]
Play $V_t \sim p_t$ and observe 
\begin{itemize}
\item the loss vector $\ell_t$ in the full information game,
\item the coordinates $\ell_{i,t} \IND_{V_{i,t} = 1}$ in the semi-bandit game,
\item the instantaneous loss $\ell_t^T V_t $ in the bandit game.
\end{itemize}
\item[(c)]
Estimate the loss vector $\ell_t$ by $\tl_t$.
For instance, one may take
\begin{itemize}
\item $\tl_t=\ell_t$ in the full information game,
%\item $\hl_{v,t} = \frac{\ell_t^Tv}{w_{v,t}} \IND_{V_t=v}$ in the semi-bandit and bandit games.
\item $\tilde{\ell}_{i,t} = \frac{\ell_{i,t}}{\sum_{v\in\cS:v_i=1} p_{v,t}} V_{i,t}$ in the semi-bandit game,
\item $\tilde{\ell_t} = P_t^+ V_t V_t^T \ell_t,$ %where $P_t^+$ is the pseudo-inverse 
with $P_t = \E_{v \sim p_t} (v v^T)$
in the bandit game.
\end{itemize}
\item[(d)]
Update the weights, for all $v \in \cS$, 
$$w_{v,t+1}=\frac{\exp(- \eta \tl_t^Tv) w_{v,t}}{\sum_{u\in\cS} \exp(- \eta \tl_t^Tu) w_{u,t}}.$$ 
\end{itemize}
}
\caption{{\sc exp2} forecaster.}\label{fig:Exp2}
\end{figure}

The simplest approach to combinatorial prediction games is to consider
each vertex of $\cS$ as an independent expert, and then apply a strategy
designed for the expert problem. We call {\sc exp2} the resulting
strategy when one uses the traditional exponentially weighted average
forecaster (also called Hedge, \cite{FS97}), see Figure
\ref{fig:Exp2}. In the full information game, {\sc exp2} corresponds
to Expanded Hedge defined in \cite{KWK10}, where it was studied under
the $L_{\infty}$ assumption. It was also studied in the full
information game under the $L_2$ assumption in \cite{DHK08}.  In the
semi-bandit game, {\sc exp2} was studied in \cite{GLLO07} under the
$L_{\infty}$ assumption. Finally in the bandit game, {\sc exp2}
corresponds to the strategy proposed by \cite{DHK08} and also
to the ComBand strategy, studied under the $L_{\infty}$
assumption in \cite{CL09} and under the $L_2$ assumption in
\cite{CL10}. (These last strategies differ in how the losses are estimated.)
%JY: I remove "See Section \ref{sec:bandit} for more details on this approach." due to lack of space

{\sc exp2} is a \cleb\ strategy in dimension $|S|$ that uses $\cD =
[0,+\infty)^{|S|}$ and the function $F: u \mapsto \frac1\eta
  \sum_{i=1}^{|S|} u_i \log(u_i)$, for some $\eta>0$ (this can be
  proved by using the fact that the Kullback-Leibler projection on the
  simplex is equivalent to a $L_1$-normalization). The following theorem
  shows the regret bound that one can obtain for {\sc exp2} (for instance with Theorem \ref{th:lininf} applied to the case where 
$\cS$ is replaced by $\cS'=\big\{ u\in\{0,1\}^{|\cS|} : \sum_{v\in\cS} u_v=1\big\}$).

\begin{theorem} \label{th:Exp2}
For the {\sc exp2} forecaster, provided that
$\E \tl_t=\ell_t$, we have
	$$
	\oR_n \le \frac{\log(|\cS|)}{\eta} 
		+ \frac{\eta}2 \sum_{t=1}^n \sum_{v \in \cS} \E\big[p_{t}(v) (\tl_t^Tv)^2 
		\max\big(1,\exp(-\eta \tl_t^Tv)\big)\big].
	$$
\end{theorem}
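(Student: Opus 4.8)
The plan is to specialize Theorem~\ref{th:cleb} to the {\sc exp2} setting, where we regard $\cS$ as the set of basis vectors in $\R^{|\cS|}$ and use the negative-entropy potential $F(u) = \frac{1}{\eta}\sum_{v\in\cS} u_v \log u_v$ on $\cD = [0,+\infty)^{|\cS|}$. First I would record the two standard facts about this choice: the Bregman divergence $D_F$ is $\frac{1}{\eta}$ times the Kullback--Leibler divergence, so with $w_1$ uniform we get $D_F(u,w_1) \le \frac{\log|\cS|}{\eta}$ for any $u$ in the simplex; and the Legendre conjugate is $F^*(\theta) = \frac{1}{\eta}\log\sum_v \exp(\eta\theta_v)$ up to an additive constant, so that $D_{F^*}$ has an explicit exponential form. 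The goal is then to bound the conjugate-divergence term $\sum_t D_{F^*}(\nabla F(w_t) - \tl_t, \nabla F(w_t))$ appearing on the right-hand side of \eqref{eq:clebb}.

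The main computational step is to show that for this $F^*$,
$$
D_{F^*}(\nabla F(w_t) - \tl_t, \nabla F(w_t)) \le \frac{\eta}{2}\sum_{v\in\cS} p_t(v)(\tl_t^T v)^2 \max\big(1,\exp(-\eta\,\tl_t^T v)\big).
$$
I would do this by writing out $D_{F^*}$ explicitly. Since $\nabla F(w_t)$ has coordinates $\frac{1}{\eta}(\log p_t(v) + 1)$, a direct substitution collapses the log-sum-exp terms, and one finds that $\eta\, D_{F^*}(\nabla F(w_t) - \tl_t, \nabla F(w_t))$ equals the KL-type quantity $-\log\big(\sum_v p_t(v) e^{-\eta \tl_t^T v}\big) - \eta\sum_v p_t(v)\tl_t^T v$, i.e. $\log \E_{V\sim p_t} e^{-\eta\tl_t^T V} $ is being compared against its linearization. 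Using $\log x \le x - 1$ (or working with $\psi(\lambda) = \log\E e^{\lambda X}$ and the fact that $\psi(\lambda) - \lambda\psi'(0) \le \frac{\lambda^2}{2}\E[X^2 \max(1, e^{\lambda X})]$ via a second-order Taylor bound on $e^x - 1 - x$), one arrives at the stated quadratic bound; the elementary inequality $e^{-x} - 1 + x \le \frac{x^2}{2}\max(1, e^{-x})$ for all real $x$ is what produces the $\max(1,\exp(-\eta\tl_t^T v))$ factor.

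To finish, I would assemble the pieces: combining \eqref{eq:clebb} with the two bounds above gives, for any $u$ in the convex hull of $\cS$,
$$
\sum_{t=1}^n \tl_t^T w_t - \sum_{t=1}^n \tl_t^T u \le \frac{\log|\cS|}{\eta} + \frac{\eta}{2}\sum_{t=1}^n \sum_{v\in\cS} p_t(v)(\tl_t^T v)^2 \max\big(1,\exp(-\eta\,\tl_t^T v)\big).
$$
Then I take $u$ to be (the image of) a minimizer $v^\star \in \cS$ of $\E\sum_t \ell_t^T v$, use $w_t = \E_{V\sim p_t} V$ so that $\E[\tl_t^T w_t] = \E[\tl_t^T V_t]$, and invoke the unbiasedness hypothesis $\E\tl_t = \ell_t$ (conditionally on the past) to replace $\E\tl_t^T w_t$ by $\E\ell_t^T V_t$ and $\E\tl_t^T v^\star$ by $\E\ell_t^T v^\star$ after taking expectations. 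This yields exactly the claimed bound on $\oR_n$. The only real obstacle is the second-order estimate for the log-moment-generating-function-type term; everything else is bookkeeping specific to the entropic potential, and an alternative route is simply to quote the corresponding step in the classical analysis of Hedge (e.g. via the standard Hoeffding-style argument for the exponential potential, adapted to possibly unbounded nonnegative losses, which is precisely what forces the $\max(1,e^{-\eta\tl_t^T v})$ correction).
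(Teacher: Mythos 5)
Your proposal is correct and follows essentially the same route as the paper: the paper obtains Theorem~\ref{th:Exp2} by applying Theorem~\ref{th:lininf} in the expanded space (with $\cS$ replaced by the basis vectors of $\R^{|\cS|}$) and $\psi(x)=e^{\eta x}$, and your direct specialization of Theorem~\ref{th:cleb} with the entropic potential, combined with $e^{-x}-1+x\le\frac{x^2}{2}\max(1,e^{-x})$, is precisely the exponential instance of that theorem's Taylor--Lagrange step. One small correction: over $\cD=[0,+\infty)^{|\cS|}$ the conjugate of $F$ is $F^*(\theta)=\frac1\eta\sum_{v}e^{\eta\theta_v-1}$ rather than a log-sum-exp, so the divergence term equals $\frac1\eta\sum_{v}p_t(v)\big(e^{-\eta\tl_t^Tv}-1+\eta\,\tl_t^Tv\big)$ exactly and your elementary inequality applies to it directly, without the detour through $\log x\le x-1$ (also, the log-moment-generating expression you display has its sign flipped; the nonnegative quantity is $\log\E_{V\sim p_t}e^{-\eta\tl_t^TV}+\eta\,\E_{V\sim p_t}\tl_t^TV$).
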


%\begin{proof}
%This comes from the traditional argument associated with the
%exponentially weighted average forecasters. It can be also obtained by applying
%Theorem \ref{th:lininf} to the case where 
%$\cS$ is replaced by $\cS'=\big\{ u\in\{0,1\}^{|\cS|} : \sum_{v\in\cS} u_v=1\big\}$.
%\end{proof}

\subsection{{\sc linexp} (Linear Exponentially weighted average forecaster)}
We call {\sc linexp} the \cleb\ strategy that uses $\cD =
[0,+\infty)^d$ and the function $F: u \mapsto \frac1\eta \sum_{i=1}^d
  u_i \log(u_i)$ associated to the Kullback-Leibler divergence, for
  some $\eta>0$. In the full information game, {\sc linexp}
  corresponds to Component Hedge defined in \cite{KWK10}, where it was
  studied under the $L_{\infty}$ assumption. In the semi-bandit game,
  {\sc linexp} was studied in \cite{UNK10, KRS10} under the
  $L_{\infty}$ assumption, and for the particular set $\cS$ with all
  vertices of $L_1$ norm equal to some value $k$.

\subsection{{\sc ftrl} (Follow the Regularized Leader)} \label{sec:ftrl}
If $\Conv(\cS) \subset \cD$ and $w_1 \in \argmin_{w\in\cD} F(w)$, steps (d) and (e) 
are equivalent to
$$
w_{t+1}\in\argmin_{w \in \Conv(\cS)} \left( \sum_{s=1}^t \tl_s^T w+F(w)\right),
$$
showing that in this case \cleb\ can be interpreted as a regularized
follow-the-leader algorithm. This type of algorithm was studied in \cite{AR09} in the
full information and bandit setting (see also the lecture notes \cite{RT08}).
A survey of {\sc ftrl} strategies for the
full information game can be found in \cite{Haz10}. In the bandit
game, {\sc ftrl} with $F$ being a self-concordant barrier function 
and a different estimate than the one proposed in Figure \ref{fig:cleb}
was studied in \cite{AHR08}.
%JY: I remove "see Section \ref{sec:bandit} for more details on this approach." due to lack of space

\subsection{{\sc lininf} (Linear Implicitly Normalized Forecaster)}
Let $f:\R^d \ra \R$.
The function $f$ has a diagonal Hessian
if and only if it can be written as 
$f(u) = \sum_{i=1}^d f_i(u_i)$, for some twice differentiable functions $f_i:\R\ra\R$, $i=1,\dots,d$.
The Hessian is called exchangeable when the functions 
$f_1'',\dots,f_d''$ are identical.
In this case, up to adding an affine function of $u$ (note that 
this does not alter
neither the Bregman divergence nor \cleb), we have $f(u) =
\sum_{i=1}^d g(u_i)$ for some twice differentiable function $g$.  In
this section, we consider this type of Legendre functions.  
To underline the surprising
link\footnote{detailed in Appendix \ref{sec:inf}.} with the Implicitly
Normalized Forecaster proposed in \cite{AB10}, we consider $g$ of the
form $x\mapsto \int_{\cdot}^{x} \psi^{-1}(s)ds$, and will refer to the
algorithm presented hereafter as {\sc lininf}.

\begin{definition}
Let $\omega \ge 0$.
A function 
$\psi: (-\infty,a) \rightarrow \R^*_+$ for some 
$a\in\R\cup\{+\infty\}$ is called an 
$\omega$-potential if and only if it is convex,
 continuously differentiable, and satisfies
\begin{align*} 
& \lim_{x\ra -\infty} \psi(x)=\omega &&
\lim_{x\ra a} \psi(x)= +\infty \notag \\
& \psi' > 0 && \int_{\omega}^{\omega+1} |\psi^{-1}(s)|ds <+\infty.
\end{align*}
\end{definition}

\begin{theorem} \label{th:lininf}
Let $\omega \ge0$ and let $\psi$ be an $\omega$-potential function.
The function $F$ defined on $\cD= [\omega, +\infty)^d$ by 
$F(u)=\sum_{i=1}^d \int_{\omega}^{u_i} \psi^{-1}(s)ds$
is Legendre. The associated \cleb\ satisfies, for any 
$u \in \Conv(\cS)\cap\cD$,
\begin{multline} 
  \sum_{t=1}^n \tilde{\ell}_t^T w_t - \sum_{t=1}^n \tilde{\ell}_t^T u 
    \le D_F(u,w_1)
    +\frac12\sum_{t=1}^n \sum_{i=1}^d \tl_{i,t}^2 
    \max\Big(\psi'\big(\psi^{-1}(w_{i,t})\big) ,
    \psi'\big(\psi^{-1}(w_{i,t})-\tl_{i,t}\big) \Big), \quad
    \label{eq:psib}
  \end{multline}
where for any $(u,v)\in\cD\times\text{Int}(\cD)$,
  \begin{equation}\label{eq:dfpsi}
  D_F(u,v)=\sum_{i=1}^d \bigg(\int_{v_i}^{u_i} \psi^{-1}(s)ds-(u_i-v_i)\psi^{-1}(v_i) \bigg).
  \end{equation}
%Besides, when $\psi$ is convex and the estimates $\tl_t$ are nonnegative, we have
In particular, when the estimates $\tl_{i,t}$ are nonnegative, we have
\begin{align} 
  \sum_{t=1}^n \tilde{\ell}_t^T w_t - \sum_{t=1}^n \tilde{\ell}_t^T u 
    \le D_F(u,w_1)+\sum_{t=1}^n \sum_{i=1}^d \frac{\tl_{i,t}^2}{2(\psi^{-1})'(w_{i,t})}. 
    \label{eq:psic}
  \end{align}
\end{theorem}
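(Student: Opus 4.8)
The plan is to verify the Legendre property by hand and then feed $F$ into Theorem~\ref{th:cleb}, controlling the conjugate divergence appearing there by a one-dimensional Taylor argument.

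First I would check that $F$ is Legendre. Writing $F(u)=\sum_{i=1}^d F_i(u_i)$ with $F_i(x)=\int_\omega^x\psi^{-1}(s)\,ds$: the defining properties of an $\omega$-potential say $\psi:(-\infty,a)\to(\omega,+\infty)$ is convex, $C^1$, and has $\psi'>0$, hence $\psi$ is an increasing $C^1$-bijection onto $(\omega,+\infty)$ and $\psi^{-1}$ is increasing and $C^1$ on $(\omega,+\infty)$ with $(\psi^{-1})'(y)=1/\psi'(\psi^{-1}(y))>0$; the condition $\int_\omega^{\omega+1}|\psi^{-1}|<\infty$ makes each $F_i$ finite on all of $[\omega,+\infty)$, so $F$ is well defined and real valued on $\cD$. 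On $\mathrm{Int}(\cD)=(\omega,+\infty)^d$ it has continuous partials $\partial_iF(u)=\psi^{-1}(u_i)$ and a diagonal Hessian with strictly positive entries $(\psi^{-1})'(u_i)$, hence is strictly convex. For property~(ii), fix $u\in\partial\cD$, $v\in\mathrm{Int}(\cD)$ and consider $(u-v)^T\nabla F\big((1-s)u+sv\big)=\sum_{i=1}^d(u_i-v_i)\,\psi^{-1}\big(u_i+s(v_i-u_i)\big)$; for every $i$ with $u_i>\omega$ the $i$-th term has the finite limit $(u_i-v_i)\psi^{-1}(u_i)$, while for every $i$ with $u_i=\omega$ the argument equals $\omega+s(v_i-\omega)\downarrow\omega$, so $\psi^{-1}\to-\infty$ (since $\psi(-\infty)=\omega$) and, as $u_i-v_i=\omega-v_i<0$, that term tends to $+\infty$. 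Since $u\in\partial\cD$ there is at least one such $i$, so the whole sum tends to $+\infty$.

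Next, by Theorem~\ref{th:cleb} it suffices to bound each term $D_{F^*}\big(\nabla F(w_t)-\tl_t,\nabla F(w_t)\big)$. Separability of $F$ gives separability of the conjugate, $F^*(\theta)=\sum_i F_i^*(\theta_i)$, and from $F_i'=\psi^{-1}$ we get $(F_i^*)'=\psi$ and $(F_i^*)''=\psi'$. The two points have $i$-th coordinates $\psi^{-1}(w_{i,t})$ and $\psi^{-1}(w_{i,t})-\tl_{i,t}$, and the latter lies in $(-\infty,a)$ exactly because $w_{t+1}'\in\mathrm{Int}(\cD)$ in step~(d) of \cleb\ (which is automatic when $\tl_{i,t}\ge0$, since then $\psi^{-1}(w_{i,t})-\tl_{i,t}\le\psi^{-1}(w_{i,t})<a$). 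Hence the segment between the two coordinates stays in the domain of $\psi$, and Taylor's formula with the Lagrange remainder yields, for suitable $\xi_{i,t}$ between $\psi^{-1}(w_{i,t})$ and $\psi^{-1}(w_{i,t})-\tl_{i,t}$,
\[D_{F^*}\big(\nabla F(w_t)-\tl_t,\nabla F(w_t)\big)=\tfrac12\sum_{i=1}^d\tl_{i,t}^2\,\psi'(\xi_{i,t}).\]
Then I would use that $\psi'$ is nondecreasing (convexity of $\psi$) to replace $\psi'(\xi_{i,t})$ by $\max\!\big(\psi'(\psi^{-1}(w_{i,t})),\,\psi'(\psi^{-1}(w_{i,t})-\tl_{i,t})\big)$; plugging this into the previous display and into Theorem~\ref{th:cleb} gives \eqref{eq:psib}. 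Formula \eqref{eq:dfpsi} is immediate from $D_F(u,v)=\sum_i\big(F_i(u_i)-F_i(v_i)-(u_i-v_i)F_i'(v_i)\big)$ with $F_i'=\psi^{-1}$. For \eqref{eq:psic}, when $\tl_{i,t}\ge0$ we have $\psi^{-1}(w_{i,t})-\tl_{i,t}\le\psi^{-1}(w_{i,t})$, so monotonicity of $\psi'$ makes the maximum equal $\psi'(\psi^{-1}(w_{i,t}))$, which is $1/(\psi^{-1})'(w_{i,t})$ by the inverse-function rule.

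The only genuinely delicate parts are the verification of the boundary condition~(ii) — keeping track of which coordinates send $\psi^{-1}$ to $-\infty$ and with what sign — and ensuring the segment used in the Lagrange remainder never exits $(-\infty,a)$; everything else is routine manipulation of one-dimensional convex functions and of $D_{F^*}$.
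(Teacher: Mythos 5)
Your proposal is correct and follows essentially the same route as the paper's proof: feed $F$ into Theorem~\ref{th:cleb}, compute $\nabla F^*=(\nabla F)^{-1}=(\psi(u_1),\dots,\psi(u_d))$ so that $D_{F^*}$ separates coordinatewise, bound each one-dimensional term by Taylor--Lagrange, and use convexity of $\psi$ to place the maximum of $\psi'$ at an endpoint of the segment. The only difference is that you spell out the verification that $F$ is Legendre (the boundary condition and the domain issue for the Lagrange segment), which the paper dismisses as ``easy to check''; your added detail is accurate.
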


\begin{proof}
It is easy to check that $F$ is a Legendre function and that 
\eqref{eq:dfpsi} holds. We also have
	$
  \nabla F^*(u)= (\nabla F)^{-1}(u)= \big(\psi(u_1),\dots,\psi(u_d)\big),
	$
hence
	$$
  D_{F^*}(u,v)=\sum_{i=1}^d \bigg(\int_{v_i}^{u_i} \psi(s)ds-(u_i-v_i)\psi(v_i)\bigg).
  $$
From the Taylor-Lagrange expansion, we have
	$
	D_{F^*}(u,v)\le\sum_{i=1}^d \max_{s\in[u_i,v_i]} \frac12 \psi'(s) (u_i-v_i)^2.
	$
Since the function $\psi$ is convex, we have
	$
	\max_{s\in[u_i,v_i]} \psi'(s) \le \psi'\big(\max(u_i,v_i)\big),
	$
which gives the desired results.
\end{proof}

Note that {\sc linexp} is an instance of {\sc lininf} 
with $\psi: x\mapsto\exp(\eta x)$. On the other hand,
\cite{AB10} recommend the choice $\psi(x)=(-\eta x)^{-q}$
with $\eta>0$ and $q>1$ since it leads to the minimax optimal
rate $\sqrt{n d}$ for the standard
$d$-armed bandit game \big(while the best bound for Exp3 
is of the order of $\sqrt{n d \log d}$\big).
This corresponds to a function $F$ of the form 
  $F(u)= - \frac{q}{(q-1)\eta} \sum_{i=1}^d u_i^{\fracl{q-1}q}$. We refer to the corresponding \cleb\ as {\sc linpoly}.
In Appendix \ref{sec:inf} we show that a simple application of Theorem \ref{th:lininf} proves that {\sc linpoly} with $q=2$
satisfies $\oR_n \le 2 \sqrt{2nd}$. This improves on the bound $\oR_n \le 8 \sqrt{nd}$ obtained in Theorem 11 of \cite{AB10}.

\section{Full Information Game} \label{sec:fullinfo}

This section details the upper bounds of the forecasters {\sc exp2}, {\sc linexp} and
{\sc linpoly} under the $L_2$ and $L_\infty$ assumptions for the full information game. 
All results are gathered in Table \ref{table:2} (page \pageref{table:2}). The proofs can be found in Appendix \ref{fi:app}. 
Up to numerical constants, the results concerning ({\sc exp2}, $L_2$ and $L_\infty$) and ({\sc linexp}, $L_\infty$) appeared or can be easily derived from respectively 
\cite{DHK08} and \cite{KWK10}.

\begin{theorem}[{\sc linexp}, $L_\infty$] \label{fi:linexp:infty}
Under the $L_\infty$ assumption, for {\sc linexp} with $\tl_t=\ell_t$, $\eta=\sqrt{2/n}$ and
$w_1 = \argmin_{w \in \Conv(\cS)} D_F\big(w,(1,\dots,1)^T\big),$ 
we have
	$$
	\oR_n \le d\sqrt{2n}.
	$$
\end{theorem}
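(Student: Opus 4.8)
The plan is to instantiate Theorem~\ref{th:lininf} for the Legendre function underlying {\sc linexp} and then bound the two terms it produces — the initial divergence $D_F(u,w_1)$ and the cumulative stability term — using nothing more than $\Conv(\cS)\subset[0,1]^d$ and the $L_\infty$ bound $\ell_{i,t}\le 1$.

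Recall that {\sc linexp} is the instance of {\sc lininf} with $\omega=0$ and potential $\psi(x)=\exp(\eta x)$, so that $\psi^{-1}(s)=\frac1\eta\log s$ and $(\psi^{-1})'(s)=\frac1{\eta s}$. Since $\tl_t=\ell_t$ has nonnegative coordinates, I would apply the bound \eqref{eq:psic}, which reads, for every $u\in\Conv(\cS)$,
\[
\sum_{t=1}^n \ell_t^T w_t-\sum_{t=1}^n \ell_t^T u \le D_F(u,w_1)+\frac{\eta}{2}\sum_{t=1}^n\sum_{i=1}^d w_{i,t}\,\ell_{i,t}^2 .
\]
The stability term is then controlled using $w_{i,t}\le 1$ (because $w_t\in\Conv(\cS)\subset[0,1]^d$) together with $\ell_{i,t}\le 1$, which gives $\sum_{i=1}^d w_{i,t}\ell_{i,t}^2\le d$, hence a total contribution of at most $\eta n d/2$. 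This is precisely where {\sc linexp} gains over {\sc exp2}: the estimate $\tl_t$ enters only coordinatewise, so the per-round variance term is of order $d$ rather than of order $d^2$ (the latter coming from $\ell_t^Tv$ being as large as $d$ when $\cS$ is viewed as a set of experts).

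For the initial term, note that $w_1$ is the Bregman projection of $(1,\dots,1)^T$ onto $\Conv(\cS)$, so the generalized Pythagorean theorem (Lemma~11.3 of \cite{CesLug06}) gives $D_F(u,w_1)\le D_F\big(u,(1,\dots,1)^T\big)$ for all $u\in\Conv(\cS)$; moreover this projection lies in $\text{Int}(\cD)$ by the Legendre barrier property, so $w_1$ is an admissible initialization for \cleb. Evaluating \eqref{eq:dfpsi} at $v=(1,\dots,1)^T$ gives $D_F\big(u,(1,\dots,1)^T\big)=\frac1\eta\sum_{i=1}^d\big(u_i\log u_i-u_i+1\big)$, and since $u_i\in[0,1]$ makes each summand at most $1$, we obtain $D_F(u,w_1)\le d/\eta$.

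Combining the two bounds, for every $u\in\Conv(\cS)$ one has $\sum_t \ell_t^T w_t-\sum_t \ell_t^T u\le d/\eta+\eta n d/2$ pathwise. Taking $u$ to realize $\min_{v\in\cS}\sum_t\ell_t^Tv$ (which equals the minimum over $\Conv(\cS)$ by linearity), taking expectations, and using $\E\,\ell_t^TV_t=\E\,\ell_t^Tw_t$ (valid because $\ell_t$ and $w_t$ do not depend on $V_t$ given the past), this bounds $\oR_n$ by $d/\eta+\eta n d/2$; with $\eta=\sqrt{2/n}$ each term equals $d\sqrt{n/2}$, so $\oR_n\le d\sqrt{2n}$. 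I do not expect a genuine obstacle here: the argument is essentially a specialization of Theorem~\ref{th:lininf}, and the only points that deserve a word of justification are that the projection defining $w_1$ stays in $\text{Int}(\cD)$ and that the Pythagorean inequality remains valid even though the center $(1,\dots,1)^T$ need not lie in $\Conv(\cS)$ — both standard facts about Legendre functions and Bregman projections.
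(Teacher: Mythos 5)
Your proof is correct and follows essentially the same route as the paper's: apply \eqref{eq:psic} of Theorem~\ref{th:lininf} with $\psi(x)=\exp(\eta x)$, bound the initial divergence by $d/\eta$ via the Pythagorean theorem applied to the projection of $(1,\dots,1)^T$, and bound the stability term by $\eta nd/2$ using $\sum_i w_{i,t}\le d$ and $\ell_{i,t}\le 1$. The extra remarks on the admissibility of $w_1$ and the unbiasedness step are fine but not points where the paper's argument differs.
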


\begin{theorem}[{\sc linexp}, $L_2$] \label{fi:linexp:2}
Under the $L_2$ assumption, for {\sc linexp} with $\tl_t=\ell_t$, $\eta=\sqrt{2d/n}$ and
$w_1 = \argmin_{w \in \Conv(\cS)} D_F\big(w,(1,\dots,1)^T\big),$ 
we have
	$$
	\oR_n \le \sqrt{2nd}.
	$$
\end{theorem}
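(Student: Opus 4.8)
The plan is to specialize Theorem \ref{th:lininf} to {\sc linexp}, which is the instance of {\sc lininf} obtained from the potential $\psi(x)=\exp(\eta x)$; here $\omega=0$, $\cD=[0,+\infty)^d$ and $\psi^{-1}(s)=\frac1\eta\log s$. Since in the full information game the estimates are the true losses, $\tl_t=\ell_t\ge 0$, I would invoke the nonnegative-estimate form \eqref{eq:psic}. Computing $(\psi^{-1})'(w_{i,t})=\frac{1}{\eta\,w_{i,t}}$ turns its second term into $\frac\eta2\sum_{t=1}^n\sum_{i=1}^d \ell_{i,t}^2\, w_{i,t}$, so that for every $u\in\Conv(\cS)\cap\cD$,
\begin{equation*}
\sum_{t=1}^n \ell_t^T w_t - \sum_{t=1}^n \ell_t^T u \le D_F(u,w_1)+\frac\eta2\sum_{t=1}^n\sum_{i=1}^d \ell_{i,t}^2\, w_{i,t}.
\end{equation*}

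Next I would pass from this deterministic inequality to $\oR_n$. Taking $u=u^\star\in\argmin_{v\in\cS}\sum_{t=1}^n\ell_t^Tv$ (which lies in $\Conv(\cS)\cap\cD$ because $\cS\subset\{0,1\}^d\subset\cD$) and using that a linear objective attains its minimum over $\Conv(\cS)$ at a vertex, the left-hand side equals $\sum_t\ell_t^Tw_t-\min_{v\in\cS}\sum_t\ell_t^Tv$. Because $w_t=\E_{V\sim p_t}V$ is determined by the past and the adversary selects $\ell_t$ without observing $V_t$, conditioning on the history gives $\E[\ell_t^Tw_t]=\E[\ell_t^TV_t]$, while $\E\min_{v}\sum_t\ell_t^Tv\le\min_{v}\E\sum_t\ell_t^Tv$; taking expectations therefore produces $\oR_n$ on the left.

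The heart of the argument, and the step that separates the $L_2$ analysis from the $L_\infty$ one, is the bound on the variance term. Under the $L_2$ assumption one has $\|\ell_t\|_\infty\le1$ (as noted in the introduction, $L_2$ implies $L_\infty$ once every coordinate is active for some $v\in\cS$), hence $\ell_{i,t}^2\le\ell_{i,t}$ and
\begin{equation*}
\sum_{i=1}^d \ell_{i,t}^2\, w_{i,t}\le\sum_{i=1}^d \ell_{i,t}\, w_{i,t}=\ell_t^T w_t=\sum_{v\in\cS}p_t(v)\,\ell_t^Tv\le 1,
\end{equation*}
the last inequality being again the $L_2$ assumption together with $\sum_{v}p_t(v)=1$. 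This replaces the crude $L_\infty$ estimate $\sum_i\ell_{i,t}^2 w_{i,t}\le\|w_t\|_1\le d$ by the constant $1$, gaining the factor $d$ that improves the final rate to $\sqrt{dn}$; consequently the variance term is at most $\frac{\eta n}2$. I expect this to be the only genuinely substantive point of the proof.

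Finally I would control $D_F(u^\star,w_1)$. The choice $w_1=\argmin_{w\in\Conv(\cS)}D_F\big(w,(1,\dots,1)^T\big)$ is a Bregman projection of $\mathbf1=(1,\dots,1)^T\in\text{Int}(\cD)$ onto $\Conv(\cS)$, so the generalized Pythagorean inequality (Lemma 11.3 of \cite{CesLug06}) yields $D_F(u^\star,w_1)\le D_F(u^\star,\mathbf1)$. Using \eqref{eq:dfpsi} with $\psi^{-1}(s)=\frac1\eta\log s$ gives $D_F(u,\mathbf1)=\frac1\eta\sum_{i=1}^d\big(u_i\log u_i-u_i+1\big)$, which at a vertex $u^\star\in\{0,1\}^d$ equals $\frac1\eta\big(d-\|u^\star\|_1\big)\le\frac d\eta$. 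Combining the two estimates gives $\oR_n\le\frac d\eta+\frac{\eta n}2$, and substituting $\eta=\sqrt{2d/n}$ makes the two terms equal to $\sqrt{dn/2}$ each, whence $\oR_n\le\sqrt{2nd}$. Apart from the $L_2$ variance estimate, this is the same computation as in the $L_\infty$ case, only with the different optimal learning rate.
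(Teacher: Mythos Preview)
Your proof is correct and follows the same route as the paper: apply Theorem \ref{th:lininf} (in the form \eqref{eq:psic}) with $(\psi^{-1})'(s)=1/(\eta s)$, bound $D_F(u,w_1)\le D_F(u,\mathbf 1)\le d/\eta$ via the Pythagorean inequality, bound the second-order term by $\eta n/2$ using $\sum_i w_{i,t}\ell_{i,t}^2\le \ell_t^T w_t\le 1$ from the $L_2$ assumption, and then optimize $\eta$. Your treatment is in fact more explicit than the paper's terse proof in handling the passage from the pathwise inequality to $\oR_n$ and in evaluating $D_F(u,\mathbf1)$ at a vertex.
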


\begin{theorem}[{\sc linpoly}, $L_\infty$] \label{fi:linpoly:infty}
Under the $L_\infty$ assumption, for {\sc linpoly} with $\tl_t=\ell_t$, 
$\eta=\sqrt{\frac2{q(q-1)n}}$ and
$w_1 = \argmin_{w \in \Conv(\cS)} D_F\big(w,(1,\dots,1)^T\big),$
we have
	$$
	\oR_n \le d\sqrt{\frac{2qn}{q-1}}.
	$$
\end{theorem}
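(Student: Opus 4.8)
The plan is to instantiate the general bound \eqref{eq:psic} from Theorem~\ref{th:lininf} for the particular potential $\psi(x) = (-\eta x)^{-q}$ that defines {\sc linpoly}, and then optimize $\eta$. First I would record the basic objects associated to this $\psi$: since $\psi$ maps $(-\infty,0)$ to $\R_+^*$ with $\psi^{-1}(s) = -\frac1\eta s^{-1/q}$, the Legendre function is $F(u) = \sum_{i=1}^d \int_0^{u_i}\psi^{-1}(s)\,ds = -\frac{q}{(q-1)\eta}\sum_{i=1}^d u_i^{(q-1)/q}$, with $\cD = [0,+\infty)^d$ (here $\omega = 0$). A direct computation gives $(\psi^{-1})'(w) = -\frac1{q\eta} w^{-1/q-1}$, but note the sign: $\psi$ is decreasing in the relevant parametrization, so one must be slightly careful and instead use the form \eqref{eq:psib} directly, or equivalently observe that the relevant curvature term is $\psi'(\psi^{-1}(w)) = \frac{q\eta}{\;}\,(\text{something})$. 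The cleanest route is: in the full information game $\tl_t = \ell_t \ge 0$, so \eqref{eq:psic} applies verbatim once the sign conventions are fixed, yielding
\begin{align*}
\sum_{t=1}^n \ell_t^T w_t - \sum_{t=1}^n \ell_t^T u \le D_F(u,w_1) + \frac{q\eta}{2}\sum_{t=1}^n \sum_{i=1}^d \ell_{i,t}^2\, w_{i,t}^{1/q+1}.
\end{align*}

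Next I would bound the two terms on the right separately. For the second term, under the $L_\infty$ assumption $\ell_{i,t}^2 \le 1$, and since $w_{i,t} \in \Conv(\cS) \subset [0,1]^d$ we have $w_{i,t}^{1/q+1} \le 1$, so $\sum_{i=1}^d \ell_{i,t}^2 w_{i,t}^{1/q+1} \le d$; summing over $t$ gives $\le nd$. Hence the second term is at most $\frac{q\eta n d}{2}$. For the first term, the choice $w_1 = \argmin_{w\in\Conv(\cS)} D_F(w,(1,\dots,1)^T)$ is designed precisely so that $D_F(u,w_1) \le D_F(u,(1,\dots,1)^T)$ for every $u \in \Conv(\cS)$ — this is the standard consequence of the Bregman-projection Pythagorean inequality (Lemma~11.3 of \cite{CesLug06}), since $(1,\dots,1)^T$ need not lie in $\Conv(\cS)$ but its projection $w_1$ does. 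Then I would evaluate $D_F(u,(1,\dots,1)^T)$ using \eqref{eq:dfpsi}: it equals $\sum_{i=1}^d\big(\int_1^{u_i}\psi^{-1}(s)\,ds - (u_i-1)\psi^{-1}(1)\big)$, and a short estimate using $u_i \in [0,1]$ and the explicit $\psi^{-1}$ shows this is bounded by $\frac{d}{(q-1)\eta}$ (the constant here being what makes the final bound come out clean — one uses convexity of $s\mapsto -\int^s \psi^{-1}$ together with $0\le u_i \le 1$).

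Combining, $\oR_n \le \frac{d}{(q-1)\eta} + \frac{q\eta nd}{2}$, and optimizing over $\eta$ (i.e. $\eta = \sqrt{\frac{2}{q(q-1)n}}$, exactly the choice in the statement) gives $\oR_n \le d\sqrt{\frac{2qn}{q-1}}$, as claimed. One also has to check the step $\E\sum_t \tl_t^T V_t = \E\sum_t \tl_t^T w_t$, which is immediate here since $\tl_t = \ell_t$ is non-random given the past and $w_t = \E_{V\sim p_t} V$, so that the left side of Theorem~\ref{th:lininf}'s inequality is genuinely $\oR_n$ after taking expectations and choosing $u$ to be the comparator $\argmin_{v\in\cS}\E\sum_t \ell_t^T v \in \cS \subset \Conv(\cS)$.

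I expect the main obstacle to be purely bookkeeping rather than conceptual: getting the sign conventions right for $\psi$, $\psi^{-1}$ and $(\psi^{-1})'$ on the half-line $(-\infty,0)$, and pinning down the constant in the bound $D_F(u,(1,\dots,1)^T) \le \frac{d}{(q-1)\eta}$, which requires a careful (but elementary) one-dimensional estimate of $\int_1^{u_i}\psi^{-1}(s)ds - (u_i-1)\psi^{-1}(1)$ for $u_i\in[0,1]$. Everything else — the projection inequality for $w_1$, the $L_\infty$ bound $w_{i,t}^{1+1/q}\ell_{i,t}^2\le 1$, and the final optimization in $\eta$ — is routine once Theorem~\ref{th:lininf} is in hand.
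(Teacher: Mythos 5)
Your proposal is correct and follows essentially the same route as the paper: apply \eqref{eq:psic} with $\psi(x)=(-\eta x)^{-q}$, bound the curvature term by $\frac{q\eta}{2}\sum_{i}w_{i,t}^{1+1/q}\ell_{i,t}^2\le \frac{q\eta d}{2}$ under the $L_\infty$ assumption, bound $D_F(u,w_1)\le D_F\big(u,(1,\dots,1)^T\big)\le \frac{d}{\eta(q-1)}$ via the Pythagorean inequality and the explicit formula for $D_F$, and optimize $\eta$. The one step you leave as a sketch, the estimate $D_F\big(u,(1,\dots,1)^T\big)\le\frac{d}{\eta(q-1)}$, does go through exactly as you anticipate (using $u_i\le u_i^{1-1/q}$ for $u_i\in[0,1]$), and your momentary sign worry about $(\psi^{-1})'$ is harmless since $\psi$ is increasing on $(-\infty,0)$ so $(\psi^{-1})'(w)=\frac{1}{q\eta}w^{-1-1/q}>0$, giving precisely the coefficient $\frac{q\eta}{2}w^{1+1/q}$ you use.
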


\begin{theorem}[{\sc linpoly}, $L_2$] \label{fi:linpoly:2}
Under the $L_2$ assumption, for {\sc linpoly} with $\tl_t=\ell_t$, 
$\eta=\sqrt{\frac{2d}{q(q-1)n}}$ and
$w_1 = \argmin_{w \in \Conv(\cS)} D_F\big(w,(1,\dots,1)^T\big),$ 
we have
	$$
	\oR_n \le \sqrt{\frac{2qdn}{q-1}}.
	$$
\end{theorem}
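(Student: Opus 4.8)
The plan is to instantiate Theorem~\ref{th:lininf} with the Legendre function corresponding to {\sc linpoly}, namely $\psi(x) = (-\eta x)^{-q}$ on $(-\infty, 0)$ with $\eta > 0$ and $q > 1$, which is an $\omega$-potential with $\omega = 0$, so that $F(u) = -\frac{q}{(q-1)\eta}\sum_{i=1}^d u_i^{(q-1)/q}$ on $\cD = [0,+\infty)^d$. Since in the full information game $\tl_t = \ell_t \ge 0$, the nonnegative-estimate form \eqref{eq:psic} applies, and I only need two ingredients: a bound on the second-order term $\sum_{t=1}^n \sum_{i=1}^d \frac{\ell_{i,t}^2}{2(\psi^{-1})'(w_{i,t})}$, and a bound on $D_F(u, w_1)$ for the stated initialization.

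For the second-order term, I would compute $(\psi^{-1})'$ explicitly: from $\psi(x) = (-\eta x)^{-q}$ one gets $\psi^{-1}(s) = -\frac{1}{\eta} s^{-1/q}$, hence $(\psi^{-1})'(s) = \frac{1}{q\eta} s^{-1-1/q}$, so $\frac{1}{2(\psi^{-1})'(w_{i,t})} = \frac{q\eta}{2} w_{i,t}^{1+1/q}$. Since $w_t \in \Conv(\cS)$ and (w.l.o.g., each coordinate is achievable) $w_{i,t} \le 1$, we have $w_{i,t}^{1+1/q} \le w_{i,t}$, and therefore each per-round contribution is at most $\frac{q\eta}{2}\sum_{i=1}^d \ell_{i,t}^2 w_{i,t}$. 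Under the $L_2$ assumption one controls $\sum_i \ell_{i,t}^2 w_{i,t}$: since $\ell_{i,t} \le \ell_t^T v \le 1$ for any $v \in \cS$ with $v_i = 1$, we have $\ell_{i,t}^2 \le \ell_{i,t}$, so $\sum_i \ell_{i,t}^2 w_{i,t} \le \sum_i \ell_{i,t} w_{i,t} = \ell_t^T w_t = \E_{V\sim p_t}\ell_t^T V \le 1$ (here using that $w_t$ is a convex combination of points of $\cS$ and the $L_2$ bound holds on all of $\cS$). Summing over $t$ bounds the whole second-order term by $\frac{q\eta n}{2}$.

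For the initialization term I would use the explicit formula \eqref{eq:dfpsi}, or more simply argue that because $w_1$ is the Bregman projection of $(1,\dots,1)^T$ onto $\Conv(\cS)$ and $u \in \Conv(\cS)$, the Pythagorean inequality (Lemma 11.3 of \cite{CesLug06}) gives $D_F(u, w_1) \le D_F(u, (1,\dots,1)^T)$. Then I bound $D_F(u,(1,\dots,1)^T) = \frac{q}{(q-1)\eta}\sum_i\big(1 - u_i^{(q-1)/q} - \tfrac{q-1}{q}(u_i - 1)\big)$; using $0 \le u_i \le 1$ and concavity of $x \mapsto x^{(q-1)/q}$ one shows each summand is at most $\frac{1}{q}$ (indeed $1 - u_i^{(q-1)/q} - \frac{q-1}{q}(u_i-1) \le 1 - u_i - \frac{q-1}{q}(u_i - 1) = \frac1q(1-u_i)\le \frac1q$ after checking $u_i^{(q-1)/q}\ge u_i$), so $D_F(u,w_1) \le \frac{d}{(q-1)\eta}$. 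Combining, $\oR_n \le \frac{d}{(q-1)\eta} + \frac{q\eta n}{2}$, and optimizing over $\eta$ — i.e. taking $\eta = \sqrt{\frac{2d}{q(q-1)n}}$ as in the statement — yields $\oR_n \le 2\sqrt{\frac{qdn}{2(q-1)}} = \sqrt{\frac{2qdn}{q-1}}$.

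The only genuinely delicate point is making sure the two monotonicity facts ($w_{i,t}\le 1$ and hence $w_{i,t}^{1+1/q}\le w_{i,t}$, and $u_i^{(q-1)/q}\ge u_i$ for $u_i\in[0,1]$) are legitimate and that the reduction $\sum_i \ell_{i,t}^2 w_{i,t} \le \ell_t^T w_t \le 1$ correctly uses the $L_2$ assumption together with the normalization that every coordinate is active for some $v\in\cS$; everything else is the routine Lagrange-optimization of the $A/\eta + B\eta$ form. The full details are deferred to Appendix~\ref{fi:app}.
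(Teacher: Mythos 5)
Your proof is correct and follows essentially the same route as the paper's: apply \eqref{eq:psic} with $\psi(x)=(-\eta x)^{-q}$, bound the initialization term by $D_F(u,w_1)\le D_F(u,(1,\dots,1)^T)\le \frac{d}{\eta(q-1)}$ via the Pythagorean inequality, and bound the second-order term by $\frac{q\eta n}{2}$ using $w_{i,t}^{1+1/q}\ell_{i,t}^2\le w_{i,t}\ell_{i,t}$ and $\sum_i w_{i,t}\ell_{i,t}=\ell_t^Tw_t\le 1$, exactly as in Appendix~\ref{fi:app}. The only blemish is a sign slip in your displayed formula for $D_F(u,(1,\dots,1)^T)$ (the linear term should read $+\frac{q-1}{q}(u_i-1)$, not $-\frac{q-1}{q}(u_i-1)$), but the algebra you then carry out, yielding $\frac{1}{q}(1-u_i)\le\frac1q$ per coordinate, corresponds to the correct sign, so the stated bound is unaffected.
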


\begin{theorem}[{\sc exp2}, $L_\infty$] \label{fi:exp2:infty}
Under the $L_\infty$ assumption, for {\sc exp2} with $\tl_t=\ell_t$, 
we have
	$$
	\oR_n \le \frac{d\log 2}{\eta} 
		+ \frac{\eta nd^2}2.
	$$
In particular for $\eta=\sqrt{\frac{2\log 2}{nd}}$, we have
	$
	\oR_n \le \sqrt{2d^3 n \log 2}.
	$

\end{theorem}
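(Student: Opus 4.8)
The plan is to apply Theorem~\ref{th:Exp2} to the full information game, where $\tl_t = \ell_t$ and hence $\E \tl_t = \ell_t$ trivially. The first step is to bound the leading term $\log(|\cS|)/\eta$. Since $\cS \subseteq \{0,1\}^d$, we have $|\cS| \le 2^d$, so $\log(|\cS|) \le d \log 2$, which gives the first term $d\log 2 / \eta$ exactly as stated.

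The second step is to control the quadratic term $\frac{\eta}{2} \sum_{t=1}^n \sum_{v\in\cS} \E[p_t(v)(\tl_t^T v)^2 \max(1,\exp(-\eta\tl_t^T v))]$. Here I would use that under the $L_\infty$ assumption the losses are \emph{nonnegative} (recall $\ell_t \in [0,+\infty)^d$), so $\tl_t^T v = \ell_t^T v \ge 0$ and therefore $\exp(-\eta \ell_t^T v) \le 1$, killing the $\max$ entirely. It then remains to bound $(\ell_t^T v)^2$. Under $L_\infty$ we have $\|\ell_t\|_\infty \le 1$, and since $v \in \{0,1\}^d$ we get $\ell_t^T v = \sum_{i : v_i = 1} \ell_{i,t} \le \sum_{i=1}^d v_i \le d$, so $(\ell_t^T v)^2 \le d^2$. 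Plugging this in and using $\sum_{v\in\cS} p_t(v) = 1$ collapses the inner sum to at most $d^2$, and summing over the $n$ rounds yields $\frac{\eta}{2} n d^2$. Combining the two steps gives $\oR_n \le \frac{d\log 2}{\eta} + \frac{\eta n d^2}{2}$.

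For the final claimed bound, I would optimize over $\eta$: the expression $a/\eta + b\eta$ with $a = d\log 2$ and $b = nd^2/2$ is minimized at $\eta = \sqrt{a/b} = \sqrt{2\log 2/(nd)}$, where it takes the value $2\sqrt{ab} = 2\sqrt{(d\log 2)(nd^2/2)} = \sqrt{2 d^3 n \log 2}$. Substituting this choice of $\eta$ gives $\oR_n \le \sqrt{2 d^3 n \log 2}$, completing the proof.

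There is really no serious obstacle here; the only point requiring a little care is the reduction from $\oR_n$ to the bound of Theorem~\ref{th:Exp2}, namely verifying that the identity $\E\sum_t \tl_t^T V_t = \E\sum_t \tl_t^T w_t$ together with unbiasedness lets one replace $w_t$-regret by the true regret $\oR_n$ — but in the full information game this is immediate since $\tl_t = \ell_t$ is deterministic given the past and $w_t = \E_{V\sim p_t} V$, so $\E[\ell_t^T V_t \mid \mathcal{F}_{t-1}] = \ell_t^T w_t$ and likewise for the comparator $u \in \Conv(\cS)$, which one then specializes to $u = \argmin_{v\in\cS}\sum_t \ell_t^T v$. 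One should also note that the $\max(1,\exp(\cdot))$ structure in Theorem~\ref{th:Exp2} is exactly what makes the argument clean under $L_\infty$: nonnegativity of the losses is the key structural fact being exploited.
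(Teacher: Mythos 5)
Your proof is correct and follows exactly the paper's route: the paper's own argument is the one-line observation that $\log(|\cS|)\le d\log 2$, $0\le \tl_t^Tv\le d$ (so the $\max$ term is $1$ and $(\tl_t^Tv)^2\le d^2$), and $\sum_{v\in\cS}p_t(v)=1$, plugged into Theorem~\ref{th:Exp2}, followed by the same optimization of $\eta$. Your additional remarks on the unbiasedness reduction are sound but not needed beyond what Theorem~\ref{th:Exp2} already packages.
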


From Theorem \ref{lb:expinfty}, the above upper bound is tight, and consequently
there exists $\cS$ for which the algorithm {\sc exp2} is not minimax optimal in the full information game under the
$L_\infty$ assumption.

\begin{theorem}[{\sc exp2}, $L_2$] \label{fi:exp2:2}
Under the $L_2$ assumption, for {\sc exp2} with $\tl_t=\ell_t$, 
we have
	$$
	\oR_n \le \frac{d\log 2}{\eta} 
		+ \frac{\eta n}2.
	$$
In particular for $\eta=\sqrt{\frac{2d\log 2}{n}}$, we have
	$
	\oR_n \le \sqrt{2d n \log 2}.
	$

\end{theorem}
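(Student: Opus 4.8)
The plan is to apply Theorem \ref{th:Exp2} directly: since {\sc exp2} is run here with $\tl_t=\ell_t$, the unbiasedness hypothesis $\E\tl_t=\ell_t$ holds trivially, so Theorem \ref{th:Exp2} already yields a bound on $\oR_n$, and it only remains to estimate the two terms on its right-hand side. For the first term I would simply use $\cS\subset\{0,1\}^d$, so $|\cS|\le 2^d$ and $\log(|\cS|)\le d\log 2$.

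For the variance term $\frac{\eta}{2}\sum_{t=1}^n\sum_{v\in\cS}\E\big[p_t(v)(\tl_t^Tv)^2\max(1,\exp(-\eta\tl_t^Tv))\big]$, the key observation is that losses are nonnegative, hence $\tl_t^Tv=\ell_t^Tv\ge 0$ for every $v\in\cS$, so $\exp(-\eta\tl_t^Tv)\le 1$ and the maximum equals $1$. Then I would invoke the $L_2$ assumption in the sharp form $\ell_t^Tv\le 1$ for all $v\in\cS$, which gives $(\tl_t^Tv)^2=(\ell_t^Tv)^2\le 1$; consequently $\sum_{v\in\cS}p_t(v)(\tl_t^Tv)^2\le\sum_{v\in\cS}p_t(v)=1$ for each $t$, and summing over $t$ bounds the whole term by $\frac{\eta n}{2}$. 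Combining the two estimates yields $\oR_n\le\frac{d\log 2}{\eta}+\frac{\eta n}{2}$.

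Finally I would optimize the tuning parameter: the map $\eta\mapsto\frac{d\log 2}{\eta}+\frac{\eta n}{2}$ is convex and minimized where its two terms coincide, i.e. at $\eta=\sqrt{2d\log 2/n}$, where each term equals $\sqrt{\tfrac{dn\log 2}{2}}$ and the sum equals $\sqrt{2dn\log 2}$, which is the claimed bound.

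There is no genuine obstacle here; the one point worth flagging is that the collapse of the $\max(1,\exp(-\eta\tl_t^Tv))$ factor to $1$ is exactly what uses the nonnegativity of the losses (it would fail for signed or importance-weighted estimates), and that the $L_2$ assumption is used in the form $(\ell_t^Tv)^2\le\ell_t^Tv\le 1$ rather than merely $\|\ell_t\|_\infty\le 1$ — this is precisely what produces the $\sqrt{dn}$-type rate here instead of the $d^{3/2}\sqrt n$ rate of Theorem \ref{fi:exp2:infty} under the $L_\infty$ assumption.
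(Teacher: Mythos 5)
Your proof is correct and follows exactly the paper's own argument: apply Theorem \ref{th:Exp2} with $\log(|\cS|)\le d\log 2$, use $0\le\tl_t^Tv\le 1$ (nonnegativity collapses the $\max$ to $1$ and the $L_2$ assumption bounds the square by $1$), sum $p_t(v)$ to $1$, and optimize $\eta$. Nothing to add.
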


\section{Semi-Bandit Game} \label{sec:semibandit}

This section details the upper bounds of the forecasters {\sc exp2}, {\sc linexp} and
{\sc linpoly} under the $L_2$ and $L_\infty$ assumptions for the semi-bandit game. 
These bounds are gathered in Table \ref{table:2} (page \pageref{table:2}). The proofs can be found in Appendix \ref{sb:app}.
Up to the numerical constant, the result concerning ({\sc exp2}, $L_\infty$) appeared in 
\cite{GLLO07} in the context of the online shortest path problem. 
\cite{UNK10} and \cite{KRS10} studied the semi-bandit problem under the $L_\infty$ assumption
for action sets of the form $\cS = \big\{ v\in\{0,1\}^d : \sum_{i=1}^d v_i = k \big\}$ for some value $k$. Their common algorithm corresponds to {\sc linexp} and the bounds are of order $\sqrt{k n d \log(d/k)}$. Our upper bounds for the regret of {\sc linexp} extend these results to more general sets of arms and to the $L_2$ assumption. 

\begin{theorem}[{\sc linexp}, $L_\infty$] \label{sb:linexp:infty}
Under the $L_\infty$ assumption, for {\sc linexp} with 
$\tl_{i,t}=\ell_{i,t} \frac{V_{i,t}}{w_{i,t}}$, $\eta=\sqrt{2/n}$ and
$w_1 = \argmin_{w \in \Conv(\cS)} D_F\big(w,(1,\dots,1)^T\big),$
we have
	$$
	\oR_n \le d\sqrt{2n}.
	$$
\end{theorem}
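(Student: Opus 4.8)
The plan is to recognise {\sc linexp} as the {\sc lininf} instance attached to the $0$-potential $\psi(x)=\exp(\eta x)$, invoke Theorem~\ref{th:lininf}, pass to expectations using the unbiasedness of the semi-bandit estimate, and then bound the two terms that appear.

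First I would specialise \eqref{eq:psic}. For $\psi(x)=e^{\eta x}$ we have $\psi^{-1}(s)=\tfrac1\eta\log s$, so $(\psi^{-1})'(s)=\tfrac1{\eta s}$, and the estimates $\tl_{i,t}=\ell_{i,t}V_{i,t}/w_{i,t}$ are nonnegative (use the convention $\tl_{i,t}=0$ on $\{w_{i,t}=0\}$, which is harmless since then $V_{i,t}=0$ almost surely). With $\cD=[0,+\infty)^d$, Theorem~\ref{th:lininf} gives, for every $u\in\Conv(\cS)\cap\cD$,
\[
  \sum_{t=1}^n \tl_t^T w_t-\sum_{t=1}^n \tl_t^T u \;\le\; D_F(u,w_1)+\frac\eta2\sum_{t=1}^n\sum_{i=1}^d w_{i,t}\,\tl_{i,t}^2 .
\]
Next I would take expectations. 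Let $\cF_{t-1}$ be the $\sigma$-field generated by the forecaster's draws before round $t$; the adversary's vector $\ell_t$ is $\cF_{t-1}$-measurable. In the semi-bandit game $w_{i,t}=\E_{V\sim p_t}V_i=\sum_{v\in\cS:\,v_i=1}p_t(v)=\P(V_{i,t}=1\mid\cF_{t-1})$, hence $\E[\tl_{i,t}\mid\cF_{t-1}]=\ell_{i,t}$, and therefore $\E[\tl_t^Tw_t\mid\cF_{t-1}]=\ell_t^Tw_t=\E[\ell_t^TV_t\mid\cF_{t-1}]$. Taking $u=v^\star\in\argmin_{v\in\cS}\E\sum_{t=1}^n\ell_t^Tv$, which is a fixed vector belonging to $\Conv(\cS)\cap\cD$, and using $\E\sum_t\tl_t^Tv^\star=\E\sum_t\ell_t^Tv^\star$, the inequality above becomes
\[
  \oR_n \;\le\; \E\big[D_F(v^\star,w_1)\big]+\frac\eta2\,\E\sum_{t=1}^n\sum_{i=1}^d w_{i,t}\,\tl_{i,t}^2 .
\]

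It then remains to bound the two terms. For the first, \eqref{eq:dfpsi} with $\psi^{-1}(s)=\tfrac1\eta\log s$ gives $D_F(u,v)=\tfrac1\eta\sum_{i=1}^d\big(u_i\log(u_i/v_i)-u_i+v_i\big)$; since $w_1$ is the Bregman projection of $(1,\dots,1)^T\in\text{Int}(\cD)$ onto $\Conv(\cS)$ (and lies in $\text{Int}(\cD)$ by the Legendre property), the Pythagorean inequality (Lemma~11.3 of \cite{CesLug06}) yields $D_F(v^\star,w_1)\le D_F\big(v^\star,(1,\dots,1)^T\big)=\tfrac1\eta\,\#\{i:v^\star_i=0\}\le d/\eta$, using $v^\star_i\in\{0,1\}$. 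For the second term, $w_{i,t}\,\tl_{i,t}^2=\ell_{i,t}^2 V_{i,t}/w_{i,t}$, so $\E[w_{i,t}\tl_{i,t}^2\mid\cF_{t-1}]=\ell_{i,t}^2\le 1$ under the $L_\infty$ assumption, and the double sum has expectation at most $nd$. Hence $\oR_n\le d/\eta+\eta nd/2$; optimising, $\eta=\sqrt{2/n}$ gives $\oR_n\le d\sqrt{2n}$.

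All of this is routine arithmetic on top of Theorem~\ref{th:lininf}; the only points deserving a line of care are the measurability bookkeeping (which works because $\ell_t$ is chosen before $V_t$, so conditional unbiasedness survives an adaptive adversary), the convention for $\tl_{i,t}$ on $\{w_{i,t}=0\}$, and the fact that the Pythagorean step still applies when $v^\star$ sits on $\partial\cD$. The single informative step is that the weight $w_{i,t}$ appearing in \eqref{eq:psic} cancels exactly one of the two $1/w_{i,t}$ factors hidden in $\tl_{i,t}^2$, leaving a conditionally bounded quantity --- which is precisely why semi-bandit feedback costs only a constant factor here.
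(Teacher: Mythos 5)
Your proposal is correct and follows essentially the same route as the paper: both apply \eqref{eq:psic} of Theorem~\ref{th:lininf}, bound $D_F(u,w_1)\le d/\eta$ via the Pythagorean inequality against $(1,\dots,1)^T$, observe that $w_{i,t}\tl_{i,t}^2=\ell_{i,t}^2V_{i,t}/w_{i,t}$ has conditional expectation $\ell_{i,t}^2\le 1$, and optimize $\eta$. Your treatment of the measurability, the $\{w_{i,t}=0\}$ convention, and the boundary case in the Pythagorean step is more explicit than the paper's, but the argument is the same.
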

Since the $L_2$ assumption implies the $L_\infty$ assumption, we also have $\oR_n \le d\sqrt{2n}$ under the $L_2$ assumption. 

Let us now detail how {\sc linexp} behaves for almost symmetric action sets as defined below.
\begin{definition}
The set $\cS \subset\{0,1\}^d$ is called almost symmetric if for some $k\in\set{1}{d}$,
$\cS\subset \big\{ v\in\{0,1\}^d : \sum_{i=1}^d v_i \le k \big\}$ and 
$\Conv(\cS)\cap \big[\frac{k}{2d};1\big]^d\neq\emptyset$.
The integer $k$ is called the order of the symmetry.
\end{definition}
The set $\cS = \big\{ v\in\{0,1\}^d : \sum_{i=1}^d v_i = k \big\}$
considered in \cite{UNK10} and \cite{KRS10} is 
a particular almost symmetric set.

\begin{theorem}[{\sc linexp}, almost symmetric $\cS$] \label{sb:linexp:infty2}
Let $\cS$ be an almost symmetric set of order $k\in\set{1}{d}$.
Consider {\sc linexp} with 
$\tl_{i,t}=\ell_{i,t} \frac{V_{i,t}}{w_{i,t}}$
and 
$w_1 = \underset{w \in \Conv(\cS)}{\argmin} D_F\big(w,(\frac{k}d,\dots,\frac{k}d)^T\big).$
Let $\cL=\max\big(\log\big(\frac{d}k\big),1\big)$.\vspace{-.1cm}
\begin{itemize}
\item
Under the $L_\infty$ assumption, taking $\eta= \sqrt{\frac{2k\cL}{nd}}$, we have
	$
	\oR_n \le \sqrt{2knd\cL}.
	$
\item
Under the $L_2$ assumption, taking $\eta=k\sqrt{\frac{\cL}{nd}} $, we have
	$
	\oR_n \le 2\sqrt{nd\cL}.
	$
\end{itemize}
\end{theorem}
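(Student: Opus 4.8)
The plan is to apply Theorem \ref{th:lininf} with the exponential potential $\psi(x) = \exp(\eta x)$ (so that $\omega = 0$, $(\psi^{-1})'(w) = 1/(\eta w)$, and we are in the {\sc linexp} case), using the nonnegative-estimate bound \eqref{eq:psic}. Since the semi-bandit estimates $\tl_{i,t} = \ell_{i,t} V_{i,t}/w_{i,t}$ are nonnegative, \eqref{eq:psic} gives
\begin{equation*}
\sum_{t=1}^n \tl_t^T w_t - \sum_{t=1}^n \tl_t^T u \le D_F(u,w_1) + \frac{\eta}{2}\sum_{t=1}^n \sum_{i=1}^d w_{i,t}\,\tl_{i,t}^2 .
\end{equation*}
First I would pass to expectations: using $\E_{V_t \sim p_t} V_t = w_t$ together with the fact that $\tl_t$ is an unbiased estimate of $\ell_t$ (coordinatewise, since $\sum_{v\in\cS: v_i=1} p_t(v) = w_{i,t}$ in the {\sc linexp} parametrization, so $\E_t \tl_{i,t} = \ell_{i,t}$), the left-hand side in expectation equals $\oR_n$ after choosing $u$ to be (a point of $\Conv(\cS)$ representing) the best fixed action. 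The key variance computation is $\E_t[w_{i,t}\tl_{i,t}^2] = w_{i,t}\,\ell_{i,t}^2\,\E_t[V_{i,t}/w_{i,t}^2] = \ell_{i,t}^2$, hence $\E\sum_i w_{i,t}\tl_{i,t}^2 \le \sum_i \ell_{i,t}^2$. Under the $L_\infty$ assumption $\ell_{i,t}\le 1$, and since $u\in\Conv(\cS)$ with $\cS$ almost symmetric of order $k$ we only ever need $d$ coordinates but the relevant action has at most $k$ ones; more to the point $\sum_i \ell_{i,t}^2 \le \sum_i \ell_{i,t} \cdot 1$, and one bounds $\sum_i \ell_{i,t} w_{i,t}$-type quantities — actually the clean bound is simply $\sum_i \ell_{i,t}^2 \le d$ in general, but for almost-symmetric sets the better route is to keep $\E\sum_i w_{i,t}\tl_{i,t}^2 = \sum_i \ell_{i,t}^2$ and note this is $\le d$; I would instead use the sharper $\le k$ bound by exploiting that the loss only "hits" the support — wait, that is not valid since the adversary controls all $d$ coordinates. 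So the $L_\infty$ variance term is bounded by $nd$.

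Next I would bound the divergence term $D_F(u,w_1)$. By the choice $w_1 = \argmin_{w\in\Conv(\cS)} D_F(w,(k/d,\dots,k/d)^T)$ and the generalized Pythagorean inequality (as in the proof of Theorem \ref{th:cleb}), $D_F(u,w_1) \le D_F\big(u,(k/d,\dots,k/d)^T\big)$ for any $u\in\Conv(\cS)$. Using the explicit formula \eqref{eq:dfpsi} with $\psi^{-1}(s) = \frac{1}{\eta}\log s$, one computes
\begin{equation*}
D_F\big(u,(k/d,\dots,k/d)^T\big) = \frac{1}{\eta}\sum_{i=1}^d\Big( u_i \log\frac{u_i d}{k} - u_i + \frac{k}{d}\Big) \le \frac{1}{\eta}\sum_{i=1}^d u_i \log\frac{d}{k} \le \frac{k}{\eta}\log\frac{d}{k},
\end{equation*}
where in the last steps I use $-u_i + k/d \le$ (the $\log$ terms absorb the rest after noting $\sum u_i \le k$ and $u_i \le 1$, with $u_i\log(u_i d/k) \le u_i \log(d/k)$ since $u_i\le 1$); replacing $\log(d/k)$ by $\cL = \max(\log(d/k),1)$ handles the regime $k$ close to $d$. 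So $D_F(u,w_1) \le k\cL/\eta$.

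Combining, under $L_\infty$ we get $\oR_n \le \frac{k\cL}{\eta} + \frac{\eta}{2} n d$, and optimizing over $\eta$ (taking $\eta = \sqrt{2k\cL/(nd)}$) yields $\oR_n \le \sqrt{2knd\cL}$. For the $L_2$ case, the only change is in the variance term: the $L_2$ assumption $\ell_t^T v \le 1$ for $v\in\cS$ lets one bound $\sum_i \ell_{i,t}^2 w_{i,t}$ (in expectation, $\sum_i \ell_{i,t}^2$) more carefully — here one uses $\ell_{i,t}\le 1$ (which $L_2$ implies, under the mild normalization noted in the introduction) together with the structural fact that $w_t\in\Conv(\cS)$ is almost symmetric to write $\sum_i \ell_{i,t}^2 \le \max_i \ell_{i,t}\cdot \sum_i \ell_{i,t}$; bounding $\sum_i\ell_{i,t}$ via $\ell_t^T v\le 1$ on the relevant $v$ and the $[k/(2d),1]^d$ condition gives $\sum_i \ell_{i,t} \le 2d/k$, hence $\sum_i\ell_{i,t}^2 \le 2d/k$ and the variance term becomes $\le \eta n d/k$. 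Optimizing $\frac{k\cL}{\eta} + \frac{\eta n d}{k}$ with $\eta = k\sqrt{\cL/(nd)}$ gives $\oR_n \le 2\sqrt{nd\cL}$.

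The main obstacle I expect is the $L_2$ variance bound: getting the clean factor $2d/k$ requires using the almost-symmetry condition $\Conv(\cS)\cap[k/(2d),1]^d \ne \emptyset$ in the right way to convert the $L_2$ constraint $\ell_t^T v \le 1$ (which only controls a weighted combination of the $\ell_{i,t}$) into a bound on the plain sum $\sum_i \ell_{i,t}$ — the point being that if some $v^\star\in\Conv(\cS)$ has all coordinates $\ge k/(2d)$, then $1 \ge \ell_t^T v^\star \ge \frac{k}{2d}\sum_i \ell_{i,t}$. Everything else is a routine divergence computation plus the standard unbiasedness bookkeeping (care is needed that $u$ on the right-hand side of \eqref{eq:psic} can be taken as the expected best action, and that for partial-information games the regret is defined against $\min_v \E\sum_t \ell_t^T v$, which is exactly what the argument delivers).
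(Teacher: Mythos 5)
Your proof is correct and follows essentially the same route as the paper's: Theorem \ref{th:lininf} with the exponential potential, the unbiasedness computation reducing the variance term to $\sum_{i}\ell_{i,t}^2\le\sum_i\ell_{i,t}$ (bounded by $d$ under $L_\infty$ and by $2d/k$ under $L_2$ via the witness $z\in\Conv(\cS)\cap[\tfrac{k}{2d},1]^d$), the Pythagorean reduction of $D_F(u,w_1)$ to $D_F\big(u,(\tfrac{k}{d},\dots,\tfrac{k}{d})^T\big)$, and the same optimization over $\eta$. The one step to tighten is the divergence bound: the intermediate inequality $\frac1\eta\sum_i\big(u_i\log\frac{u_id}{k}-u_i+\frac{k}{d}\big)\le\frac1\eta\sum_iu_i\log\frac{d}{k}$ silently drops the nonnegative remainder $k-\sum_iu_i$ (which can be close to $k$ even when $k\ll d$, e.g.\ for a sparse comparator $u$), yet your final bound $k\cL/\eta$ remains valid precisely because $\cL\ge1$ absorbs this remainder --- this, and not only the regime $k$ close to $d$, is why the paper sets $\cL=\max(\log(d/k),1)$ and argues via the two extreme cases $u=(1,\dots,1,0,\dots,0)^T$ and $u=(0,\dots,0)^T$.
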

In particular, it means that under the $L_2$ assumption, there is a gain in the regret bound of a factor $\sqrt{d/\cL}$ when the set of actions is an almost symmetric set of order $k$.

\begin{theorem}[{\sc linpoly}, $L_\infty$] \label{sb:linpoly:infty}
Under the $L_\infty$ assumption, for {\sc linpoly} with $\tl_{i,t}=\ell_{i,t} \frac{V_{i,t}}{w_{i,t}}$, 
$\eta=\sqrt{\frac2{q(q-1)n}}$ and
$w_1 = \argmin_{w \in \Conv(\cS)} D_F\big(w,(1,\dots,1)^T\big),$
we have
	$$
	\oR_n \le d\sqrt{\frac{2qn}{q-1}}.
	$$
\end{theorem}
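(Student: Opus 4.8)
The plan is to apply the general bound \eqref{eq:psic} of Theorem \ref{th:lininf} with the potential $\psi(x) = (-\eta x)^{-q}$, since {\sc linpoly} is exactly {\sc lininf} with this choice. The loss estimates $\tl_{i,t} = \ell_{i,t} V_{i,t}/w_{i,t}$ are nonnegative, so \eqref{eq:psic} applies directly. First I would record the two ingredients we need: the expression for $D_F(u,w_1)$ coming from \eqref{eq:dfpsi} with the prescribed initialization $w_1 = \argmin_{w\in\Conv(\cS)} D_F(w,(1,\dots,1)^T)$, and the value of $(\psi^{-1})'(w_{i,t})$. Since $\psi^{-1}(s) = -\frac1\eta s^{-1/q}$, we get $(\psi^{-1})'(s) = \frac1{\eta q} s^{-1-1/q}$, so the per-step term in \eqref{eq:psic} is $\frac{\eta q}{2}\sum_i \tl_{i,t}^2 w_{i,t}^{1+1/q}$; since $w_{i,t}\le 1$ (the coordinates of points in $\Conv(\cS)\subset[0,1]^d$), this is at most $\frac{\eta q}{2}\sum_i \tl_{i,t}^2 w_{i,t}$.

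Next I would take expectations and use the unbiasedness of the semi-bandit estimator. Writing $\E_t$ for the conditional expectation given the past, $\E_t[\tl_{i,t}^2 w_{i,t}] = \ell_{i,t}^2 w_{i,t} \E_t[V_{i,t}/w_{i,t}^2]\cdot w_{i,t}$; more directly, $\E_t[\tl_{i,t}^2] = \ell_{i,t}^2/w_{i,t}$ because $V_{i,t}$ is Bernoulli with mean $w_{i,t}$, so $\E_t[\tl_{i,t}^2 w_{i,t}] = \ell_{i,t}^2 \le 1$ under the $L_\infty$ assumption. Hence the sum of the per-step terms is bounded in expectation by $\frac{\eta q}{2}\, nd$. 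For the initialization term, the standard computation (as in the full information case, Theorem \ref{fi:linpoly:infty}) gives $D_F(u,w_1)\le D_F(u,(1,\dots,1)^T) \le \frac{q}{(q-1)\eta}\sum_{i=1}^d(1 - u_i^{(q-1)/q}) \le \frac{d}{(q-1)\eta}$, using $1 - x^{(q-1)/q}\le \frac{q-1}{q}(1-x)\le \frac{q-1}{q}$ for $x\in[0,1]$ by concavity. Combining, and using that $\E\sum_t \tl_t^T w_t = \E\sum_t \tl_t^T V_t = \E\sum_t \ell_t^T V_t$ and $\E\sum_t \tl_t^T u = \sum_t \ell_t^T u$ for the deterministic comparison point $u = \argmin_{v\in\cS}\sum_t \ell_t^T v \in \Conv(\cS)\cap\cD$, we obtain $\oR_n \le \frac{d}{(q-1)\eta} + \frac{\eta q n d}{2}$. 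Optimizing over $\eta$, or simply plugging in $\eta = \sqrt{\frac{2}{q(q-1)n}}$, yields $\oR_n \le d\sqrt{\frac{2qn}{q-1}}$.

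The only slightly delicate points, rather than genuine obstacles, are: (i) checking that the chosen $\cD = [\omega,+\infty)^d$ with $\omega = 0$ actually contains $\Conv(\cS)$ and the initialization point, and that the Legendre conditions and the derivation of $D_F(u,w_1)\le D_F(u,(1,\dots,1)^T)$ go through (this is because $(1,\dots,1)^T$ is the projection-free "center" and $w_1$ is its Bregman projection onto $\Conv(\cS)$, so the Pythagorean inequality gives the comparison); and (ii) justifying the unbiasedness identity $\E_t[\tl_t^T V_t] = \E_t[\tl_t^T w_t]$ together with the bound on $\E_t[\tl_{i,t}^2]$, which requires $w_{i,t} > 0$ — true since $w_1$ has strictly positive coordinates and the gradient step preserves positivity for this $\psi$. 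I expect the main bookkeeping effort, such as it is, to be in (i), but it is identical to the full-information argument already used for Theorem \ref{fi:linpoly:infty}, so it can be cited rather than repeated.
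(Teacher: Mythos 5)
Your overall route is the same as the paper's: apply \eqref{eq:psic} of Theorem \ref{th:lininf} with $\psi(x)=(-\eta x)^{-q}$, so the per-round term is $\frac{\eta q}{2}\sum_i \tl_{i,t}^2 w_{i,t}^{1+1/q}$, take conditional expectations using $\E_t[\tl_{i,t}^2]=\ell_{i,t}^2/w_{i,t}$, bound the result by $d$ per round, add the divergence term $d/(\eta(q-1))$, and plug in $\eta$. Your treatment of the variance term (bounding $w_{i,t}^{1+1/q}$ by $w_{i,t}$ and then $\E_t[\tl_{i,t}^2 w_{i,t}]=\ell_{i,t}^2\le 1$) is equivalent to the paper's $\E_t\big[\tl_{i,t}^2 w_{i,t}^{1+1/q}\big]=w_{i,t}^{1/q}\ell_{i,t}^2\le 1$ and is fine.

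There is, however, a concrete error in your justification of $D_F(u,w_1)\le \frac{d}{(q-1)\eta}$. The inequality you invoke, $1-x^{(q-1)/q}\le \frac{q-1}{q}(1-x)$ for $x\in[0,1]$, is the \emph{reverse} of what concavity gives: since $x\mapsto x^{(q-1)/q}$ is concave with tangent $1+\frac{q-1}{q}(x-1)$ at $x=1$, one has $x^{(q-1)/q}\le 1-\frac{q-1}{q}(1-x)$, i.e.\ $1-x^{(q-1)/q}\ge \frac{q-1}{q}(1-x)$; at $x=0$ your inequality reads $1\le \frac{q-1}{q}$, which is false. Moreover, even granting it, your chain would produce $\frac{q}{(q-1)\eta}\cdot d\cdot\frac{q-1}{q}=\frac{d}{\eta}$, not $\frac{d}{(q-1)\eta}$, and the honest bound $1-x^{(q-1)/q}\le 1$ would give $\frac{qd}{(q-1)\eta}$, costing an extra $\sqrt{q}$ in the final regret. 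The target bound is nonetheless correct and is obtained directly from \eqref{eq:dfpsi}: with $v=(1,\dots,1)^T$,
\begin{equation*}
D_F\big(u,(1,\dots,1)^T\big)=\frac{1}{\eta(q-1)}\sum_{i=1}^d\Big(1-q\,u_i^{1-\frac1q}+(q-1)u_i\Big),
\end{equation*}
and each summand is a decreasing function of $u_i$ on $[0,1]$ (its derivative is $(q-1)(1-u_i^{-1/q})\le 0$), hence at most its value $1$ at $u_i=0$; the Pythagorean inequality then transfers this to $D_F(u,w_1)$. With that repair your argument coincides with the paper's proof.
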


\begin{theorem}[{\sc linpoly}, $L_2$] \label{sb:linpoly:2}
Under the $L_2$ assumption, for {\sc linpoly} with $\tl_{i,t}=\ell_{i,t} \frac{V_{i,t}}{w_{i,t}}$, 
$\eta=\sqrt{\frac{2d^{\frac1q}}{q(q-1)n}}$ and
$w_1 = \argmin_{w \in \Conv(\cS)} D_F\big(w,(1,\dots,1)^T\big),$
we have
	$$
	\oR_n \le \sqrt{\frac{2qnd}{q-1}d^{1-\frac1q}}.
	$$
In particular, for $q=1+(\log d)^{-1}$,
we have
	$
	\oR_n \le \sqrt{2nde\log(ed)}.
	$ 
%where $e$ is the exponential number.
\end{theorem}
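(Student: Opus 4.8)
The plan is to apply Theorem~\ref{th:lininf} with the potential $\psi(x)=(-\eta x)^{-q}$, which is an $\omega$-potential with $\omega=0$ and whose associated Legendre function is exactly the one defining {\sc linpoly} (so $\cD=[0,+\infty)^d\supset\cS$), and to exploit that the semi-bandit estimates $\tl_{i,t}=\ell_{i,t}V_{i,t}/w_{i,t}$ are nonnegative, so that the sharper bound~\eqref{eq:psic} applies. Since $\psi^{-1}(s)=-\eta^{-1}s^{-1/q}$, we get $(\psi^{-1})'(s)=\tfrac{1}{\eta q}s^{-(q+1)/q}$, hence $\tfrac{1}{2(\psi^{-1})'(w_{i,t})}=\tfrac{\eta q}{2}w_{i,t}^{(q+1)/q}$ and~\eqref{eq:psic} reads $\sum_{t=1}^n\tl_t^Tw_t-\sum_{t=1}^n\tl_t^Tu\le D_F(u,w_1)+\tfrac{\eta q}{2}\sum_{t=1}^n\sum_{i=1}^d\tl_{i,t}^2\,w_{i,t}^{(q+1)/q}$. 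I would take the fixed comparator $u=u^\star:=\argmin_{v\in\cS}\E\sum_{t=1}^n\ell_t^Tv\in\cS\subset\Conv(\cS)\cap\cD$; using $\E[\tl_t\mid\cF_{t-1}]=\ell_t$ together with $w_t=\E[V_t\mid\cF_{t-1}]$, the expectation of the left-hand side equals $\oR_n$.

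Next I would bound the second-order term by conditioning on $\cF_{t-1}$: since $\tl_{i,t}^2=\ell_{i,t}^2 V_{i,t}/w_{i,t}^2$ and $\E[V_{i,t}\mid\cF_{t-1}]=w_{i,t}$, one has $\E[\tl_{i,t}^2 w_{i,t}^{(q+1)/q}\mid\cF_{t-1}]=\ell_{i,t}^2 w_{i,t}^{1/q}$, so it suffices to show $\sum_{i=1}^d\ell_{i,t}^2 w_{i,t}^{1/q}\le d^{1-1/q}$ for every loss vector allowed by the $L_2$ assumption. Here the $L_2$ assumption is used twice: it implies $\|\ell_t\|_\infty\le1$ (so $\ell_{i,t}^2\le\ell_{i,t}$ and $\ell_{i,t}^{1-1/q}\le1$), and $\ell_t^Tw_t=\E_{V\sim p_t}[\ell_t^TV]\le1$. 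Then $\ell_{i,t}^2 w_{i,t}^{1/q}\le(\ell_{i,t}w_{i,t})^{1/q}\ell_{i,t}^{1-1/q}\le(\ell_{i,t}w_{i,t})^{1/q}$, and H\"older's inequality (equivalently concavity of $x\mapsto x^{1/q}$) gives $\sum_i(\ell_{i,t}w_{i,t})^{1/q}\le d^{1-1/q}\big(\sum_i\ell_{i,t}w_{i,t}\big)^{1/q}\le d^{1-1/q}$. Summing over $t$, the expected second-order term is at most $\tfrac{\eta q n}{2}d^{1-1/q}$.

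For the divergence term, since $w_1$ is the $D_F$-projection of $(1,\dots,1)^T$ onto $\Conv(\cS)$, the Pythagorean inequality gives $D_F(u^\star,w_1)\le D_F(u^\star,(1,\dots,1)^T)$. Plugging $v_i=1$ into~\eqref{eq:dfpsi}, the $i$-th summand equals $h(u^\star_i)$ with $h(x)=-\tfrac{q}{(q-1)\eta}\big(x^{(q-1)/q}-1\big)+\tfrac{x-1}{\eta}$; on $[0,1]$ we have $h'(x)=\tfrac1\eta\big(1-x^{-1/q}\big)\le0$ and $h(1)=0$, so $h(x)\le h(0)=\tfrac1{(q-1)\eta}$, whence $D_F(u^\star,w_1)\le\tfrac{d}{(q-1)\eta}$.

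Combining the two bounds, $\oR_n\le\tfrac{d}{(q-1)\eta}+\tfrac{\eta q n}{2}d^{1-1/q}$, and minimizing over $\eta>0$ gives exactly $\eta=\sqrt{2d^{1/q}/(q(q-1)n)}$ and $\oR_n\le 2\sqrt{\tfrac{d}{q-1}\cdot\tfrac{qn}{2}d^{1-1/q}}=\sqrt{\tfrac{2qnd}{q-1}d^{1-1/q}}$, which is the claimed bound. For the final statement, taking $q=1+(\log d)^{-1}$ yields $\tfrac{q}{q-1}=q\log d=\log(ed)$ and $d^{1-1/q}=d^{1/(\log d+1)}=e^{(\log d)/(\log d+1)}\le e$, so $\oR_n\le\sqrt{2nde\log(ed)}$. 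The one step I expect to require an actual idea rather than routine calculation is the pathwise bound $\sum_i\ell_{i,t}^2 w_{i,t}^{1/q}\le d^{1-1/q}$: extracting the precise exponent $1-1/q$ (rather than a cruder factor of $d$) is exactly where the $L_2$ geometry and the polynomial potential interact, and it is what produces the near-optimal rate once balanced against the $d/((q-1)\eta)$ penalty; the remaining computations (the conjugate, the Bregman divergence, the optimization in $\eta$) are mechanical.
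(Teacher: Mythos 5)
Your proof is correct and follows essentially the same route as the paper's: apply \eqref{eq:psic} of Theorem~\ref{th:lininf}, bound the divergence by $d/(\eta(q-1))$, use $\E[w_{i,t}^{1+1/q}\tl_{i,t}^2\mid\cF_{t-1}]=w_{i,t}^{1/q}\ell_{i,t}^2\le(w_{i,t}\ell_{i,t})^{1/q}$ and H\"older together with $\ell_t^Tw_t\le1$ to get the $d^{1-1/q}$ factor, then optimize $\eta$. The only difference is that you spell out the Pythagorean step and the explicit monotonicity argument for the divergence bound, which the paper leaves implicit.
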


\begin{theorem}[{\sc exp2}, $L_\infty$] \label{sb:exp2:infty}
Under the $L_\infty$ assumption, for the {\sc exp2} forecaster described in Figure \ref{fig:Exp2} using $\tl_{i,t}=\ell_{i,t}\frac{V_{i,t}}{w_{i,t}}$, 
we have
	$$
	\oR_n \le \frac{d\log 2}{\eta} 
		+ \frac{\eta nd^2}2.
	$$
In particular for $\eta=\sqrt{\frac{2\log 2}{nd}}$, we have
	$
	\oR_n \le \sqrt{2d^3 n \log 2}.
	$
\end{theorem}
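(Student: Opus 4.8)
The plan is to instantiate the general {\sc exp2} regret bound of Theorem \ref{th:Exp2} with the semi-bandit loss estimates $\tl_{i,t}=\ell_{i,t}\frac{V_{i,t}}{w_{i,t}}$ and then bound the resulting sum. First I would check that the estimates are unbiased: conditioning on the past, $\E[V_{i,t}]=\P(V_{i,t}=1)=\sum_{v\in\cS:v_i=1}p_t(v)=w_{i,t}$, so $\E[\tl_{i,t}\mid \text{past}]=\ell_{i,t}$, which is exactly the hypothesis $\E\tl_t=\ell_t$ needed to apply Theorem \ref{th:Exp2} (and which also legitimizes replacing $\E\sum \tl_t^TV_t$ by $\E\sum \ell_t^TV_t$ and $\E\sum\tl_t^Tu$ by $\E\sum\ell_t^Tu$ in the definition of $\oR_n$). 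Since $|\cS|\le 2^d$, the leading term $\frac{\log|\cS|}{\eta}$ is at most $\frac{d\log 2}{\eta}$.

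The bulk of the work is the second term. Because $\tl_t^Tv\ge 0$ for every $v\in\cS$ (losses are nonnegative), the factor $\max(1,\exp(-\eta\tl_t^Tv))$ equals $1$, so I only need to control $\sum_t\sum_{v\in\cS}\E[p_t(v)(\tl_t^Tv)^2]$. Writing $\tl_t^Tv=\sum_{i:v_i=1}\tl_{i,t}$ and using $\ell_{i,t}V_{i,t}\le V_{i,t}$ under the $L_\infty$ assumption, I would bound $(\tl_t^Tv)^2=\big(\sum_{i:v_i=1}\tl_{i,t}\big)^2$ by Cauchy--Schwarz as $\big(\sum_{i:v_i=1}1\big)\big(\sum_{i:v_i=1}\tl_{i,t}^2\big)\le d\sum_{i=1}^d \tl_{i,t}^2 v_i$. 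Taking the expectation over $V_t\sim p_t$ of $\sum_{v}p_t(v)\cdot d\sum_i\tl_{i,t}^2 v_i = d\sum_i \tl_{i,t}^2 w_{i,t}$, and then noting $\tl_{i,t}^2 = \ell_{i,t}^2\frac{V_{i,t}}{w_{i,t}^2}\le \frac{V_{i,t}}{w_{i,t}^2}$ so that $\E[\tl_{i,t}^2 w_{i,t}\mid\text{past}]\le \E[V_{i,t}\mid\text{past}]/w_{i,t}=1$, I get $\E\sum_{v}p_t(v)(\tl_t^Tv)^2\le d\cdot d = d^2$ for each $t$. Summing over $t$ yields the bound $\frac{\eta n d^2}{2}$ for the second term, giving $\oR_n\le \frac{d\log 2}{\eta}+\frac{\eta nd^2}{2}$. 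Optimizing over $\eta$ (setting $\eta=\sqrt{2\log 2/(nd)}$) gives $\oR_n\le\sqrt{2d^3n\log 2}$.

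The main obstacle, and the place where one loses the extra $\sqrt d$ relative to {\sc linexp}, is precisely the Cauchy--Schwarz step bounding $(\tl_t^Tv)^2$: {\sc exp2} treats each $v\in\cS$ as an atomic expert, so the variance term genuinely scales with $\|v\|_1^2\le d^2$ rather than the $\|v\|_1\le d$ one would hope for, and there is no way to recover the missing factor without exploiting the linear structure (which {\sc exp2} does not). I should double-check the boundary issue in Theorem \ref{th:Exp2}: if some $w_{i,t}=0$ then $\tl_{i,t}$ is ill-defined, but this cannot happen for $i$ such that some $v\in\cS$ has $v_i=1$, since the exponential weights keep all $p_t(v)>0$; coordinates $i$ with $v_i=0$ for all $v\in\cS$ are irrelevant and can be dropped. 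Finally, I would confirm that the $\max(1,\exp(-\eta\tl_t^Tv))=1$ simplification is valid — it is, since $\eta>0$ and $\tl_t^Tv\ge0$ always — so no subtlety arises there.
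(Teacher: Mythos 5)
Your proof is correct and follows the same skeleton as the paper's: instantiate Theorem \ref{th:Exp2}, note that $\max\big(1,\exp(-\eta\tl_t^Tv)\big)=1$ because the estimates are nonnegative, bound $\log|\cS|\le d\log 2$, and show $\E\sum_{v\in\cS}p_t(v)(\tl_t^Tv)^2\le d^2$. The only place you diverge is in how that last bound is obtained. You apply Cauchy--Schwarz to get $(\tl_t^Tv)^2\le \|v\|_1\sum_i\tl_{i,t}^2v_i\le d\sum_i\tl_{i,t}^2v_i$, average over $v\sim p_t$ to produce $d\sum_i\tl_{i,t}^2 q_{i,t}$ (with $q_{i,t}=\sum_{v:v_i=1}p_t(v)$), and then use $\E[\ell_{i,t}^2V_{i,t}/q_{i,t}]\le 1$ to get the second factor of $d$. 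The paper instead writes $\sum_v p_t(v)(\tl_t^Tv)^2$ as $\E_{V_t'\sim p_t}(\tl_t^TV_t')^2$, expands the square as a double sum over coordinates $i,j$, drops the indicator factors $V_{j,t}\le 1$ and $V_{i,t}'\le 1$, and uses independence of $V_t$ and $V_t'$ to arrive at $\big(\sum_i\ell_{i,t}\big)^2\le d^2$. Both computations are one-liners yielding the same $d^2$; the paper's intermediate quantity $\big(\sum_i\ell_{i,t}\big)^2$ is marginally sharper (it is at most $d\sum_i\ell_{i,t}^2$ by Cauchy--Schwarz) and is the form the authors reuse when they remark on the improvement for almost symmetric action sets, but for the stated theorem the two routes are interchangeable. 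Your side remarks (unbiasedness of the estimate, the $\max=1$ simplification, and the non-vanishing of $q_{i,t}$ for relevant coordinates) are all accurate.
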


The corresponding lower bound is given in Theorem \ref{lb:expinfty}.

\begin{theorem}[{\sc exp2}, $L_2$] \label{sb:exp2:2}
Under the $L_2$ assumption, for {\sc exp2} with $\tl_{i,t}=\ell_{i,t}\frac{V_{i,t}}{w_{i,t}}$, 
we have
	$$
	\oR_n \le \frac{d\log 2}{\eta} 
		+ \frac{\eta nd}2.
	$$
In particular for $\eta=\sqrt{\frac{2\log 2}{n}}$, we have
	$
	\oR_n \le d\sqrt{2n \log 2}.
	$
\end{theorem}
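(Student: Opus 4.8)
The plan is to apply Theorem~\ref{th:Exp2} and bound its two terms under the $L_2$ assumption. First I would simplify the variance term: the semi-bandit estimate $\tl_{i,t}=\ell_{i,t}V_{i,t}/w_{i,t}$ is nonnegative (losses are nonnegative), so every inner product $\tl_t^Tv$ is nonnegative, hence $\exp(-\eta\tl_t^Tv)\le 1$ and the factor $\max\big(1,\exp(-\eta\tl_t^Tv)\big)$ equals $1$. Combined with $|\cS|\le 2^d$, i.e. $\log|\cS|\le d\log 2$, and with the unbiasedness $\E[\tl_{i,t}\mid\cF_{t-1}]=\ell_{i,t}$ (where $\cF_{t-1}$ denotes the past), Theorem~\ref{th:Exp2} reduces everything to showing that $S_t:=\E\big[\sum_{v\in\cS}p_t(v)(\tl_t^Tv)^2\mid\cF_{t-1}\big]$ satisfies $S_t\le d$ for every $t$. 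Indeed, by the tower rule the variance term equals $\sum_t \E[S_t]$, so $S_t\le d$ gives $\oR_n\le \frac{d\log 2}{\eta}+\frac{\eta nd}{2}$, and the choice $\eta=\sqrt{2\log 2/n}$ then yields $d\sqrt{2n\log 2}$.

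Next I would compute $S_t$ explicitly. Conditionally on $\cF_{t-1}$ the only randomness in $\tl_t$ is the draw $V_t\sim p_t$. Introducing $w_{ij}:=\E_{V\sim p_t}[V_iV_j]=\P(V_{i,t}=1,V_{j,t}=1\mid\cF_{t-1})$ (so $w_{ii}=w_{i,t}$), expanding the square gives $\E\big[(\tl_t^Tv)^2\mid\cF_{t-1}\big]=\sum_{i,j}\frac{\ell_{i,t}\ell_{j,t}v_iv_j}{w_{i,t}w_{j,t}}w_{ij}$. Averaging over $v$ with $\sum_{v}p_t(v)v_iv_j=w_{ij}$ then produces the clean expression
\[
S_t=\sum_{i,j=1}^d \frac{\ell_{i,t}\,\ell_{j,t}\,w_{ij}^2}{w_{i,t}\,w_{j,t}}.
\]

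The heart of the argument, and the step I expect to be the main obstacle, is exploiting the $L_2$ constraint on this double sum. The key is to bound only \emph{one} of the two factors $w_{ij}$, using $w_{ij}=\P(V_{i,t}=1,V_{j,t}=1\mid\cF_{t-1})\le\P(V_{i,t}=1\mid\cF_{t-1})=w_{i,t}$, so that $\frac{w_{ij}^2}{w_{i,t}w_{j,t}}\le\frac{w_{ij}}{w_{j,t}}=\P(V_{i,t}=1\mid V_{j,t}=1)$. This turns the sum over $i$ into a conditional expectation,
\[
S_t\le \sum_{j=1}^d \ell_{j,t}\sum_{i=1}^d \ell_{i,t}\,\P(V_{i,t}=1\mid V_{j,t}=1)=\sum_{j=1}^d \ell_{j,t}\,\E\big[\ell_t^TV_t\mid V_{j,t}=1\big],
\]
and the $L_2$ assumption $\ell_t^Tv\le 1$ for all $v\in\cS$ forces each conditional expectation to be at most $1$, whence $S_t\le\sum_j\ell_{j,t}$. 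Since $L_2$ implies $L_\infty$ (under the standing convention that each coordinate is active in some $v\in\cS$), each $\ell_{j,t}\le 1$, giving $S_t\le d$. The subtlety is precisely that the naive bound $w_{ij}^2\le w_{i,t}w_{j,t}$ would only yield $S_t\le\big(\sum_i\ell_{i,t}\big)^2\le d^2$, reproducing the weaker $L_\infty$ estimate of Theorem~\ref{sb:exp2:infty}; gaining the extra factor $d$ hinges on retaining $w_{ij}/w_{j,t}$ as a genuine conditional probability so that the per-action constraint $\ell_t^TV_t\le 1$ can be invoked inside the conditional expectation.

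Finally I would assemble the pieces: substituting $S_t\le d$ into the simplified form of Theorem~\ref{th:Exp2} gives $\oR_n\le\frac{d\log 2}{\eta}+\frac{\eta nd}{2}$, and the stated tuning $\eta=\sqrt{2\log 2/n}$ balances the two terms to deliver $\oR_n\le d\sqrt{2n\log 2}$, as claimed. The remaining verifications (nonnegativity of $\tl_t$, the cardinality bound, and the elementary optimization in $\eta$) are routine.
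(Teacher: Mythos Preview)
Your argument is correct and is essentially the paper's own proof: both bound $\E_{V_t\sim p_t}\sum_{v}p_{v,t}(\tl_t^Tv)^2$ by dropping one of the two occurrences of the pair variable (your step $w_{ij}^2/(w_{i,t}w_{j,t})\le w_{ij}/w_{j,t}$ is exactly the paper's step of replacing $V_{i,t}'$ by $1$ in the independent-copy formulation), then invoke $\ell_t^T V_t\le 1$ from the $L_2$ assumption to reduce to $\sum_i\ell_{i,t}\le d$, and finally apply Theorem~\ref{th:Exp2} with $\log|\cS|\le d\log 2$. The only difference is cosmetic---you phrase the key step via the conditional probability $\P(V_{i,t}=1\mid V_{j,t}=1)$, while the paper uses an explicit independent copy $V_t'\sim p_t$---but the underlying inequality and the use of the $L_2$ constraint are identical.
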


Note that as for {\sc linexp}, we end up upper bounding 
$\sum_{i=1}^d \ell_{i,t}$ by $d$. In the case of almost symmetric set $\cS$ of order $k$, this sum can be bounded by $2d/k$, while 
%GL this is a triviality, not Sauer's lemma
%from Sauer's lemma, 
$\log(|\cS|)$ is upper bounded 
by $k\log(d+1)$. So as for {\sc linexp}, this leads to a regret bound of order 
$\sqrt{nd\log d}$ when the set of actions is an almost symmetric set.

\section{Bandit Game} \label{sec:bandit}

The upper bounds for {\sc exp2} in the bandit case proposed in Table \ref{table:2} (page \pageref{table:2}) are extracted from \cite{DHK08}.
The approach proposed by the authors is to use {\sc exp2} in the space described by a barycentric spanner. More precisely,
let $m=\text{dim}(\text{Span}(\cS))$ and $e_1, \hdots, e_m$ be a barycentric spanner of $\cS$; for instance, take $(e_1, \hdots, e_m) \in \argmax_{(x_1, \hdots, x_m) \in \cS^m} |\text{det}_{\text{Span}(\cS)}(x_1,\hdots,x_m)|$ \cite[see][]{AK04}.
We introduce the transformations $T_1 : \R^d \rightarrow \R^m$ such that for $x \in \R^d$,
$T_1(x)= (x^T e_1, \hdots, x^T e_m)^T$, and $T_2 : \cS \rightarrow [-1,1]^m$ such that for $v \in \cS$, $v = \sum_{i=1}^m (T_2(v))_i e_i$. Note that for any $v \in \cS$, we have
$\ell_t^T v = T_1(\ell_t)^T T_2(v)$. Then the loss estimate for $v \in \cS$ is
$$\tilde{\ell}_t^T v = \left( Q_t^+ T_2(V_t) T_2(V_t)^T T_1(\ell_t) \right)^T T_2(v), \; \text{where} \; Q_t = \E_{V \sim p_t} T_2(V) T_2(V)^T.$$
Moreover the authors also add a forced exploration which is uniform over the barycentric spanner.

A concurrent approach is the one proposed in \cite{CL09, CL10}. There the authors study {\sc exp2} directly in the original space, with the estimate described
in Figure \ref{fig:Exp2}, and with an additional forced exploration which is uniform over $\cS$. They work out several examples of sets $\cS$ for which they improve
the regret bound by a factor $\sqrt{d}$ with respect to \cite{DHK08}. Unfortunately there exists sets $\cS$ for which this approach fails to provide a bound polynomial
in $d$. In general one needs to replace the uniform exploration over $\cS$ by an exploration that is tailored to this set. How to do this in general is still an open question.
\newline

The upper bounds for {\sc linexp} in the bandit case proposed in Table \ref{table:2} (page \pageref{table:2}) are derived by using the trick of \cite{DHK08} (that is, by
working with a barycentric spanner). The proof of this result is omitted, since it does not yield the optimal dependency in $n$. Moreover we can not analyze {\sc linpoly} 
since \eqref{eq:wp} is not well defined in this case, because $\tilde{\ell}_t$ can be non-positive. In general we believe that the {\sc lininf} approach is not
sound for the bandit case, and that one needs to work with a Legendre function with non-diagonal Hessian. 
\newline

The only known {\sc cleb} with non-diagonal Hessian is the one proposed in \cite{AHR08}, where the authors use a self-concordant barrier function. In this
case, they are able to propose a loss estimate related to the structure of the Hessian. 
This approach is powerful, and under the $L_2$ assumption leads to a regret upper bound of order $d\sqrt{\theta n \log n}$ for $\theta>0$ such that
$\Conv(\cS)$ admits a $\theta$-self-concordant barrier function \cite[see][section 5]{AHR08}. When $\Conv(\cS)$ admits a $O(1)$-self-concordant barrier function,
the upper bound matches the lower bound $O\big(d\sqrt{n}\big)$. The open question is to determine for which sets $\cS$, this occurs.

\section{Lower Bounds} \label{sec:lb}
%The proofs of the theorems presented in this section can be found in the Appendix.
We start this Section with a result that shows that {\sc exp2} is suboptimal
against $L_{\infty}$ adversaries. This answers a question of
\cite{KWK10}.
%Intuitively, on very structured sets, {\sc exp2} is not aggressive enough.
%Consider for instance the case of disjoint intervals, then a loss on a
%single coordinate should be "propagated" to the whole interval containing
%this coordinate. However for the case of disjoint intervals {\sc exp2} is
%optimal, because it is such a small set that brute force approach is enough. 
%On the other hand {\sc exp2} is also optimal on k-sized sets because this is a very "unstructured set",
%and there is no need for any "propagation" between coordinates. Building on these insights, we consider
%below the cartesian product of k-sized sets and disjoint
%intervals, and show that {\sc exp2} is suboptimal on such a set. \todo{We have to write better this paragraph..}
\begin{theorem} \label{lb:expinfty}
Let $n \geq d$. There exists a subset $\cS \subset \{0,1\}^d$ such that in the full information game, for the {\sc exp2} strategy (for any learning rate $\eta$), we have
$$\sup \oR_n \geq 0.02 \, d^{3/2} \sqrt{n},$$
where the supremum is taken over all $L_{\infty}$ adversaries.
\end{theorem}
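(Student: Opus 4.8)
The plan is to exhibit a concrete hard instance and show that {\sc exp2}, for \emph{any} learning rate $\eta$, cannot do better than $d^{3/2}\sqrt{n}$ against the best $L_\infty$ adversary, whereas we know (from e.g.\ Theorem~\ref{fi:linexp:infty}) that $d\sqrt{n}$ is achievable. The natural candidate for $\cS$ is a set built from a ``small-support but spread-out'' structure: take $d$ even, split the coordinates into two blocks of size $d/2$, and let $\cS$ consist of vectors that are the full block of $d/2$ ones on the first block, together with all standard basis vectors $e_i$ on the second block (or a similar construction; the key features needed are that $\cS$ contains one ``heavy'' action of $L_1$-norm $\approx d/2$ and many ``light'' actions that are nearly orthogonal). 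The adversary puts i.i.d.\ $\mathrm{Ber}(1/2)$-type losses on the coordinates of the heavy block and zero elsewhere, so the heavy action has loss concentrated around $\frac{d}{4}$ each round with fluctuations of order $\sqrt{d}$, while every light action has loss $O(1)$.

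The argument then splits according to the size of $\eta$. \emph{(i) If $\eta$ is large enough} that {\sc exp2} quickly concentrates on the light actions, then we compare against the heavy action as the comparator $v$; but the heavy action is, in hindsight on a typical sequence, \emph{not} the best — the point is rather the reverse: we make the heavy action good on average (loss exactly $d/4$ per step in expectation is too much, so instead) — here I would flip the construction so that the heavy action is optimal in expectation (losses with mean slightly below what the light actions accumulate), forcing a forecaster that abandons it to pay $\Omega(nd)$-scale regret. \emph{(ii) If $\eta$ is small}, {\sc exp2} behaves almost like uniform averaging over $\cS$, so it keeps putting non-trivial mass on the heavy action; since the heavy action carries loss $\Theta(d)$ per round with $\Theta(\sqrt d)$ fluctuations, and {\sc exp2}'s weights cannot track these fluctuations fast enough, its cumulative loss exceeds $\min_v \sum_t \ell_t^Tv$ by $\Omega(d^{3/2}\sqrt n)$. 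The quantitative heart is a variance/anticoncentration estimate: over $n$ rounds, the gap between following the running leader on the $\Theta(\sqrt d)$-scale-fluctuating heavy coordinate and what {\sc exp2} with learning rate $\eta$ actually does is $\gtrsim \min(\eta n d^2,\ d/\eta)$ in the relevant regime, and optimizing (or rather, noting that {\sc exp2} with the bound $d^{3/2}\sqrt n$ would need a \emph{different} $\eta$ on the heavy block than on the light block, which a single global $\eta$ cannot provide) pins the regret at $d^{3/2}\sqrt n$. Concretely I would use the standard lower-bound machinery: reduce to a stochastic adversary, compute $\E\,\oR_n$ for {\sc exp2} exactly via its weight-update recursion, and lower-bound it for every fixed $\eta$ by considering the two regimes $\eta \lessgtr 1/(d\sqrt n)$ separately.

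The cleanest route, which I would actually pursue, is to mimic the known ``{\sc exp2} is suboptimal'' phenomenon in the full-information $L_\infty$ setting: $\cS$ contains $N$ actions partitioned so that {\sc exp2}'s regret provably scales like $\sqrt{n\log N}\cdot(\text{range of losses})$ rather than $\sqrt{n}\cdot(\text{ambient dimension})$, and by choosing the combinatorial structure so that $\log N = \Theta(d)$ while the per-coordinate loss range forces an extra $\sqrt d$, one lands at $d^{3/2}\sqrt n$ versus the $d\sqrt n$ ceiling. The technical device is Theorem~\ref{th:Exp2}'s bound read as an \emph{equality up to constants} on this instance — i.e.\ show the upper bound is tight — which requires a matching lower bound on $\sum_t \sum_{v} p_t(v)(\tl_t^Tv)^2$ along the {\sc exp2} trajectory, obtained from the anticoncentration of the Bernoulli losses.

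The main obstacle I anticipate is the lower bound's uniformity over $\eta$: for each $\eta$ one must produce an adversary (within a single fixed $\cS$) defeating that $\eta$, and then argue that \emph{no} single $\eta$ handles all of them. The delicate case is intermediate $\eta \approx 1/(d\sqrt n)$, where neither the ``too slow to track fluctuations'' nor the ``concentrates on the wrong action'' heuristic is clean; there I would need a careful second-order expansion of the {\sc exp2} loss (as in \eqref{eq:intuition}, but for the exponential weights in dimension $|\cS|$) to show the regret is still $\Omega(d^{3/2}\sqrt n)$ because the effective second-order term $\eta\sum_t \E\sum_v p_t(v)(\ell_t^Tv)^2$ is $\Theta(\eta n d^2)$ on the heavy action's support. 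Making that variance term robustly large along the \emph{realized} (adaptive) trajectory of $p_t$, rather than under some fixed distribution, is the step most likely to need real work, and is presumably where the factor $0.02$ in the statement comes from.
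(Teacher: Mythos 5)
Your high-level plan --- pit two adversaries against {\sc exp2}, one forcing regret of order $\eta n d^2$ when $\eta$ is not too small and one forcing regret of order $d/\eta$ when it is, then balance at $\eta \sim 1/\sqrt{nd}$ to land at $d^{3/2}\sqrt n$ --- is exactly the skeleton of the paper's proof, and your identification of the two competing terms $\min(\eta n d^2,\ d/\eta)$ is quantitatively on target. But the proposal stops short of the two ideas that make the argument actually work, and the route you say you would pursue for the first term is the hard one. The paper does not use stochastic losses or anticoncentration along the realized adaptive trajectory at all in the large-$\eta$ regime. It takes $\cS$ to be a product set: a free choice of $d/4$ coordinates out of the first $d/2$, times a choice of one of two disjoint intervals of length $d/4$ in the second half. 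The first adversary is deterministic and periodic: loss $1$ on the first interval at odd rounds and on the second interval at even rounds, zero elsewhere. By symmetry, at every odd round all actions are exactly tied, and at every even round the weight on the interval about to be hit is exactly $1/(1+e^{-\eta d/4})$, so the regret is computed in closed form as $\frac{nd}{16}\tanh\big(\frac{\eta d}{8}\big)$. This makes the ``variance term is large along the realized trajectory'' step --- which you correctly flag as the step most likely to need real work --- disappear entirely. Your heavy-action-plus-basis-vectors set does not obviously admit such an exact computation, and you yourself say you would need to ``flip the construction,'' so no concrete $\cS$ is ever fixed.

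The second gap is the small-$\eta$ regime: you assert a $d/\eta$ lower bound but give no mechanism. The paper obtains it from a second, time-constant adversary acting only on the first half of the coordinates: losses $1-\epsilon$ on coordinates $1,\dots,d/4$ and $1$ on coordinates $d/4+1,\dots,d/2$, with $\epsilon=\min(\log 2/(\eta n),1)$. Lower-bounding the probability that {\sc exp2} still plays a suboptimal coordinate at time $n$ reduces to controlling a ratio of binomial sums of the form $\sum_i (1-4i/d)\binom{d/4}{i}^2 c^i \big/ \sum_i \binom{d/4}{i}^2 c^i$ with $c\le 2$, which needs a separate technical lemma (Lemma~\ref{lem:tech1}) to bound it below by $1/3$. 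Nothing in your proposal plays this role. So while the regime decomposition and the target rates are right, what you have is a plan rather than a proof: it fixes no action set, supplies neither adversary explicitly, and defers the quantitatively essential computations to steps you acknowledge are unresolved.
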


\begin{proof}
For sake of simplicity we assume here that $d$ is a multiple of $4$ and that $n$ is even. We consider the following subset of the hypercube:
\begin{align*}
& \cS = \bigg\{v \in \{0,1\}^d : \sum_{i=1}^{d/2} v_i = d/4 \;\; \text{and} \;\; \\
& \bigg( v_{i} = 1, \forall i \in \{d/2+1; \hdots, d/2+d/4\}\bigg) \;\; \text{or} \;\; \bigg(v_i = 1, \forall i \in \{d/2+d/4+1, \hdots, d\}\bigg) \bigg\}.
\end{align*}
That is, choosing a point in $\cS$ corresponds to choosing a subset of $d/4$ elements in the first half of the coordinates, and choosing one of the two first disjoint
intervals of size $d/4$ in the second half of the coordinates.
\newline

We will prove that for any parameter $\eta$, there exists an adversary such that Exp (with 
parameter $\eta$) has a regret of at least $\frac{n d}{16} \tanh\big(\frac{\eta d}8\big)$, and that there exists another adversary such that its regret is at least
$\min\big(\frac{d \log 2}{12 \eta}, \frac{n d}{12}\big)$. 
As a consequence, we have
  \begin{align*}
  \sup \oR_n & \ge \max\bigg(\frac{n d}{16} \tanh\Big(\frac{\eta d}8\Big),\min\left(\frac{d \log 2}{12 \eta}, \frac{n d}{12}\right)\bigg) \\
  & \ge \min\bigg(\max\left(\frac{n d}{16} \tanh\Big(\frac{\eta d}8\Big),\frac{d \log 2}{12 \eta}\right), \frac{n d}{12}\bigg)
   \ge \min\Big( A , \frac{n d}{12}\Big),
  \end{align*}
 % \begin{align*}
 % \sup \oR_n & \ge \max\bigg(\frac{n d}{16} \tanh\Big(\frac{\eta d}8\Big),\min\left(\frac{d \log 2}{12 \eta}, \frac{n d}{12}\right)\bigg)\\
 % & \ge \min\bigg(\max\left(\frac{n d}{16} \tanh\Big(\frac{\eta d}8\Big),\frac{d \log 2}{12 \eta}\right), \frac{n d}{12}\bigg)\\
 % & \ge \min\Big( A , \frac{n d}{12}\Big),
 % \end{align*}
with \begin{align*}
A&=\min_{\eta\in[0,+\infty)}\max\left(\frac{n d}{16} \tanh\Big(\frac{\eta d}8\Big),\frac{d \log 2}{12 \eta}\right)\\
& \ge \min\bigg\{ \min_{\eta d \ge 8} \frac{n d}{16} \tanh\Big(\frac{\eta d}8\Big),
  \min_{\eta d < 8} \max\left(\frac{n d}{16} \tanh\Big(\frac{\eta d}8\Big),\frac{d \log 2}{12 \eta}\right)\bigg\}\\
  & \ge \min\bigg\{ \frac{n d}{16} \tanh(1),
  \min_{\eta d < 8} \max\left(\frac{n d}{16} \frac{\eta d}8 \frac1{\tanh(1)},\frac{d \log 2}{12 \eta}\right)\bigg\}\\
  & \ge \min\Bigg\{ \frac{n d}{16} \tanh(1),
   \sqrt{\frac{n d^3 \log 2}{128\times12\times\tanh(1)}}\Bigg\}\ge \min\big( 0.04\, n d, 0.02 \, d^{3/2} \sqrt{n}\big).
\end{align*}

Let us first prove the lower bound $\frac{n d}{16} \tanh\big(\frac{\eta d}8\big)$. We define the following adversary:
$$\ell_{i,t} = \left\{
\begin{array}{ccc}
1 & \text{if} & i \in \{d/2+1; \hdots, d/2+d/4\} \;\; \text{and} \;\; t \;\; \text{odd},\\
1 & \text{if} & i \in \{d/2+d/4+1, \hdots, d\} \;\; \text{and} \;\; t \;\; \text{even},\\
0 & \text{otherwise}. &
\end{array}
\right.$$
This adversary always put a zero loss on the first half of the coordinates, and alternates between a loss of $d/4$ for choosing the first interval (in the
second half of the coordinates) and the second interval. At the beginning of odd rounds, any vertex $v \in \cS$ has the same cumulative loss and thus Exp picks its expert uniformly at random, 
which yields an expected cumulative loss equal to $n d / 16$. On the other hand at even rounds the probability distribution to select the vertex $v \in \cS$ is always the same. 
More precisely
the probability of selecting a vertex which contains the interval $\{d/2+d/4+1,\hdots,d\}$ (i.e, the interval with a $d/4$ loss at this round) is exactly $\frac{1}{1+\exp(-\eta d /4)}$.
This adds an expected cumulative loss equal to $\frac{n d}{8} \frac{1}{1+\exp(-\eta d /4)}$. Finally note that the loss of any fixed vertex is $n d / 8$. Thus we obtain
\begin{align*}
\oR_n = \frac{n d}{16} + \frac{n d}{8} \frac{1}{1+\exp(-\eta d /4)} - \frac{n d}{8}  = \frac{n d}{16} \tanh\Big(\frac{\eta d}8\Big).
          %& = \frac{n d}{16} \frac{\exp(\eta d/4) - 1 }{ \exp(\eta d/4) + 1} \\
          %& = \frac{n d}{16} \frac{\exp(\eta d/4) - 1 }{ \exp(\eta d/4) + 1} \\
          %& \geq \frac{n d^2 \eta}{128}(1 - \eta d/8),
\end{align*}
%where the last inequality uses $\exp(u)-1\ge u$ and $\frac{1}{\exp(u)+1}\ge \frac12\big(1-\frac12u\big)$ for $u\ge0$.

We move now to the dependency in $1/\eta$. Here we consider the adversary
defined by:
$$\ell_{i,t} = \left\{
\begin{array}{ccc}
1-\epsilon & \text{if} & i \leq d/4, \\
1 & \text{if} & i \in \{d/4+1, \hdots, d/2\}, \\
0 & \text{otherwise}. &
\end{array}
\right.$$

Note that against this adversary the choice of the interval (in the
second half of the components) does not matter.  Moreover by symmetry
the weight of any coordinate in $\{d/4+1,\hdots,d/2\}$ is the same (at
any round). Finally remark that this weight is decreasing with
$t$. Thus we have the following identities (in the big sums 
$i$ represents the number of components selected in the
first $d/4$ components):
\begin{align*}
\oR_n & = \E\bigg(\eps \sum_{t=1}^n \sum_{i=d/4+1}^{d/2} V_{i,t}\bigg)
= \eps \frac{d}4\sum_{t=1}^n \E V_{d/2,t} 
\geq \frac{n \epsilon d}{4} \P(V_{d/2,n}=1) \\
& = \frac{n \epsilon d}{4} \frac{\sum_{v \in \cS : v_{d/2}=1} \exp(- \eta n \ell_2^T v)}{\sum_{v \in \cS} \exp(- \eta n \ell_2^T v)} \\
& = \frac{n \epsilon d}{4}  \frac{\sum_{i=0}^{d/4-1} \binom{d/4}{i} \binom{d/4-1}{d/4-i-1} \exp(- \eta (n d / 4 - i n \epsilon))}
{\sum_{i=0}^{d/4} \binom{d/4}{i} \binom{d/4}{d/4-i} \exp(- \eta (n d / 4 - i n \epsilon))} \\
& = \frac{n \epsilon d}{4}  \frac{\sum_{i=0}^{d/4-1} \binom{d/4}{i} \binom{d/4-1}{d/4-i-1} \exp(\eta i n \epsilon)}
{\sum_{i=0}^{d/4} \binom{d/4}{i} \binom{d/4}{d/4-i}\exp(\eta i n \epsilon)} \\
& = \frac{n \epsilon d}{4} \frac{\sum_{i=0}^{d/4-1} \big(1 - \frac{4i}d\big) \binom{d/4}{i} \binom{d/4}{d/4-i} \exp(\eta i n \epsilon)}
{\sum_{i=0}^{d/4} \binom{d/4}{i} \binom{d/4}{d/4-i} \exp(\eta i n \epsilon)}
%& \geq \frac{T \epsilon (d - k)}{2} \frac{\binom{d/2-k/2-1}{k/2-1}}{\binom{d/2-k/2}{k/2} + \exp(\eta (k/2) T \epsilon)} \\
%& = \frac{T \epsilon k}{2} \frac{\binom{d/2-k/2}{k/2}}{\binom{d/2-k/2}{k/2} + \exp(\eta (k/2) T \epsilon)}
\end{align*}
% \begin{align*}
% \oR_n & = \E\bigg(\eps \sum_{t=1}^n \sum_{i=d/4+1}^{d/2} V_{i,t}\bigg) \\
% & = \eps \frac{d}4\sum_{t=1}^n \E V_{d/2,t} \\
% & \geq \frac{n \epsilon d}{4} \P(V_{d/2,n}=1) \\
% & = \frac{n \epsilon d}{4} \frac{\sum_{v \in \cS : v_{d/2}=1} \exp(- \eta n \ell_2^T v)}{\sum_{v \in \cS} \exp(- \eta n \ell_2^T v)} \\
% & = \frac{n \epsilon d}{4}  \frac{\sum_{i=0}^{d/4-1} \binom{d/4}{i} \binom{d/4-1}{d/4-i-1} \exp(- \eta (n d / 4 - i n \epsilon))}
% {\sum_{i=0}^{d/4} \binom{d/4}{i} \binom{d/4}{d/4-i} \exp(- \eta (n d / 4 - i n \epsilon))} \\
% & = \frac{n \epsilon d}{4}  \frac{\sum_{i=0}^{d/4-1} \binom{d/4}{i} \binom{d/4-1}{d/4-i-1} \exp(\eta i n \epsilon)}
% {\sum_{i=0}^{d/4} \binom{d/4}{i} \binom{d/4}{d/4-i}\exp(\eta i n \epsilon)} \\
% & = \frac{n \epsilon d}{4} \frac{\sum_{i=0}^{d/4-1} \big(1 - \frac{4i}d\big) \binom{d/4}{i} \binom{d/4}{d/4-i} \exp(\eta i n \epsilon)}
% {\sum_{i=0}^{d/4} \binom{d/4}{i} \binom{d/4}{d/4-i} \exp(\eta i n \epsilon)}
% %& \geq \frac{T \epsilon (d - k)}{2} \frac{\binom{d/2-k/2-1}{k/2-1}}{\binom{d/2-k/2}{k/2} + \exp(\eta (k/2) T \epsilon)} \\
% %& = \frac{T \epsilon k}{2} \frac{\binom{d/2-k/2}{k/2}}{\binom{d/2-k/2}{k/2} + \exp(\eta (k/2) T \epsilon)}
% \end{align*}
where we used 
$\binom{d/4-1}{d/4-i-1} = \big(1 - \frac{4i}d\big) \binom{d/4}{d/4-i}$ in the last equality. Thus taking 
$\epsilon = \min\big(\frac{\log 2}{\eta n}, 1\big) $ yields
$$\oR_n  \geq \min\left(\frac{d \log 2}{4 \eta}, \frac{n d}{4}\right) \frac{\sum_{i=0}^{d/4-1} \big(1-  \frac{4i}d\big) \binom{d/4}{i}^2 
  \min(2, \exp(\eta n))^i}
  {\sum_{i=0}^{d/4} \binom{d/4}{i}^2 \min(2, \exp(\eta n))^i}
  \geq \min\left(\frac{d \log 2}{12 \eta}, \frac{n d}{12}\right),$$
%   \begin{align*}
%   \oR_n & \geq \min\left(\frac{d \log 2}{4 \eta}, \frac{n d}{4}\right) \frac{\sum_{i=0}^{d/4-1} \big(1-  \frac{4i}d\big) \binom{d/4}{i}^2 
%   \min(2, \exp(\eta n))^i}
%   {\sum_{i=0}^{d/4} \binom{d/4}{i}^2 \min(2, \exp(\eta n))^i} \\
%   & \geq \min\left(\frac{d \log 2}{12 \eta}, \frac{n d}{12}\right),
%   \end{align*}
where the last inequality follows from Lemma \ref{lem:tech1} (see Appendix \ref{sec:AppTech}).
This concludes the proof of the lower bound.
\end{proof}

The next two theorems give lower bounds under the three feedback
assumptions and the two types of adversaries. The cases ($L_2$, Full
Information) and ($L_2$, Bandit) already appeared in \cite{DHK08},
while the case ($L_{\infty}$, Full Information) was treated in
\cite{KWK10} (with more precise lower bounds for subsets $\cS$ of
particular interest). Note that the lower bounds for the semi-bandit
case trivially follow from the ones for the full information
game. Thus our main contribution here is the lower bound for
($L_{\infty}$, Bandit), which is technically quite different from the
other cases.
% (in particular it makes appear Kullback-Leibler
% divergences between binomials rather than bernoullis). 
We also give explicit constants in all cases.

\begin{theorem} \label{th:lbLinfty}
Let $n \geq d$. Against $L_{\infty}$ adversaries 
in the cases of  full information and semi-bandit games, we have
$$R_n \geq 0.008 \; d \sqrt{n} ,$$
and in the bandit game
$$R_n \geq 0.01 \; d^{3/2} \sqrt{n}.$$
\end{theorem}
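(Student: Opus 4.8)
Every semi-bandit strategy is in particular a legitimate full-information strategy (it merely discards part of the feedback), so the full-information minimax regret is at most the semi-bandit one; hence it suffices to prove $R_n\ge 0.008\,d\sqrt n$ in the full-information game (which then also covers the semi-bandit game) and, separately, $R_n\ge 0.01\,d^{3/2}\sqrt n$ in the bandit game. For both I would use the ``disjoint pairs'' action set: taking $d$ even and $m=d/2$, let
\[
\cS=\Bigl\{v\in\{0,1\}^d:\ v_{2i-1}+v_{2i}=1\ \text{for all }i\in\{1,\dots,m\}\Bigr\}.
\]
Every $v\in\cS$ has $\|v\|_\infty=1$ (so the $L_\infty$ restriction on the adversary is just $\ell_{j,t}\in[0,1]$) and $\|v\|_1=m$ (so the comparator is forced to pay). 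A point of $\cS$ is an independent choice of one coordinate in each of the $m$ pairs, turning $\cS$ into $m$ parallel two-coordinate problems --- independent under full information, coupled in the bandit game only through the scalar feedback. The adversary is stochastic: draw $\chi\in\{-1,+1\}^m$ uniformly; in pair $i$ declare coordinate $2i-1$ ``good'' if $\chi_i=+1$ and $2i$ ``good'' otherwise; all losses are independent over coordinates and rounds, a good coordinate carrying $\mathrm{Ber}(1/2-\epsilon)$ losses and a bad one $\mathrm{Ber}(1/2)$ losses, for $\epsilon\in(0,1/4]$ to be tuned. Fix a forecaster and write $a_{i,t}$ for the probability (over $\chi$, the losses, and the internal randomization) that it plays the good coordinate of pair $i$ at round $t$. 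Since $\sup_{\mathrm{adv}}\oR_n$ is at least the expected regret against a random realization of this stochastic adversary (with the comparator being the best action for the realized losses), and since conditionally on $\chi$ the minimum over the two coordinates of a pair is at most the cumulative loss of the good one, so that $\E[\min_{v\in\cS}\sum_t\ell_t^Tv]\le mn(1/2-\epsilon)$, a short computation gives
\[
\sup_{\mathrm{adv}}\oR_n\ \ge\ \epsilon\sum_{i=1}^m\Bigl(n-{\textstyle\sum_{t=1}^n}a_{i,t}\Bigr).
\]
So everything reduces to showing, uniformly over forecasters, that $\sum_t a_{i,t}$ cannot be pushed much above $n/2$, i.e.\ that a forecaster cannot locate the good coordinate of a given pair too fast.

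\textbf{Full information.} Fix a pair $i$ and let $\chi^{(i)}$ be $\chi$ with its $i$-th sign flipped. Under $\chi$ versus $\chi^{(i)}$ the observed sequence $(\ell_1,\dots,\ell_n)$ changes law only in the two coordinates of pair $i$, which are i.i.d.\ $\mathrm{Ber}(1/2-\epsilon)$ resp.\ $\mathrm{Ber}(1/2)$ in one world and swapped in the other, so the Kullback--Leibler divergence between the two sequence laws is $\le c\,n\epsilon^2$; Pinsker's inequality bounds the total variation distance of the observation laws by $\epsilon\sqrt{cn/2}$. A standard symmetrization between the worlds $\chi_i=+1$ and $\chi_i=-1$ then gives $\sum_t a_{i,t}\le n/2+\tfrac n2\epsilon\sqrt{cn/2}$, whence $\sup_{\mathrm{adv}}\oR_n\ge\epsilon\,m\,\tfrac n2\bigl(1-\epsilon\sqrt{cn/2}\bigr)$. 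Choosing $\epsilon$ of order $n^{-1/2}$ keeps the parenthesis $\ge 1/2$ and yields $\sup_{\mathrm{adv}}\oR_n\gtrsim m\sqrt n=\tfrac d2\sqrt n$, uniformly over forecasters; tracking the (modest) constants gives $R_n\ge 0.008\,d\sqrt n$. This reproves the bound of \cite{KWK10} with an explicit constant, and by the first paragraph it also covers the semi-bandit game.

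\textbf{Bandit.} Keep the same $\cS$ and adversary; the only feedback is now $Y_t=\ell_t^TV_t=\sum_{i=1}^m\ell_{c_{i,t},t}$, where $c_{i,t}\in\{2i-1,2i\}$ is the coordinate chosen in pair $i$. Conditionally on the past --- hence on the forecaster's action $V_t$, whose conditional law is common to the two worlds $\chi$ and $\chi^{(i)}$ --- write $Y_t=Z_{i,t}+B_{i,t}$, where $B_{i,t}\in\{0,1\}$ is the pair-$i$ contribution (mean $1/2-\epsilon$ or $1/2$ under $\chi$, the other value under $\chi^{(i)}$) and $Z_{i,t}$, independent of $B_{i,t}$, is the sum of the contributions of the other $m-1$ pairs, each a Bernoulli with mean in $[1/2-\epsilon,1/2]$, so that $\mathrm{Var}(Z_{i,t})\ge(m-1)/8=\Theta(d)$. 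The decisive new estimate, replacing the $O(\epsilon^2)$ per-round divergence of the full-information case, is
\[
\KL\bigl(\mathrm{law}(Y_t\mid\chi,\text{past})\ \|\ \mathrm{law}(Y_t\mid\chi^{(i)},\text{past})\bigr)\ \le\ \frac{c\,\epsilon^2}{d},
\]
uniformly over pasts. Indeed, by convexity of the divergence it suffices to fix the action, and then the two laws are the mixtures $a\,\nu_1+(1-a)\,\nu_0$ and $b\,\nu_1+(1-b)\,\nu_0$ with $\nu_0=\mathrm{law}(Z_{i,t})$, $\nu_1=\mathrm{law}(Z_{i,t}+1)$, $|a-b|=\epsilon$ and $a,b\in[1/4,3/4]$; bounding $\KL$ by the $\chi^2$-divergence and using $b\nu_1+(1-b)\nu_0\ge\tfrac14(\nu_0+\nu_1)$ reduces the claim to the local estimate $H^2(\nu_0,\nu_1)=O(1/\mathrm{Var}(Z_{i,t}))=O(1/d)$ for the squared Hellinger distance between the law of a sum of independent Bernoullis (means bounded away from $0$ and $1$) and its unit shift. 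Chaining the per-round bound over the $n$ rounds gives total variation $\le\epsilon\sqrt{cn/(2d)}$ between the two observation sequences, so the same symmetrization yields $\sum_t a_{i,t}\le n/2+\tfrac n2\epsilon\sqrt{cn/(2d)}$ and $\sup_{\mathrm{adv}}\oR_n\ge\epsilon\,m\,\tfrac n2\bigl(1-\epsilon\sqrt{cn/(2d)}\bigr)$. Taking $\epsilon=\min\bigl(\tfrac14,\,c'\sqrt{d/n}\bigr)$ (admissible since $n\ge d$) keeps the parenthesis $\ge 1/2$ and gives $\sup_{\mathrm{adv}}\oR_n\gtrsim\sqrt{d/n}\,m\,n\gtrsim d^{3/2}\sqrt n$ uniformly over forecasters; the constants work out to $R_n\ge 0.01\,d^{3/2}\sqrt n$.

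\textbf{Main difficulty.} Everything above is routine except the displayed per-round KL bound for the bandit feedback: one must turn the slogan ``a single $\{0,1\}$ signal of amplitude $\epsilon$ becomes $\sqrt d$ times quieter once buried in an independent sum of $\Theta(d)$ roughly fair bits'' into a sharp quantitative statement. Concretely this is the local/Hellinger inequality $H^2(\mathrm{law}(Z),\mathrm{law}(Z+1))=O(1/\mathrm{Var}(Z))$ for $Z$ a sum of independent Bernoullis with means bounded away from $0$ and $1$, which I would prove by a direct estimate on the ratios of consecutive probabilities (a discrete Poincar\'e-type inequality, in the spirit of the technical lemmas of the appendix); one needs it conditionally on the forecaster's past, since the means constituting $Z_{i,t}$ depend on the history, and a little care is needed because $Z_{i,t}$ is not an exact binomial. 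This is precisely the ingredient that makes the $(L_\infty,\mathrm{bandit})$ lower bound ``technically quite different'' from the other cases.
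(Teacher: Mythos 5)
Your proposal is correct and follows essentially the same route as the paper: the same disjoint-pairs action set, a product of stochastic two-armed subproblems, a per-pair Pinsker/averaging argument, and, for the bandit case, the same key technical ingredient --- that perturbing one Bernoulli parameter by $\epsilon$ inside a sum of $\Theta(d)$ Bernoullis costs only $O(\epsilon^2/d)$ in per-round KL, which is exactly the paper's Lemma \ref{lem:KLbinomials} (proved there by the consecutive-probability-ratio computation you sketch, after using the pigeonhole to extract an exact binomial component and data processing to discard the rest). The only cosmetic differences are that the paper cites \cite{AB10} for the two-expert full-information bound rather than rederiving it, and compares each $\alpha$-adversary to a symmetrized reference adversary (both arms of pair $i$ at $1/2+\epsilon$) rather than to the sign-flipped $\chi^{(i)}$; neither changes the substance.
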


\begin{proof}
 In this proof we consider the following subset of $\{0,1\}^d$:
$$\cS = \{v \in \{0,1\}^d : \forall i \in \{1,\hdots,\lfloor d/2
\rfloor\}, v_{2i-1} +  v_{2 i} =1 \}.$$ 
Under full
information, playing in $\cS$ corresponds to playing $\lfloor d/2
\rfloor$ independent standard full information games with $2$
experts. Thus we can apply [Theorem 30, \cite{AB10}] to obtain:
$$R_n \geq \lfloor d/2 \rfloor \times 0.03 \sqrt{n \log 2} \geq 0.008
\; d \sqrt{n} .$$ 
We now move to the bandit game, for which the proof
is more challenging. For the sake of simplicity, we assume in the
following that $d$ is even. Moreover, we restrict our attention to
deterministic forecasters, the extension to general forecaster can be
done by a routine application of Fubini's theorem.

\medskip
\noindent {\em First step: definitions.}
\newline

We denote by $I_{i,t} \in \{1,2\}$ the random variable such that $V_{2 i, t} =1$ if and only if $I_{i,t}=2$. That is, $I_{i,t}$ is the expert chosen at time $t$ in the $i^{th}$ game. 
We also define the empirical distribution of plays $q^i_n=(q^i_{1,n},q^i_{2,n})$ in game $i$ as
$q^i_{j,n} = \frac{\sum_{t=1}^n \IND_{I_{i,t}=j}}{n}$.
Let $J_{i,n}$ be drawn according to $q_n^i$.

In this proof we consider a set of $2^{d/2}$ adversaries. For $\alpha=(\alpha_1,\hdots,\alpha_{d/2}) \in \{1,2\}^{d/2}$ we define the $\alpha$-adversary as follows:
For any $t \in \{1,\hdots,n\}$, the loss of expert $\alpha_i$ in game $i$ is drawn from a Bernoulli of parameter $1/2$ while the loss of the other expert in game $i$ is drawn from a Bernoulli of parameter $1/2+\epsilon$. We note $\E_{\alpha}$ when we integrate with respect to the reward
generation process of the $\alpha$-adversary. We note $\P_{i, \alpha}$ the law of $J_{i,n}$ when the forecaster plays against the 
$\alpha$-adversary. Remark that we have $\P_{i, \alpha}(J_{i,n} = j) = \E_{\alpha} \frac{1}{n} \sum_{t=1}^n \IND_{I_{i,t}=j}$, 
hence, against the $\alpha$-adversary we have:
$$\oR_n = \E_{\alpha} \sum_{t=1}^n \sum_{i=1}^{d/2} \epsilon \IND_{I_{i,t} \neq \alpha_i} = n \epsilon \sum_{i=1}^{d/2} \left(1 - \P_{i, \alpha}(J_{i,t}=\alpha_i)\right),$$
which implies (since the maximum is larger than the mean)
\begin{equation} \label{eq:firststepmm}
\sup_{\alpha \in \{1,2\}^{d/2}} \oR_n \geq n \epsilon \sum_{i=1}^{d/2} \left(1 - \frac{1}{2^{d/2}} \sum_{\alpha \in \{1,2\}^{d/2}} \P_{i, \alpha}(J_{i,n} = \alpha_i)\right).
\end{equation}

\noindent {\em Second step: information inequality.}
\newline

Let $\P_{-i,\alpha}$ be the law of $J_{i,n}$ against the adversary which plays like the $\alpha$-adversary except that in the $i^{th}$ game, the losses of both coordinates are drawn from a Bernoulli of parameter $1/2+\epsilon$ (we call it the $(-i,\alpha)$-adversary). Now we use Pinsker's inequality which gives:
$$\P_{i, \alpha}(J_{i,n} = \alpha_i) \leq \P_{- i, \alpha}(J_{i,n} = \alpha_i)  + \sqrt{\frac{1}{2} \K(\P_{- i, \alpha},\P_{i, \alpha})},$$
and thus, (thanks to the concavity of the square root)
\begin{equation} \label{eq:pinsk}
\frac{1}{2^{d/2}} \sum_{\alpha \in \{1,2\}^{d/2}} \P_{i, \alpha}(J_{i,n} = \alpha_i) \leq \frac{1}{2} + \sqrt{\frac{1}{2^{d/2+1}} \sum_{\alpha \in \{1,2\}^{d/2}} \K(\P_{- i, \alpha},\P_{i, \alpha})}.
\end{equation}
\newline

\noindent {\em Third step: computation of $\K(\P_{- i, \alpha},\P_{i, \alpha})$
with the chain rule for Kullback-Leibler divergence.} 
\newline

Note that since the forecaster is deterministic, the sequence of
observed losses (up to time $n$) $W_n \in \{0,1, \hdots, d\}^n$
uniquely determines the empirical distribution of plays $q_n^i$, and
in particular the law of $J_{i,n}$ conditionally to $W_n$ is the same
for any adversary. Thus, if we note $\P_{\alpha}^n$ (respectively
$\P_{-i,\alpha}^n$) the law of $W_n$ when the forecaster plays against
the $\alpha$-adversary (respectively the $(-i,\alpha)$-adversary),
then one can easily prove that
$
\K(\P_{- i, \alpha},\P_{i, \alpha}) \leq \K(\P_{-i,\alpha}^n, \P_{\alpha}^n)
$. 
Now we use the chain rule for Kullback-Leibler divergence
iteratively to introduce the laws $\P^t_{\alpha}$ of the observed
losses $W_t$ up to time $t$. More precisely, we have,
\begin{align*}
\K(\P_{-i,\alpha}^n, \P_{\alpha}^n) 
& =  \K(\P_{-i,\alpha}^1, \P_{\alpha}^1) + \sum_{t=2}^n \sum_{w_{t-1} \in \{0,1,\hdots,d\}^{t-1}} \P_{-i,\alpha}^{t-1}(w_{t-1}) \K(\P_{-i,\alpha}^t(. | w_{t-1}),\P_{\alpha}^t(. | w_{t-1})) \\
& =  \K\left(\cB_{\emptyset}, \cB_{\emptyset}'\right) \IND_{I_{i,1} = \alpha_i} + \sum_{t=2}^n \sum_{w_{t-1} : I_{i,t} = \alpha_i} \P_{-i,\alpha}^{t-1}(w_{t-1}) \K\left(\cB_{w_{t-1}}, \cB_{w_{t-1}}'\right),
\end{align*}
where $\cB_{w_{t-1}}$ and $\cB_{w_{t-1}}'$ are sums of $d/2$ Bernoulli distributions with parameters in $\{1/2,1/2+\epsilon\}$ and such that the number of Bernoullis with parameter $1/2+\epsilon$ in $\cB_{w_{t-1}}$ is equal to the number of Bernoullis with parameter $1/2+\epsilon$ in $\cB_{w_{t-1}}'$ plus one. 
Now using Lemma \ref{lem:KLbinomials} (see Appendix \ref{sec:AppTech}) we obtain
$\K(\P_{-i,\alpha}^n, \P_{\alpha}^n) \leq \frac{16 \; \epsilon^2}{d} \E_{-i,\alpha} \sum_{t=1}^n \IND_{I_{i,t}=\alpha_i}$.
Summing and plugging this into \eqref{eq:pinsk} we obtain 
$\frac{1}{2^{d/2}} \sum_{\alpha \in \{1,2\}^{d/2}} \P_{i, \alpha}(J_{i,n} = \alpha_i) \leq \frac{1}{2} + 2 \epsilon \sqrt{\frac{n}{d}}$.
% \begin{equation} \label{eq:pinsk2}
% \frac{1}{2^{d/2}} \sum_{\alpha \in \{1,2\}^{d/2}} \P_{i, \alpha}(J_{i,n} = \alpha_i) \leq \frac{1}{2} + 2 \epsilon \sqrt{\frac{n}{d}}.
% \end{equation}
To conclude the proof one needs to plug in %\eqref{eq:pinsk2} 
this last equation in \eqref{eq:firststepmm} along with straightforward computations.
\end{proof}

\begin{theorem} \label{th:lbL2}
Let $n \geq d$. Against $L_{2}$ adversaries 
in the cases of  full information and semi-bandit games, we have
$$R_n \geq 0.05 \sqrt{d n} ,$$
and in the bandit game
$$R_n \geq 0.05 \min(n , d \sqrt{n}).$$
\end{theorem}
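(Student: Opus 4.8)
The plan is to use two constructions: one that handles the full information game (and hence the semi-bandit game, since every semi-bandit forecaster is a legal full-information forecaster, so the semi-bandit minimax regret is at least the full-information one), and a second, more delicate one for the bandit game. Throughout assume $d$ even (general $d$ only affects constants), and, following the template of the proof of Theorem~\ref{th:lbLinfty}, restrict to deterministic forecasters where convenient (Fubini gives the general case).

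\emph{Full information and semi-bandit.} Take $\cS=\{v\in\{0,1\}^d:\sum_{i=1}^d v_i=d/2\}$ and the oblivious randomized adversary that at each round $t$ draws $I_t$ uniformly in $\{1,\dots,d\}$ and plays $\ell_t=e_{I_t}$; this is an $L_2$ adversary because $\ell_t^Tv=v_{I_t}\in\{0,1\}$ for every $v\in\cS$. By the usual averaging over the randomized adversary, $\sup\oR_n\ge \E\sum_{t=1}^n\ell_t^TV_t-\E\min_{v\in\cS}\sum_{t=1}^n\ell_t^Tv$. Since the adversary is oblivious, $I_t$ is independent of $V_t$, so $\E[\ell_t^TV_t\mid V_t]=|V_t|/d=1/2$ and the first term is exactly $n/2$; writing $N_i=\#\{t\le n:I_t=i\}\sim\mathrm{Multinomial}(n;1/d,\dots,1/d)$, the second term is the expected sum of the $d/2$ smallest $N_i$. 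Hence $\sup\oR_n\ge \E\big[(\text{sum of the }d/2\text{ largest }N_i)-\tfrac n2\big]$, and it remains to lower bound this by $0.05\sqrt{dn}$. This is a multinomial anti-concentration estimate: each $N_i$ has mean $n/d$ and mean absolute deviation $\asymp\sqrt{n/d}$ for $n\ge d$, and keeping the larger half of these (nearly independent) counts gains $\asymp d\cdot\sqrt{n/d}=\sqrt{dn}$ over the mean; a direct computation using a Khintchine-type lower bound on $\E\big|\mathrm{Bin}(n,1/d)-n/d\big|$ (and control of the gap between mean and median) yields the constant. The same $\cS$ and adversary give the semi-bandit bound.

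\emph{Bandit.} This case appeared in \cite{DHK08}; we reprove it with explicit constants, mirroring the proof of Theorem~\ref{th:lbLinfty}. Let $\cS=\{v:v_{2i-1}+v_{2i}=1,\ i=1,\dots,d/2\}$ and let the hidden parameter $\alpha\in\{1,2\}^{d/2}$ be uniform, with $\Theta=\{2i-2+\alpha_i:i\le d/2\}$ the set of ``good'' coordinates (one per pair, $|\Theta|=d/2$). For a level $\eps\in(0,1/2]$ chosen below, the $\alpha$-adversary draws a coordinate $J_t$ with $\P(J_t=j)=\tfrac1d(1-\eps)$ if $j\in\Theta$ and $\tfrac1d(1+\eps)$ if $j\notin\Theta$, and plays $\ell_t=e_{J_t}$; this satisfies the $L_2$ assumption since $\ell_t^Tv=\IND_{J_t\in v}\le1$. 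One checks that $v^\star=\Theta\in\cS$ is optimal and that, against the $\alpha$-adversary, $\oR_n=\tfrac{2\eps}{d}\sum_{t=1}^n\E\,|V_t\cap\Theta^c|$, i.e.\ the regret is proportional to the number of wrong pair-choices. Crucially, the bandit feedback $\ell_t^TV_t=\IND_{J_t\in V_t}$ is a Bernoulli variable of parameter $\tfrac12-\tfrac\eps2+\tfrac{2\eps}{d}|V_t\cap\Theta^c|$: an almost fair coin, hence a very weak signal. Comparing, for each $i$, the $\alpha$-adversary with the $(-i,\alpha)$-adversary in which the two coordinates of pair $i$ are each drawn with probability $1/d$, their observation laws differ only through pair $i$'s contribution, by $\eps/d$ per round, so Pinsker's inequality and the chain rule for Kullback--Leibler divergence (exactly as in the proof of Theorem~\ref{th:lbLinfty}, but now with a genuinely noisy Bernoulli observation, so that the per-round divergence is $O(\eps^2/d^2)$ rather than infinite) give $\tfrac1{2^{d/2}}\sum_\alpha\P_{i,\alpha}(J_{i,n}=\alpha_i)\le \tfrac12+O(\eps\sqrt n/d)$, hence $\tfrac1{2^{d/2}}\sum_\alpha\oR_n\gtrsim \eps n\big(\tfrac12-O(\eps\sqrt n/d)\big)$. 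Optimizing over $\eps\in(0,1/2]$ one takes $\eps\asymp\min\{1/2,\ d/\sqrt n\}$: for $n\lesssim d^2$ the feedback is so weak that the forecaster essentially cannot locate $\Theta$, giving linear regret $\gtrsim n$; for $n\gtrsim d^2$ it gives $\gtrsim d\sqrt n$. Tracking the constants yields $R_n\ge0.05\min(n,d\sqrt n)$.

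\emph{Main obstacle.} For the full information and semi-bandit bounds the only real work is the multinomial anti-concentration estimate with an explicit constant — routine but slightly delicate. For the bandit bound the crux, as in Theorem~\ref{th:lbLinfty}, is the information-theoretic step: one must design the adversary so that the one-dimensional bandit feedback is a genuinely noisy (variance $\Theta(1)$) observation — here a near-fair coin — rather than a quantity that determines the hidden parameter, for otherwise the forecaster identifies $\Theta$ quickly and no superlogarithmic lower bound survives; and then one must carry out the chain-rule decomposition of the KL divergence carefully, accounting for the fact that each round carries only $O(\eps^2/d^2)$ nats of information about a given pair. This tension is exactly what produces the $\min(n,d\sqrt n)$ shape, including the unusual linear-regret regime $d\le n\lesssim d^2$.
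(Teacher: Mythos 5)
Your proposal is correct, and it splits into two halves of different character relative to the paper. For the bandit game you follow essentially the same route as the paper: the paper takes the pairs class $\cS=\{v: v_{2i-1}+v_{2i}=1\}$ and converts the $L_\infty$ construction into an $L_2$ one by supporting the loss on a single uniformly chosen coordinate $E_t$, so that the feedback becomes a Bernoulli whose mean shifts by only $\eps/d$ between the $\alpha$- and $(-i,\alpha)$-adversaries, giving a per-round $\K$ of order $\eps^2/d^2$ and hence, after Pinsker and the chain rule, the same $\eps n(\tfrac12-O(\eps\sqrt n/d))$ trade-off and the $\min(n,d\sqrt n)$ shape you obtain. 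Your variant (biasing the selection probabilities to $\tfrac{1\mp\eps}{d}$ with a deterministic unit loss, rather than uniform selection with Bernoulli values) is cosmetically different but information-theoretically identical, and your identification of the ``near-fair coin'' observation as the crux is exactly right. For full information and semi-bandit, however, you take a genuinely different route: the paper again runs the KL/Pinsker machinery on the pairs class with the loss supported on one random coordinate (per-round $\K$ of order $\eps^2/d$, then optimize $\eps\asymp\sqrt{d/n}$), whereas you use the half-set class $\{v:\sum_i v_i=d/2\}$ against the oblivious adversary $\ell_t=e_{I_t}$ and reduce the bound to a purely combinatorial anti-concentration statement about the gap between $n/2$ and the expected sum of the $d/2$ smallest multinomial cells. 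Your approach is more elementary and needs no information inequality; its price is that the explicit constant now rests on a non-asymptotic Khintchine-type lower bound $\E|\mathrm{Bin}(n,1/d)-n/d|\gtrsim\sqrt{n/d}$ valid down to $n=d$, together with the order-statistics bookkeeping (the excess of the top half over $n/2$ is $\tfrac12\sum_i|N_i-M|$ for a median-splitting level $M$, not for the mean $n/d$), which you flag but do not carry out. A clean way to discharge this, which also recycles the paper's own ingredient, is to pair the coordinates and bound $\min_{v\in\cS}\sum_t\ell_t^Tv\le\sum_{j=1}^{d/2}\min(N_{2j-1},N_{2j})$, reducing the estimate to $\tfrac12\sum_j\E|N_{2j-1}-N_{2j}|$, i.e.\ to $d/2$ two-expert comparisons of the kind handled by the result of \cite{AB10} already invoked in the proof of Theorem~\ref{th:lbLinfty}; with either route the constant comfortably exceeds $0.05$. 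Aside from these deferred but standard computations (which the paper itself also leaves as ``straightforward''), the argument is sound.
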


\bibliography{biblio}

\newpage

\appendix

\section{Standard prediction games}
\label{sec:inf}

\begin{figure}[t]
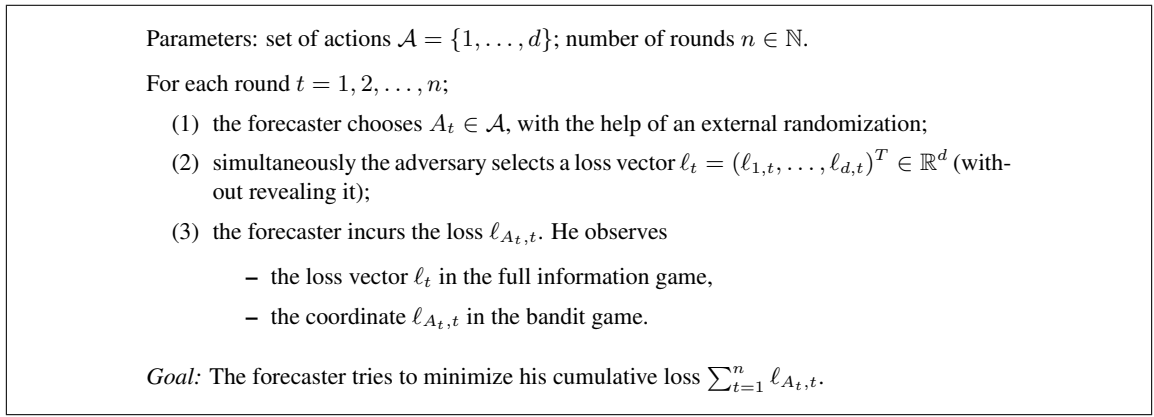

\bookbox{\small
{Parameters:} set of actions $\cA = \{1, \dots ,d\}$; number of rounds $n \in \N$.

\medskip\noindent
For each round $t=1,2,\ldots,n$;
\begin{itemize}
\item[(1)]
the forecaster chooses $A_t \in \cA$, with the help of an external randomization;
\item[(2)]
simultaneously the adversary selects a loss vector $\ell_t=(\ell_{1,t},\dots,\ell_{d,t})^T\in\R^d$ (without revealing it);
\item[(3)]
the forecaster incurs the loss $\ell_{A_t,t}$. He observes
\begin{itemize}
\item the loss vector $\ell_t$ in the full information game,
\item the coordinate $\ell_{A_t,t}$ in the bandit game.
\end{itemize}
\end{itemize}

\medskip\noindent
{\em Goal:} The forecaster tries to minimize his cumulative loss $\sum_{t=1}^n \ell_{A_t,t}$.
}
\caption{Standard prediction games.}
\label{fig:notcomband}
\end{figure}

It is well-known that the standard prediction games described in Figure \ref{fig:notcomband}
are specific cases of the combinatorial prediction games described in Figure \ref{fig:comband}.
Indeed, consider $\cS=\{\ba_1,\dots,\ba_d\}$, where 
$\ba_i\in\{0,1\}^d$ is the vector whose only nonzero component is the $i$-th one. 
The standard and combinatorial prediction games are then equivalent by using $V_t=\ba_{A_t}$ and noticing that 
$\ell_t^T\ba_i = \ell_{i,t}$.
In particular, the semi-bandit and bandit combinatorial prediction games are then both
equivalent to the traditional multi-armed bandit game.

We now show that {\sc inf} (defined in Figure 2 of \cite{AB10}) is a special case of {\sc lininf}.

\begin{proof}
Indeed, suppose that the estimates $\tl_1,\dots,\tl_n$ are nonnegative (coordinate-wise),
and take $w_1=\big(\frac1d,\dots,\frac1d\big)$.
Then the vector $w'_{t+1}$ satisfying \eqref{eq:wp} exists, and is defined coordinate-wise by
  $
   \psi^{-1}(w'_{i,t+1}) = \psi^{-1}(w_{i,t}) - \tl_{i,t}.
  $
Besides, the optimality condition of \eqref{eq:proj} implies the existence of $c_t\in\R$ (independent of $i$)
such that
  $
  \psi^{-1}(w_{i,t+1}) = \psi^{-1}(w'_{i,t+1}) + c_t.
  $
It implies $\psi^{-1}(w_{i,t})= \psi^{-1}(w_{i,1})-\sum_{s=1}^{t-1} (\tl_{i,s}-c_s)$ for any $t\ge1$.
So there exists $C_t\in\R$ such that $w_{i,t}=\psi\big(\sum_{s=1}^{t-1} (1-\tl_{i,s})-C_t\big)$.
Since $w_t\in\Conv(\cS)$, the constant $C_t$ should satisfy $\sum_{i=1}^n w_{i,t} =1$.
We thus recover {\sc inf} with the estimate $1-\tl_{i,s}$ of the reward $1-\ell_{i,t}$. So the Bregman projection has here a simple solution depending on a unique constant $C_t$ obtained by solving 
the equality $\sum_{i=1}^n \psi\big(\sum_{s=1}^{t-1} (1-\tl_{i,s})-C_t\big) = 1$.
\end{proof}

Next we show how to obtain the minimax $\sqrt{n d}$ regret bound, with a much simpler proof than the one proposed in \cite{AB10}, as well as a better constant.

\begin{theorem} \label{th:polyinf}
Let $q>1$.
For the {\sc inf} forecaster \big(that is for \cleb\  with $w_1=\big(\frac1d,\dots,\frac1d\big)^T$ and $\cS=\{\ba_1,\dots,\ba_d\}$\big)
using $\psi(x)=(-\eta x)^{-q}$ and $\tl_{i,t}=\ell_{i,t} \frac{V_{i,t}}{w_{i,t}}$, we have
	$$
	\oR_n \le \frac{q d^{\frac1q}}{\eta(q-1)} 
		+ \frac{q\eta n d^{1-\frac1q}}2.
	$$
In particular for $\eta=\sqrt2d^{\frac1q-\frac12}[(q-1)n]^{-\frac12}$, we have
$\oR_n \le q \sqrt{\frac{2nd}{q-1}}$. 
\end{theorem}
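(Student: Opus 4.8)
The plan is to read the result off Theorem~\ref{th:lininf}. The function $\psi(x)=(-\eta x)^{-q}$ is an $\omega$-potential with $\omega=0$ and $a=0$, and the associated \cleb\ on $\cS=\{\ba_1,\dots,\ba_d\}$ is precisely {\sc inf} (this is exactly the identification carried out in the first proof of this appendix). One computes $\psi^{-1}(y)=-\eta^{-1}y^{-1/q}$ and $(\psi^{-1})'(y)=(\eta q)^{-1}y^{-(q+1)/q}$, and since $\tl_{i,t}=\ell_{i,t}V_{i,t}/w_{i,t}\ge 0$ the nonnegative-estimates inequality \eqref{eq:psic} applies: for every fixed $j\in\{1,\dots,d\}$,
$$\sum_{t=1}^n\tl_t^Tw_t-\sum_{t=1}^n\tl_t^T\ba_j\le D_F(\ba_j,w_1)+\frac{\eta q}{2}\sum_{t=1}^n\sum_{i=1}^d\tl_{i,t}^2\,w_{i,t}^{(q+1)/q}.$$

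First I would evaluate $D_F(\ba_j,w_1)$ via the closed form \eqref{eq:dfpsi} with $w_1=(1/d,\dots,1/d)^T$: using $\psi^{-1}(s)=-\eta^{-1}s^{-1/q}$ and $\int\psi^{-1}(s)\,ds=-\frac{q}{\eta(q-1)}s^{(q-1)/q}$, the single coordinate $i=j$ and the $d-1$ coordinates $i\neq j$ combine, the terms proportional to $d^{1/q}/\eta$ cancel, and one is left with $D_F(\ba_j,w_1)=\frac{q}{\eta(q-1)}\big(d^{1/q}-1\big)\le\frac{q\,d^{1/q}}{\eta(q-1)}$, which is the first term of the claimed bound.

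Next I would take expectations. The weights stay in $(0,\infty)^d$, so $\tl_t$ is well defined, and, writing $\cF_{t-1}$ for the past, $\E[\tl_{i,t}\mid\cF_{t-1}]=\ell_{i,t}$ because $\E[V_{i,t}\mid\cF_{t-1}]=w_{i,t}$; hence $\E\sum_t\tl_t^Tw_t=\E\sum_t\ell_t^TV_t$ and $\E\sum_t\tl_t^T\ba_j=\E\sum_t\ell_{j,t}$. For the second term, $\E[\tl_{i,t}^2\mid\cF_{t-1}]=\ell_{i,t}^2/w_{i,t}$, so $\E[\tl_{i,t}^2\,w_{i,t}^{(q+1)/q}\mid\cF_{t-1}]=\ell_{i,t}^2\,w_{i,t}^{1/q}\le w_{i,t}^{1/q}$ (losses lie in $[0,1]$); summing over $i$ and invoking concavity of $x\mapsto x^{1/q}$ together with $\sum_iw_{i,t}=1$ yields $\sum_iw_{i,t}^{1/q}\le d^{1-1/q}$. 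Putting the pieces together, for every fixed $j$,
$$\E\sum_{t=1}^n\ell_t^TV_t-\E\sum_{t=1}^n\ell_{j,t}\le\frac{q\,d^{1/q}}{\eta(q-1)}+\frac{q\eta n\,d^{1-1/q}}{2},$$
and since the right-hand side is independent of $j$ one takes $j$ attaining $\min_j\E\sum_t\ell_{j,t}$ to get the stated bound on $\oR_n$. The ``in particular'' claim is then a one-line substitution of $\eta=\sqrt2\,d^{1/q-1/2}[(q-1)n]^{-1/2}$, which makes both terms equal to $\frac{q}{\sqrt2}\sqrt{nd/(q-1)}$, summing to $q\sqrt{2nd/(q-1)}$.

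There is no genuinely hard step: the whole argument is an instantiation of Theorem~\ref{th:lininf}. The only points needing care are the explicit computation of $D_F(\ba_j,w_1)$ (noticing the cancellation), the bookkeeping of the conditional first and second moments of $\tl_{i,t}$, and the mild subtlety — already flagged in the paper's discussion of the definition of $\oR_n$ — that the comparison must hold simultaneously for each fixed vertex against a possibly adaptive adversary, so that passing to the minimum over $j$ is legitimate.
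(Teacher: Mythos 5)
Your proposal is correct and follows essentially the same route as the paper: instantiate \eqref{eq:psic} with $\psi(x)=(-\eta x)^{-q}$, compute $D_F(\ba_j,w_1)=\frac{q}{\eta(q-1)}(d^{1/q}-1)$ explicitly, use $\E[\tl_{i,t}^2\mid\cF_{t-1}]=\ell_{i,t}^2/w_{i,t}$ so that the variance term reduces to $\E\sum_i w_{i,t}^{1/q}$, and bound that by $d^{1-1/q}$ via H\"older. The only difference is that you spell out the unbiasedness bookkeeping and the passage from a fixed comparator $\ba_j$ to $\oR_n$, which the paper delegates to the general discussion following Theorem~\ref{th:cleb}.
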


In view of this last bound, the optimal $q$ is $q=2$, which leads to 
$\oR_n \le 2 \sqrt{2nd}$. This improves on the bound 
$\oR_n \le 8 \sqrt{nd}$ obtained in Theorem 11 of \cite{AB10}.
The {\sc inf} forecaster with the above polynomial $\psi$ is referred to as {\sc polyinf}
in Figure \ref{fig:organi}.

\begin{proof}
We apply \eqref{eq:psic}. First we bound the divergence term.
We have $\psi^{-1}(s) = -\frac1\eta s^{-1/q}$, and 
$	
D_F(u,v) = \frac1\eta \sum_{i=1}^d \Big(\frac1{q-1}v_i^{1-\frac1q} 
	- \frac{q}{q-1}u_i^{1-\frac1q} + u_i v_i^{-\frac1q} \Big),
$
hence
	\begin{align*}
	\max_{u\in\Conv(\cS)} D_F(u,w_1)
		= D_F\Big(\big(1,0,\dots,0\big)^T,\big(\frac1d,\dots,\frac1d\big)^T\Big)
		= \frac{q}{\eta(q-1)} \big(d^{\frac1q}-1\big).
	\end{align*}
Combining this with $(\psi^{-1})'(w_{i,t})=\frac1{q\eta} w_{i,t}^{-1-\frac1q}$ 
and \eqref{eq:psic}, we obtain
	\begin{align*}
	\oR_n & \le \frac{q d^{\frac1q}}{\eta(q-1)} 
		+ \frac{q\eta}2 \E \sum_{t=1}^n\sum_{i=1}^d w_{i,t}^{1+\frac1q} \tl_{i,t}^2\\
	& = \frac{q d^{\frac1q}}{\eta(q-1)} 
		+ \frac{q\eta}2 \sum_{t=1}^n\sum_{i=1}^d \E\big( w_{i,t}^{\frac1q-1} V_{i,t} \ell_{i,t}^2 \big)
    \le \frac{q d^{\frac1q}}{\eta(q-1)} 
		+ \frac{q\eta}2 \sum_{t=1}^n\sum_{i=1}^d \E\big( w_{i,t}^{\frac1q} \big)
    \le \frac{q d^{\frac1q}}{\eta(q-1)} 
		+ \frac{q\eta}2 n d^{1-\frac1q},
	\end{align*}
where in the last step we use that by H\"older's inequality,
$\sum_{i=1}^d ( w_{i,t}^{\frac1q} \times 1 )\le \big( \sum_{i=1}^d w_{i,t} \big)^{\frac1q} \times d^{1-\frac1q}$.
\end{proof}

%For this family of 

%\begin{remark} 
%One may wonder why the analysis here is much simpler than the one proposed in 
%\cite{AB10}. Let us detail how the two proofs differ. Instead of weakening 
%\eqref{eq:gen} into \eqref{eq:cleb}, let us notice that the right-hand side of 
%\eqref{eq:gen} is equal to 
%  $$
%  D_F(u,w_1) - D_F(u,w_{n+1})+ F(w_1)- F(w_{n+1})
%    +\sum_{t=1}^n (w_{t+1}-w_t)^T \nabla F(w'_{t+1}).
%    $$
%In the case of {\sc inf}, the last term is 
%$\sum_{t=1}^n \sum_{i=1}^d (w_{i,t+1}-w_{i,t})^T \psi^{-1}(w'_{i,t+1})$.
%The proof in \cite{AB10} was tedious because it has recourse to
%a Taylor expansion of this last summand. Here, the nice thing is that, using the nonnegativity
%of the Bregman divergences, we remove terms comparing $w_{t+1}$ to $w'_{t+1}$,
%and it is much easier to control the difference between $w_{t}$
%and $w'_{t+1}$ than controlling the difference (due to projection) between $w_{t+1}$ and $w'_{t+1}$.
%
%Maybe the other reason is that we are using loss-magnifying estimate instead of reward-magnifying.
%
%\end{remark}

\section{Proofs of Theorems in Section \ref{sec:fullinfo}} \label{fi:app}
\subsection*{Proof of Theorem \ref{fi:linexp:infty}}
We have 
	$D_F(u,v)=\frac1\eta \sum_{i=1}^d 
	\Big( u_i\log\big(\frac{u_i}{v_i}\big)-u_i+v_i\Big),$
hence from the Pythagorean theorem,
	$$
	D_F(u,w_1)\le D_F\big(u,(1,\dots,1)^T\big)\le\frac{d}\eta.
	$$
Since we have $\sum_{i=1}^d w_{i,t}\le d$, 
Theorem \ref{th:lininf} implies
	$
	\oR_n \le \frac{d}\eta + \frac{\eta}2 \E \sum_{t=1}^n \sum_{i=1}^d 
		w_{i,t} \ell_{i,t}^2 \le \frac{d}\eta + \frac{nd\eta}2.
	$

\subsection*{Proof of Theorem \ref{fi:linexp:2}}
As in the previous proof, we have
	$
	D_F(u,w_1)\le\frac{d}\eta,
	$
but under the $L_2$ constraint, we can improve the bound on $\sum_{i=1}^d w_{i,t} \ell_{i,t}^2$
by using $\sum_{i=1}^d w_{i,t} \ell_{i,t} \le 1$ (since $w_t\in\Conv(\cS)$).
This gives $\oR_n \le \frac{d}\eta + \frac{n\eta}2.$
	% $$
	% \oR_n \le \frac{d}\eta + \frac{\eta}2 \E \sum_{t=1}^n \sum_{i=1}^d 
		% w_{i,t} \ell_{i,t} \le \frac{d}\eta + \frac{n\eta}2.
	% $$

\subsection*{Proof of Theorem \ref{fi:linpoly:infty}}
We have 
	$D_F(u,v)=\frac1\eta \sum_{i=1}^d 
	\Big( \frac1{q-1} v_i^{1-\frac1q} - \frac{q}{q-1} u_i^{1-\frac1q} 
		+ u_i v_i^{-\frac1q} \Big),$
hence 
	$
	D_F(u,w_1)\le\frac{d}{\eta(q-1)}.
	$
Since we have $w_{i,t}^{1+\frac1q} \ell_{i,t}^2\le 1$, 
Theorem \ref{th:lininf} implies
	$
	\oR_n \le \frac{d}{\eta(q-1)} + \frac{\eta q}2 \E \sum_{t=1}^n \sum_{i=1}^d 
		w_{i,t}^{1+\frac1q} \ell_{i,t}^2 \le \frac{d}{\eta(q-1)} + \frac{ndq\eta}2.
	$

\subsection*{Proof of Theorem \ref{fi:linpoly:2}}
As in the previous proof, we have
	$
	D_F(u,w_1)\le\frac{d}{\eta(q-1)}.
	$
Under the $L_2$ constraint, we can improve the bound on $\sum_{i=1}^d w_{i,t}^{1+\frac1q} \ell_{i,t}^2$
by using $\sum_{i=1}^d w_{i,t} \ell_{i,t} \le 1$ (since $w_t\in\Conv(\cS)$).
This gives 
	$$
	\oR_n \le \frac{d}{\eta(q-1)} + \frac{\eta q}2 \E \sum_{t=1}^n \sum_{i=1}^d 
		w_{i,t}^{1+\frac1q} \ell_{i,t}^2 \le \frac{d}{\eta(q-1)} + \frac{nq\eta}2.
	$$

\subsection*{Proof of Theorem \ref{fi:exp2:infty}}
Using $\log(|\cS|)\le d\log 2$, $0\le \tl_t^Tv\le d$ and $\sum_{v \in \cS} w_{v,t}=1$ in
Theorem \ref{th:Exp2}, we get the result.

\subsection*{Proof of Theorem \ref{fi:exp2:2}}
Using $\log(|\cS|)\le d\log 2$, $0\le \tl_t^Tv\le 1$ and $\sum_{v \in \cS} w_{v,t}=1$ in
Theorem \ref{th:Exp2}, we get the result.

\section{Proofs of Theorems in Section \ref{sec:semibandit}} \label{sb:app}
\subsection*{Proof of Theorem \ref{sb:linexp:infty}}
We have again
	$
	D_F(u,w_1)\le\frac{d}\eta.
	$
Since we have $\sum_{i=1}^d \ell_{i,t}\le d$, 
Theorem \ref{th:lininf} implies
	$$
	\oR_n \le \frac{d}\eta + \frac{\eta}2 \E \sum_{t=1}^n \sum_{i=1}^d 
		\ell_{i,t}^2 \frac{V_{i,t}}{w_{i,t}} = \frac{d}\eta + \frac{\eta}2 
		\sum_{t=1}^n \sum_{i=1}^d 
		\E(\ell_{i,t}^2)\le \frac{d}\eta + \frac{nd\eta}2.
	$$

\subsection*{Proof of Theorem \ref{sb:linexp:infty2}}
The starting point is Theorem \ref{th:lininf}, which, by using 
    $\E \big(\ell_{i,t}^2 \frac{V_{i,t}}{w_{i,t}}\big) = \E \ell_{i,t}^2 \le \E \ell_{i,t} $, implies
	\begin{equation} \label{eq:asa}
	\oR_n \le D_F\big(u,w_1) + \frac{\eta}2 \E \sum_{t=1}^n \sum_{i=1}^d 
		\ell_{i,t}^2 \frac{V_{i,t}}{w_{i,t}} \le D_F\big(u,w_1) + \frac{\eta}2 
		\E \sum_{t=1}^n \sum_{i=1}^d \ell_{i,t}.
	\end{equation}
For any $u\in[0,1]^d$ such that $\sum_{i=1}^d u_i\le k$, we have 
    \begin{equation} \label{eq:asb}
    D_F\big(u,w_1)\le D_F\bigg(u,\Big(\frac{k}d,\dots,\frac{k}d\Big)^T\bigg)
\le \frac1\eta\bigg(k+\sum_{i=1}^d u_i \log\Big(\frac{du_i}{ke}\Big)\bigg)
\le %\frac{k}\eta\max\bigg(\log\Big(\frac{d}k\Big),1\bigg) =
\frac{k\cL}\eta,
    \end{equation}
where the last inequality can be obtained by writing the optimality conditions. More precisely, two cases are considered depending on whether holds $\sum_{i=1}^d u_i=k$ at the optimum: when it is the case, the maximum is achieved for $u$ of the form $u=(1,\dots,1,0,\dots,0)^T$; otherwise, $u=(0,\dots,0)^T$ achieves the maximum. 
The desired results are then obtained by combining \eqref{eq:asa}, \eqref{eq:asb} and an upper bound on $\sum_{i=1}^d \ell_{i,t}$: indeed, under the $L_\infty$ assumption, we have $\sum_{i=1}^d \ell_{i,t}\le d$.
Under the $L_2$ assumption, since $\cS$ is an almost symmetric set of order $k$, there exists $z\in\Conv(\cS) \cap \big[\frac{k}{2d};1\big]^d$, and consequently
	$\sum_{i=1}^d \ell_{i,t} \le \sum_{i=1}^d \big(\frac{2d}k z_{i}\big) \ell_{i,t} \le \frac{2d}k.$
\iffalse
Now consider $\cS=\cup_{1\le k\le d} \cS_k$, with $\cS_k$ an almost symmetric set
of order $k$. Consider the $d-1$ {\sc linexp} algorithms associated with the different $\cS_k$. Let us use PolyInf with $\psi(x)=\frac{n}2 (-x)^{-2}$ on top of them.
The resulting algorithm satisfies: $\oR_n \le 2\sqrt{nd\log d}+2\sqrt{2dn}.$
So there is a gain in the regret bound of a factor $\sqrt{d}$ if the set of 
actions is the union of almost symmetric sets.
\fi

\subsection*{Proof of Theorem \ref{sb:linpoly:infty}}
We have again
	$
	D_F(u,w_1)\le\frac{d}{\eta(q-1)}.
	$
Since we have $w_{i,t}^{\frac1q} \ell_{i,t}^2\le 1$, 
Theorem \ref{th:lininf} implies
	$$
	\oR_n 
	%\le \frac{d}{\eta(q-1)} + \frac{\eta q}2 \E \sum_{t=1}^n \sum_{i=1}^d 
		%w_{i,t}^{1+\frac1q} \tl_{i,t}^2 
	\le \frac{d}{\eta(q-1)} + \frac{\eta q}2 \sum_{t=1}^n \sum_{i=1}^d 
		\E\big( w_{i,t}^{\frac1q} \ell_{i,t}^2 \big)
		\le \frac{d}{\eta(q-1)} + \frac{ndq\eta}2.
	$$

\subsection*{Proof of Theorem \ref{sb:linpoly:2}}
We have again
	$
	D_F(u,w_1)\le\frac{d}{\eta(q-1)}.
	$
From 
	$\E\big(w_{i,t}^{1+\frac1q} \tl_{i,t}^2\big)
		= \E\big(w_{i,t}^{\frac1q} \ell_{i,t}^2\big) \le \E\big[(w_{i,t} \ell_{i,t})^{\frac1q}\big]$ and Theorem \ref{th:lininf}, we get
	$$
	\oR_n 
	\le \frac{d}{\eta(q-1)} + \frac{\eta q}2 \sum_{t=1}^n \sum_{i=1}^d 
		\E\big[(w_{i,t} \ell_{i,t})^{\frac1q}\big]
		\le \frac{d}{\eta(q-1)} + \frac{n d^{1-\frac1q}q\eta}2.
	$$
where we use 
$\sum_{i=1}^d ( w_{i,t} \ell_{i,t} )^{\frac1q} \le \big( \sum_{i=1}^d w_{i,t} \ell_{i,t} \big)^{\frac1q} \times d^{1-\frac1q}$
in the last step.

\subsection*{Proof of Theorem \ref{sb:exp2:infty}}
Let $q_{i,t}=\sum_{v\in\cS:v_i=1} p_{v,t} = \E_{V_t\sim p_t}V_{i,t}$ for $i\in\{1\ldots,d\}$.
We have
\begin{align*}
\E_{V_t\sim p_t} \sum_{v \in \cS} p_{v,t} (\tilde{\ell}_t^T v)^2
 & = \E_{V_t\sim p_t, V_t'\sim p_t} (\tilde{\ell}_t^T V_t')^2 \\
& = \E_{V_t\sim p_t, V_t' \sim p_t} \sum_{i, j} \frac{\ell_{i,t} V_{i,t} \ell_{j,t} V_{j,t}}{q_{i,t} q_{j,t}} V_{i,t}' V_{j,t}' \\
& \le \E_{V_t\sim p_t, V_t'\sim p_t} \sum_{i, j} \ell_{i,t} \ell_{j,t}\frac{V_{i,t}}{q_{i,t}} \frac{V_{j,t}'}{q_{j,t}} 
= \bigg(\sum_{i=1}^d \ell_{i,t}\bigg)^2 \le d^2.
\end{align*}
Using $\log(|\cS|)\le d\log 2$, the result then follows from Theorem \ref{th:Exp2}.

\subsection*{Proof of Theorem \ref{sb:exp2:2}}
Let $q_{i,t}=\sum_{v\in\cS:v_i=1} p_{v,t} = \E_{V_t\sim p_t}V_{i,t}$ for $i\in\{1,\ldots,d\}$.
We have
\begin{align*}
\E_{V_t\sim p_t} \sum_{v \in \cS} p_{v,t} (\tilde{\ell}_t^T v)^2
& = \E_{V_t\sim p_t, V_t' \sim p_t} \sum_{i, j} \frac{\ell_{i,t} V_{i,t} \ell_{j,t} V_{j,t}}{q_{i,t} q_{j,t}} V_{i,t}' V_{j,t}' \\
& \leq \E_{V_t, V_t'} \sum_{i, j} \frac{\ell_{i,t} V_{i,t}}{q_{i,t}} \frac{V_{j,t}'}{q_{j,t}} \ell_{j,t} V_{j,t} \\
& = \E_{V_t} \sum_{i=1}^d \frac{\ell_{i,t} V_{i,t}}{q_{i,t}} \sum_{j=1}^d  \ell_{j,t} V_{j,t} 
\le \E_{V_t} \sum_{i=1}^d \frac{\ell_{i,t} V_{i,t}}{q_{i,t}} = \sum_{i=1}^d \ell_{i,t} \le d. 
\end{align*}
Using $\log(|\cS|)\le d\log 2$, the result then follows from Theorem \ref{th:Exp2}.

\section{Proof of Theorem \ref{th:lbL2}}
We consider the bandit game first. We use the notation and
adversaries defined in the proof of Theorem \ref{th:lbLinfty}. We
modify these adversaries as follows: at each turn one selects
uniformly at random $E_t \in \{1, \hdots, d\}$. Then, at time $t$, the
losses of all coordinates but $E_t$ are set to $0$. This new adversary
is clearly in $L_2$. For this new set of adversaries, one has to do
only two modifications in the proof of Theorem
\ref{th:lbLinfty}. First \eqref{eq:firststepmm} is replaced by:
$$\sup_{\alpha \in \{1,2\}^{d/2}} \oR_n \geq \frac{n \epsilon}{d} \sum_{i=1}^{d/2} \left(1 - \frac{1}{2^{d/2}} \sum_{\alpha \in \{1,2\}^{d/2}} \P_{i, \alpha}(J_{i,n} = \alpha_i)\right).$$
Second $\cB_{w_{t-1}}$ is now a Bernoulli with mean $\mu_t \in \left[\frac12 + \frac{\epsilon}d , \frac12 + \frac{\epsilon}2\right]$ and $\cB_{w_{t-1}}'$ is a Bernoulli with mean $\mu_t - \frac{\epsilon}d$, and thus we have
$$\K\left(\cB_{w_{t-1}},  \cB_{w_{t-1}}'\right) \leq \frac{4 \epsilon^2}{(1 - \epsilon^2) d^2} .$$
The proof is then concluded again with straightforward computations.
\newline

The proof for the full information game is exactly the same as the one
for bandit information, except that the definition of $W_t$ is
slightly different and implies that $\cB_{w_{t-1}}$ is now a Bernoulli
with mean $\frac1d \left(\frac12 + \epsilon\right)$ and
$\cB_{w_{t-1}}'$ is a Bernoulli with mean $\frac1{2d}$, which gives
$$\K\left(\cB_{w_{t-1}},  \cB_{w_{t-1}}'\right) \leq \frac{4 \epsilon^2}{2 d - 1}.$$

\section{Technical lemmas} \label{sec:AppTech}
We prove here two technical lemmas that were used in the proofs above.

\begin{lemma} \label{lem:tech1}
For any $k \in \N^*,$ for any $1\le c \le 2$, we have  
$$\frac{\sum_{i=0}^{k} (1- i/k) \binom{k}{i}^2 c^i}
{\sum_{i=0}^{k} \binom{k}{i}^2 c^i} \geq 1/3.$$
\end{lemma}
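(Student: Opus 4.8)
The plan is to reduce the inequality to an exact identity at $c=1$ plus a monotonicity-in-$c$ statement. Write the numerator as $N(c)=\sum_{i=0}^{k}(1-i/k)\binom ki^2 c^i$ and the denominator as $D(c)=\sum_{i=0}^{k}\binom ki^2 c^i$; the claim is $N(c)\ge D(c)/3$ for $1\le c\le 2$.

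First I would use the elementary identities $(1-i/k)\binom ki=\binom{k-1}{i}$ and $\tfrac ik\binom ki=\binom{k-1}{i-1}$, which together give $\binom ki^2=\binom{k-1}{i}\binom ki+\binom{k-1}{i-1}\binom ki$. Multiplying by $c^i$ and summing yields $D(c)=N(c)+M(c)$, where $N(c)=\sum_i\binom{k-1}{i}\binom ki c^i$ and $M(c)=\sum_i\binom{k-1}{i-1}\binom ki c^i$. Consequently $N(c)/D(c)\ge 1/3$ is equivalent to $M(c)\le 2N(c)$, so it suffices to prove the latter.

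Next, reindexing $M(c)=c\sum_{j=0}^{k-1}\binom{k-1}{j}\binom k{j+1}c^j$ and using $\binom k{j+1}=\frac{k-j}{j+1}\binom kj$, one gets $M(c)/N(c)=c\,\E[\phi(J_c)]$, where $\phi(j)=\frac{k-j}{j+1}$ and $J_c$ is the random variable on $\{0,\dots,k-1\}$ with $\P(J_c=j)\propto a_j c^j$, $a_j:=\binom{k-1}{j}\binom kj>0$. Now I would establish two facts. (i) At $c=1$ one has $\E[\phi(J_1)]=\big(\sum_j\binom{k-1}{j}\binom k{j+1}\big)/\big(\sum_j\binom{k-1}{j}\binom kj\big)=\binom{2k-1}{k-1}/\binom{2k-1}{k}=1$, both sums being evaluated by the Vandermonde identity (after rewriting $\binom k{j+1}=\binom k{k-1-j}$ and $\binom kj=\binom k{k-j}$). (ii) The map $c\mapsto \E[\phi(J_c)]$ is non-increasing on $(0,\infty)$: since $\{J_c\}$ is an exponential family, $J_c$ is stochastically increasing in $c$, and $\phi$ is decreasing; equivalently $\frac{d}{dc}\E[\phi(J_c)]=\frac1c\,\mathrm{Cov}\big(\phi(J_c),J_c\big)\le 0$. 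Combining (i) and (ii), for $1\le c\le 2$ we obtain $\E[\phi(J_c)]\le \E[\phi(J_1)]=1$, hence $M(c)/N(c)=c\,\E[\phi(J_c)]\le c\le 2$, i.e.\ $M(c)\le 2N(c)$, which is the desired bound.

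The only non-routine point is step (ii). Note that $M(c)\le 2N(c)$ is genuinely false for small $c$ (e.g.\ $M/N=k$ at $c=0$), so one cannot plug in a single worst case; the argument must exploit that the ratio $\E[\phi(J_c)]$ degrades monotonically and equals exactly $1$ at $c=1$. Concretely, writing $A(c)=\sum_j a_j\phi(j)c^j$, $B(c)=\sum_j a_j c^j$, a symmetrization shows $A'B-AB'=\tfrac12\sum_{j,l}a_ja_l c^{j+l-1}(j-l)\big(\phi(j)-\phi(l)\big)\le 0$ because $x\mapsto\frac{k-x}{x+1}$ is decreasing, which is exactly the termwise inequality $(j-l)\big(\binom k{j+1}\binom kl-\binom k{l+1}\binom kj\big)\le 0$. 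Everything else — the two binomial identities, the Vandermonde evaluations, and the reindexing — is straightforward bookkeeping.
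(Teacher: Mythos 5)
Your proof is correct, and it takes a genuinely different and substantially cleaner route than the paper's. The paper also begins with a monotonicity-in-$c$ argument (writing the ratio as $\E[1-X/k]$ for an exponentially tilted distribution and showing its derivative is $-\tfrac{1}{ck}\Var X\le 0$), but only to reduce to the single point $c=2$; there it resorts to Stirling-type estimates showing that the terms with $i\ge 0.666\,k$ contribute a negligible fraction of $\sum_i\binom ki^2 2^i$ for $k>10^6$, supplemented by a numerical verification for all $k\le 10^6$. Your argument replaces all of that with exact combinatorics: the split $D=N+M$ via $(1-i/k)\binom ki=\binom{k-1}{i}$ and $\tfrac ik\binom ki=\binom{k-1}{i-1}$, the Vandermonde evaluation giving $\E[\phi(J_1)]=\binom{2k-1}{k-1}/\binom{2k-1}{k}=1$ exactly, and the covariance/symmetrization inequality showing $c\mapsto\E[\phi(J_c)]$ is non-increasing (the same mechanism the paper uses for its own monotonicity step, so this ingredient is certainly within the paper's toolbox). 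This yields $M/N=c\,\E[\phi(J_c)]\le c$, hence the clean bound $N/D\ge 1/(1+c)\ge 1/3$ for $c\le 2$, uniformly in $k$, with no asymptotics and no computer check; the bound $1/(1+c)$ is even sharp (attained at $k=1$). One minor slip in a non-essential aside: your claim that ``$M/N=k$ at $c=0$'' is not right --- since $M(c)=c\sum_j a_j\phi(j)c^j$, one has $M/N\to 0$ as $c\to 0$, and $M\le 2N$ actually fails for \emph{large} $c$ (where $M/N\sim c/k$), not small $c$. What you presumably meant is that $\E[\phi(J_c)]\to\phi(0)=k$ as $c\to0$, so the crude bound $\E[\phi(J_c)]\le\sup\phi=k$ only gives $M/N\le ck$ and is useless; this correctly motivates why the transfer from the exact identity at $c=1$ is needed, and the error does not affect the validity of the proof.
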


\begin{proof}
Let $f(c)$ denote the left-hand side term of the inequality. Introduce
the random variable $X$, which is equal to $i\in\{0,\ldots,k\}$ with
probability $\binom{k}{i}^2 c^i\big/ \sum_{j=0}^k \binom{k}{j}^2 c^j$.
We have 
$f'(c)= \frac1c \E[X(1-X/k)] - \frac1c \E(X)\E(1-X/k) = -
\frac1c \Var X \le 0.$ 
So the function $f$ is decreasing on $[1,2]$,
and, from now on, we consider $c=2$.  Numerator and denominator of the
left-hand side (l.h.s.) differ only by the $1-i/k$ factor.  A lower
bound for the left-hand side can thus be obtained by showing that
the terms for $i$ close to $k$ are not essential to the value of the
denominator.  To prove this, we may use the Stirling formula: for any
$n\ge 1$
  \begin{align} \label{eq:stirl}
  \Big(\frac{n}e\Big)^n \sqrt{2\pi n} < n! < \Big(\frac{n}e\Big)^n \sqrt{2\pi n} e^{1/(12n)}
  \end{align}
Indeed, this inequality implies that for any $k\ge 2$ and $i\in[1,k-1]$
  \begin{align*}
  \Big(\frac{k}i\Big)^i \Big(\frac{k}{k-i}\Big)^{k-i} \frac{\sqrt{k}}{\sqrt{2\pi i(k-i)}} e^{-1/6} < \binom{k}{i} 
    < \Big(\frac{k}i\Big)^i \Big(\frac{k}{k-i}\Big)^{k-i} \frac{\sqrt{k}}{\sqrt{2\pi i(k-i)}} e^{1/12},
  \end{align*}
hence
  \begin{align*}
  \Big(\frac{k}i\Big)^{2i} \Big(\frac{k}{k-i}\Big)^{2(k-i)} \frac{k e^{-1/3}}{{2\pi i(k-i)}}  < \binom{k}{i}^2
    < \Big(\frac{k}i\Big)^{2i} \Big(\frac{k}{k-i}\Big)^{2(k-i)} \frac{k e^{1/6}}{{2\pi i}}  
  \end{align*}
Introduce $\lam=i/k$ and $\chi(\lam)=\frac{2^\lam}{\lam^{2\lam}(1-\lam)^{2(1-\lam)}}$.
We have 
  \begin{align} \label{eq:stirlam}
  [\chi(\lam)]^k \frac{2 e^{-1/3}}{\pi k}  
    < \binom{k}{i}^2 2^i
    < [\chi(\lam)]^k \frac{e^{1/6}}{2\pi \lam}.
  \end{align}
Lemma \ref{lem:tech1} can be numerically verified for $k\le 10^6$.
We now consider $k>10^6$. 
For $\lam\ge 0.666$, since the function $\chi$ can be shown to be decreasing on $[0.666,1]$, the inequality
  $\binom{k}{i}^2 2^i < [\chi(0.666)]^k \frac{e^{1/6}}{2\times 0.666 \times  \pi}$ holds.
We have $\chi(0.657)/\chi(0.666)>1.0002$. Consequently, for 
$k>10^6$, we have $[\chi(0.666)]^k < 0.001 \times [\chi(0.657)]^k/k^2$.
So for $\lam\ge 0.666$ and $k>10^6$, we have
  \begin{align} 
  \binom{k}{i}^2 2^i < 0.001 \times [\chi(0.657)]^k \frac{e^{1/6}}{2\pi\times0.666 \times k^2}
  & <[\chi(0.657)]^k  \frac{2 e^{-1/3}}{1000 \pi k^2}  \notag  \\
  & = \min_{\lam\in[0.656,0.657]}[\chi(\lam)]^k \frac{2 e^{-1/3}}{1000 \pi k^2} \notag \\
  & < \frac1{1000k}\max_{i\in\{1,\dots,k-1\}\cap[0,0.666k)} \binom{k}{i}^2 2^i.  \label{eq:cuta}
  \end{align}
where the last inequality comes from \eqref{eq:stirlam} and the fact that there exists $i\in\{1,\dots,k-1\}$
such that $i/k\in[0.656,0.657]$. Inequality \eqref{eq:cuta} implies that 
for any $i\in\{1,\dots,k\}$, we have
  $$
  \sum_{\frac56 k\le i \le k}\binom{k}{i}^2 2^i 
  < \frac1{1000}\max_{i\in\{1,\dots,k-1\}\cap[0,0.666k)} \binom{k}{i}^2 2^i
  < \frac1{1000}\sum_{0\le i < 0.666 k}\binom{k}{i}^2 2^i .
  $$
To conclude, introducing $A=\sum_{0\le i <0.666 k}\binom{k}{i}^2 2^i$, we have
  \begin{align*}
  \frac{\sum_{i=0}^{k} (1- i/k) \binom{k}{i} \binom{k}{k-i} 2^i}
{\sum_{i=0}^{k} \binom{k}{i} \binom{k}{k-i} 2^i} >
  \frac{(1-0.666) A}{A+0.001A} \ge \frac13.
  \end{align*}
\end{proof}

\begin{lemma} \label{lem:KLbinomials}
Let $\ell$ and $n$ be integers with $\frac12\le \frac{n}2\le \ell\le n$.
Let $p,p',q,p_1,\dots,p_n$ be real numbers in $(0,1)$ with $q\in\{p,p'\}$, $p_1=\cdots=p_\ell=q$ and 
$p_{\ell+1}=\cdots=p_n$.
Let $\cB$ (resp. $\cB'$) be the sum of $n+1$ independent Bernoulli distributions with parameters
$p,p_1,\dots,p_n$ (resp. $p',p_1,\dots,p_n$). We have
$$\KL(\cB, \cB') \le \frac{2(p'-p)^2}{(1-p')(n+2)q}.$$
\end{lemma}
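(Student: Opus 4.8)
The plan is to reduce to the situation where the two compared laws differ in a single Bernoulli summand and one of them is an \emph{exact} binomial, compute the likelihood ratio in closed form, and then use $\KL\le\chi^2$.

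\emph{Reduction.} Write $\cB=A+T$ and $\cB'=A'+T$, where $T$ is the sum of the $n-\ell$ Bernoullis of parameter $p_{\ell+1}$ (so that $T$ is independent of $A$ and of $A'$), $A=\mathrm{Ber}(p)+\mathrm{Bin}(\ell,q)$ and $A'=\mathrm{Ber}(p')+\mathrm{Bin}(\ell,q)$. Convolving both arguments with a fixed distribution cannot increase the Kullback--Leibler divergence: this is the data processing inequality for the map $(a,t)\mapsto a+t$, together with $\KL\big(\mathrm{law}(A)\otimes\mathrm{law}(T),\,\mathrm{law}(A')\otimes\mathrm{law}(T)\big)=\KL(A,A')$. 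Hence it suffices to bound $\KL(A,A')$. Since $q\in\{p,p'\}$, with $m:=\ell+1$ one of $A,A'$ equals $\mathrm{Bin}(m,q)$ and the other equals $\mathrm{Ber}(r)+\mathrm{Bin}(m-1,q)$ with $\{q,r\}=\{p,p'\}$ and $|r-q|=|p-p'|$.

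\emph{Likelihood ratio and the clean case.} Using $\binom{\ell}{k-1}\big/\binom{m}{k}=k/m$ and $\binom{\ell}{k}\big/\binom{m}{k}=(m-k)/m$, the ratio of the mass function of $\mathrm{Ber}(r)+\mathrm{Bin}(m-1,q)$ to that of $\mathrm{Bin}(m,q)$ at the integer $k$ equals $h(k/m)$, where $h(x)=\frac{r}{q}x+\frac{1-r}{1-q}(1-x)$ is affine, $h(q)=1$, and $h'=\frac{r-q}{q(1-q)}$ is constant. If $q=p'$ (so $A'=\mathrm{Bin}(m,q)$ is the reference), then from $\log t\le t-1$, $\KL(A,A')\le\sum_k A(k)\big(h(k/m)-1\big)=h'\big(\E_A[X/m]-q\big)$; since $\E_A[X]=r+(m-1)q$ this equals $h'\cdot\frac{r-q}{m}=\frac{(p-p')^2}{q(1-q)\,m}$, and because $1-q=1-p'$ and $m=\ell+1\ge(n+2)/2$ this already gives $\KL(A,A')\le\frac{2(p'-p)^2}{(1-p')(n+2)\,q}$.

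\emph{The other case and the main obstacle.} When instead $q=p$, the reference law $A'=\mathrm{Ber}(p')+\mathrm{Bin}(m-1,p)$ is not a binomial, and this asymmetry of $\KL$ is the crux. Here one bounds $\KL(A,A')\le\chi^2(A,A')=\E_{\mathrm{Bin}(m,q)}\!\big[1/h(X/m)\big]-1$ and expands $\frac1h=\frac1{1+h'D}=1-h'D+\frac{(h'D)^2}{h}$ with $D=X/m-q$ (so $\E D=0$); bounding $h$ below by $h_{\min}=\min\!\big(\tfrac rq,\tfrac{1-r}{1-q}\big)>0$ gives $\chi^2(A,A')\le\frac{(h')^2}{h_{\min}}\Var(X/m)=\frac{(p-p')^2}{q(1-q)\,m\,h_{\min}}$. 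Using $m\ge(n+2)/2$ and $h_{\min}(1-q)\ge1-p'$ then yields the claim. Checking this last bound on $h_{\min}$ — and, in the borderline subcase, replacing the crude $\chi^2$ step by keeping the quadratic term of $\log h$ and controlling its cubic remainder, which is what is needed for the parameters $p=\tfrac12+\epsilon$, $p'=\tfrac12$ arising in Theorem~\ref{th:lbLinfty} — is the part of the argument that requires care, and it is precisely where the precise constant $\frac{2}{(1-p')(n+2)q}$ and the hypothesis $\ell\ge n/2$ are used.
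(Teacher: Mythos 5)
Your reduction (convolving with $T$ and invoking data processing for the map $(a,t)\mapsto a+t$) and your treatment of the case $q=p'$ are correct and coincide with the paper's own argument: the paper likewise reduces to $\KL(Z+S,Z'+S)$, and when the reference law is the exact binomial it linearizes the logarithm and uses $\E V=p'(\ell+1)+p-p'$ to obtain $\frac{(p-p')^2}{(1-p')p'(\ell+1)}$, which is your computation in different notation.

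The case $q=p$ is where your argument has a genuine gap, and you half-acknowledge it. The inequality $h_{\min}(1-q)\ge 1-p'$ that your $\chi^2$ step needs is, with $h_{\min}=\min\big(\tfrac{p'}{p},\tfrac{1-p'}{1-p}\big)$ and $q=p$, \emph{equivalent} to $p'\ge p$: when $p'<p$ one has $h_{\min}=p'/p$ and $h_{\min}(1-p)=p'(1-p)/p<1-p'$, so the crude $\chi^2$ bound only yields $\frac{(p'-p)^2}{(1-p)(\ell+1)p'}$, which is strictly weaker than the claimed $\frac{(p'-p)^2}{(1-p')(\ell+1)p}$-type bound. This failing configuration is not a removable corner case: it is exactly the one produced in the proof of Theorem \ref{th:lbLinfty}, where $p=\tfrac12+\epsilon$ and $p'=\tfrac12$. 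Your proposed repair (``keep the quadratic term of $\log h$ and control the cubic remainder'') is the right idea but is not carried out, so the lemma is not proved as stated. For comparison, the paper closes this case without passing to $\chi^2$: it writes $\KL(Z+S,Z'+S)\le-\E\log\big(1+\tfrac{(p'-p)(V-\E V)}{(1-p)p(\ell+1)}\big)$ with $V\sim\mathrm{Bin}(\ell+1,p)$, applies the pointwise inequality $-\log x\le-(x-1)+\tfrac{(x-1)^2}{2x_0}$ (its Lemma \ref{lem:log4}) with $x_0=\tfrac{1-p'}{1-p}$, lets the linear term vanish in expectation, and evaluates the quadratic term with $\Var V=(\ell+1)p(1-p)$ to get $\frac{(p'-p)^2}{2(1-p')(\ell+1)p}$; combined with $\ell+1\ge\tfrac{n+2}{2}$ this gives the stated constant. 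That is precisely the ``quadratic term of the log'' computation your sketch defers and would need to supply.
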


\begin{proof}
Let $Z,Z',Z_1,\dots,Z_n$ be independent Bernoulli distributions with parameters $p,p',p_1,\dots,p_n$.
Define $S=\sum_{i=1}^\ell Z_i$, $T=\sum_{i=\ell+1}^n Z_i$ and $V=Z+S$.
By slight abuse of notation, merging in the same notation the distribution and the random variable, we have
  \begin{align*}
  \KL(\cB, \cB') & = \KL\big((Z+S)+T,(Z'+S)+T\big) \\
  & \le \KL\big((Z+S,T),(Z'+S,T)\big) \\
  & = \KL\big(Z+S,Z'+S\big).
  \end{align*}
Let $s_k=\P(S=k)$ for $k=-1,0,\dots,\ell+1$.
Using the equalities 
  \begin{multline*}
  s_k = \binom{\ell}{k} q^k(1-q)^{\ell-k}
  = \frac{q}{1-q} \frac{\ell-k+1}k \binom{\ell}{k-1} q^{k-1}(1-q)^{\ell-k+1}
  = \frac{q}{1-q} \frac{\ell-k+1}k s_{k-1},
  \end{multline*}
which holds for $1 \le k\le \ell+1$, we obtain
  \begin{align}
  \KL(Z+S,Z'+S) & = \sum_{k=0}^{\ell+1} \P(V=k) 
    \log\bigg(\frac{\P(Z+S=k)}{\P(Z'+S=k)}\bigg) \notag\\
  & = \sum_{k=0}^{\ell+1} \P(V=k)
  \log \bigg(\frac{p s_{k-1}+(1-p) s_{k}}{p' s_{k-1}+(1-p') s_{k}}\bigg) \notag\\
  & = \sum_{k=0}^{\ell+1} \P(V=k)
    \log \bigg(\frac{p \frac{1-q}{q} k+(1-p)(\ell-k+1) }
    {p'\frac{1-q}{q} k+(1-p')(\ell-k+1) }\bigg) \notag\\
  & = \E
    \log \bigg(\frac{(p-q)V+(1-p)q(\ell+1) }
    {(p'-q)V+(1-p')q(\ell+1) }\bigg). \label{eq:klgen}
  \end{align}
\noindent {\em First case: $q=p'$.}
\newline
By Jensen's inequality, using that $\E V=p'(\ell+1)+p-p'$ in this case, we then get
  \begin{align*}
  \KL(Z+S,Z'+S) & \le
    \log \bigg(\frac{(p-p')\E( V )+(1-p)p'(\ell+1) }
    {(1-p')p'(\ell+1) }\bigg) \\
  & = \log \bigg(\frac{(p-p')^2+(1-p')p'(\ell+1) }
    {(1-p')p'(\ell+1) }\bigg) \\
  & = \log \bigg(1+\frac{(p-p')^2}
    {(1-p')p'(\ell+1) }\bigg) 
    \le \frac{(p-p')^2}
    {(1-p')p'(\ell+1) }.
  \end{align*}
\noindent {\em Second case: $q=p$.}
\newline
In this case, $V$ is a binomial distribution with parameters $\ell+1$ and $p$. 
%We have
%$\E[(V-\E V)^2] = (\ell+1)p(1-p)$,
%$\E[(V-\E V)^3] = (\ell+1)p(1-p)(1-2p)$,
%and
%$\E[(V-\E V)^4] = 3(\ell+1)^2p^2(1-p)^2+(\ell+1)p(1-p)(1-2p)$.
From \eqref{eq:klgen}, we~have
  \begin{align}
  \KL(Z+S,Z'+S) & \le - \E
    \log \bigg(\frac{(p'-p)V+(1-p')p(\ell+1) }{(1-p)p(\ell+1) }
    \bigg)  \notag \\
   & \le - \E \log \bigg(1+\frac{(p'-p)(V-\E V)}{(1-p)p(\ell+1) }
    \bigg) . \label{eq:qp}
  \end{align}  
To conclude, we will use the following lemma.
\begin{lemma} \label{lem:log4}
The following inequality holds for any $x\ge x_0$ with $x_0\in(0,1)$:
  $$
  -\log(x) \le -(x-1)+\frac{(x-1)^2}{2x_0}.
  $$
\end{lemma}
\begin{proof}
Introduce $f(x)=-(x-1)+\frac{(x-1)^2}{2x_0}+\log(x)$.
We have $f'(x)=-1 + \frac{x-1}{x_0} +\frac1x$,
and $f''(x)=\frac1{x_0} -\frac1{x^2}$.
From $f'(x_0)=0$, we get that $f'$ is negative on $(x_0,1)$
and positive on $(1,+\infty)$. This leads to
$f$ nonnegative on $[x_0,+\infty)$.
\end{proof}
  
Finally, from Lemma \ref{lem:log4} and \eqref{eq:qp}, using
%$\alpha=\frac{p'-p}{(1-p)p(\ell+1)}$ and 
$x_0=\frac{1-p'}{1-p}$, we obtain
  \begin{align*}
  \KL(Z+S,Z'+S) & \le \bigg(\frac{p'-p}{(1-p)p(\ell+1)}\bigg)^2 
    \frac{\E[(V-\E V)^2]}{2x_0}\\
   & = \bigg(\frac{p'-p}{(1-p)p(\ell+1)}\bigg)^2 
    \frac{(\ell+1)p(1-p)^2}{2(1-p')}\\
   & =  \frac{(p'-p)^2}{2(1-p')(\ell+1)p}.
  \end{align*}  
\end{proof}

\end{document}